\pdfoutput=1
\documentclass{article}
\usepackage{iclr2027_conference,times}
\iclrfinalcopy

\usepackage[utf8]{inputenc}
\usepackage[T1]{fontenc}
\usepackage{microtype}
\usepackage{amsmath,amssymb,amsthm}
\usepackage{mathtools}
\usepackage{wrapfig}
\usepackage{booktabs}
\usepackage{siunitx}
\usepackage{enumitem}
\usepackage{xcolor}
\usepackage{algorithm}
\usepackage{algpseudocode}
\usepackage{tikz}
\usetikzlibrary{arrows.meta, positioning, fit, calc}
\usepackage{xurl}
\usepackage{hyperref}
\usepackage{titletoc}

\usepackage{amsmath,amsfonts,bm}

\def\eqref#1{equation~\ref{#1}}

\def\1{\bm{1}}

\DeclareMathAlphabet{\mathsfit}{\encodingdefault}{\sfdefault}{m}{sl}
\SetMathAlphabet{\mathsfit}{bold}{\encodingdefault}{\sfdefault}{bx}{n}

\newcommand{\Gr}{G}                                  
\newcommand{\Vs}{\mathbf{V}}                         
\newcommand{\Xs}{\mathbf{X}}                         
\newcommand{\Ys}{\mathbf{Y}}                         
\newcommand{\Rs}{\mathbf{R}}                         
\newcommand{\Mods}[1]{\mathcal{M}(#1)}               
\newcommand{\Scm}{\mathcal{M}}                       
\newcommand{\An}{\mathrm{An}}                        

\newcommand{\Do}[1]{\mathrm{do}(#1)}
\newcommand{\Px}[2]{P_{#1}(#2)}                      
\newcommand{\Qy}{Q}                                  
\newcommand{\GX}{\Gr_{\overline{\Xs}}}               
\newcommand{\Gsub}[1]{\Gr_{\overline{#1}}}

\newcommand{\ID}{\textsc{id}}
\newcommand{\FAIL}{\textsc{fail}}
\newcommand{\est}{\phi}                              

\newcommand{\Bi}[1]{B(#1)}                           
\newcommand{\Aug}{A}                                 
\newcommand{\gminus}{\ominus}                        
\newcommand{\Adm}{\mathcal{A}}                       
\newcommand{\Frontier}{\mathcal{F}}                  

\providecommand{\benchname}{\textsc{CertID}}
\newcommand{\nInstances}{1{,}200}          
\newcommand{\nVertexRange}{10 to 50}

\newcommand{\accFrontier}{97--100\%}

\newcommand{\accDecoupleAcc}{90.9\%}       
\newcommand{\accDecoupleFC}{12.2\%}        
\newcommand{\accSpread}{ten points}
\newcommand{\fcSpread}{seventeen-fold}
\newcommand{\gptScaleAcc}{99.5\%}
\newcommand{\gptScaleFC}{1.0\%}

\newcommand{\decoupleAccSmall}{96.0\%}       

\newcommand{\flipFlash}{4.5\%}
\newcommand{\flipPro}{2.5\%}
\newcommand{\flipFlashCI}{$[2.0, 7.5]$}
\newcommand{\flipProCI}{$[0.5, 5.0]$}
\newcommand{\flipPairFlash}{3.0\%}
\newcommand{\flipPairPro}{1.7\%}
\newcommand{\repeatFlipRate}{2.5--4.5\%}
\newcommand{\nRepeatInst}{200}

\newcommand{\smallModelRange}{0.5B--3B}
\newcommand{\smallWorstAUC}{0.359}
\newcommand{\smallScrambledAUC}{0.493}

\newcommand{\repairValid}{92.6--97.0\%}
\newcommand{\repairMin}{96.8--99.3\%}
\newcommand{\trivialMin}{8.3\%}

\newcommand{\justifyBare}{0\%}
\newcommand{\justifyCite}{100\%}
\newcommand{\flashRefusalBase}{97.0\%}
\newcommand{\flashRefusalAsked}{95.3\%}
\newcommand{\decoupleFCSmall}{2.0\%}

\newcommand{\estimandAcc}{98.8--98.9\%}    
\newcommand{\estimandSkip}{41--42\%}       

\newcommand{\leakAUC}{0.825}          
\newcommand{\shippedAUC}{0.652}

\newcommand{\acceptRate}{14.1\%}           

\newcommand{\nCanonical}{24}               
\newcommand{\naiveGap}{$3.8\times10^{-2}$}
\newcommand{\oracleCallTotal}{14.9M}         
\newcommand{\oracleDisagree}{0}

\newcommand{\modelFlash}{\textsc{flash}}
\newcommand{\modelPro}{\textsc{pro}}
\newcommand{\modelGPT}{\textsc{gpt-5.5}}

\newcommand{\nRefFlash}{291}
\newcommand{\nRefPro}{283}
\newcommand{\nRefGPT}{300}
\newcommand{\refMeanPro}{472}
\newcommand{\refBarePro}{74.2\%}
\newcommand{\refPninetyPro}{1{,}933}

\newcommand{\justifyLenFlash}{450.1}
\newcommand{\justifyLenPro}{501.9}
\newcommand{\justifyLenGPT}{437.3}
\newcommand{\proMedianAsked}{502}
\newcommand{\proPninetyAsked}{603}

\newcommand{\csGfromulaErr}{$2.2\times10^{-16}$}
\newcommand{\csNaiveErr}{$7.0\times10^{-3}$}
\newcommand{\csAuthorDeletes}{seven}
\newcommand{\csFrontierJoint}{five}
\newcommand{\csFrontierSingle}{eight}
\newcommand{\csLatentShare}{37\%}

\newcommand{\nSCMDraws}{2}
\newcommand{\tolValue}{10^{-9}}
\newcommand{\tikkaDev}{$8.6\times10^{-3}$}
\newcommand{\tikkaDevRange}{$4.1\times10^{-3}$ to $1.65\times10^{-2}$}

\theoremstyle{definition}
\newtheorem{definition}{Definition}
\theoremstyle{plain}
\newtheorem{theorem}{Theorem}
\newtheorem{proposition}{Proposition}
\newtheorem{lemma}{Lemma}
\newtheorem{corollary}{Corollary}
\newtheorem*{priorresult}{Hedge criterion}
\theoremstyle{remark}
\newtheorem{remark}{Remark}

\newif\ifshowchanges
\showchangesfalse

\definecolor{linkblue}{HTML}{2C5F8A}
\hypersetup{colorlinks=true, linkcolor=linkblue, citecolor=linkblue, urlcolor=linkblue,
  pdftitle={Reasoning Models Are Accurate but Unsound on Identification},
  pdfauthor={Arman Behnam, Binghui Wang}}

\title{\centering Reasoning Models Are Accurate but Unsound on Identification}

\author{
  Arman Behnam \\
  Department of Computer Science \\
  Illinois Institute of Technology \\
  Chicago, IL, USA \\
  \texttt{abehnam@hawk.illinoistech.edu} \\
  \And
  Binghui Wang \\
  Department of Computer Science \\
  Illinois Institute of Technology \\
  Chicago, IL, USA \\
  \texttt{bwang70@illinoistech.edu} \\
}

\begin{document}
\maketitle
\lhead{Preprint}

\begin{abstract}
A reasoning model asked whether a causal effect is recoverable from observational data can fail in two ways: it refuses an identifiable query or answers a non-identifiable one. The latter is more consequential, as no observational data can validate the claimed formula. Measuring this failure requires queries that are provably non-identifiable, which prior evaluations lack, and grading that accepts correct formulas in any equivalent form, which string matching cannot provide. We build \benchname{}, a formal identification pipeline that addresses both limitations. \benchname{} uses the sound and complete causal identification algorithm \textsc{id} to certify whether an effect is identifiable from a given graph and query, and verifies returned formulas against structural causal models whose interventional distributions are known exactly. \benchname{} further develops theoretical results to mitigate structural leakage, repair non-identifiable queries, and establish grading guarantees.

We evaluate three frontier reasoning models (Gemini Flash, Gemini Pro, and GPT-5.5) on \nInstances{} certified instances spanning 4 to 50 vertices. Accuracy proves a poor proxy for soundness: on identical instances, the false-claim rate on non-identifiable queries varies by \fcSpread{} across models. We also find that models decide identifiability with \accFrontier{} accuracy on graphs generated after the strongest model's training snapshot. Instances, the certification procedure, the verifier, and per-instance records are available at \url{https://anonymous.4open.science/r/certid-D718}.
\end{abstract}
\section{Introduction}
\label{sec:intro}

Evaluating causal reasoning requires fixing what counts as a correct and sound answer, and current benchmarks do so in one of two ways. \textbf{Value-computing} benchmarks~\citep{jin2023cladder, xu2026noisycausal, liu2024llms} generate questions from a fully specified structural causal model (SCM) and take the computed quantity as ground truth. Every question such a benchmark poses is answerable by construction. \textbf{Annotation-based} benchmarks~\citep{wang2024causalbench, kirichenko2026abstentionbench, lee2026econcausal} label correctness, including whether a question can be answered at all, by human judgment. Both families then score a response by matching it against a reference string. Both approaches are defensible, yet their notions of ground truth leave a specific failure mode unmeasured.

Consider a practitioner asking whether a treatment affects an outcome when the two share an unmeasured common cause (Figure~\ref{fig:overview}(a)). In this graph, the interventional quantity $P(y \mid \Do{x})$ is not identifiable: no functional of the observational distribution equals it, regardless of sample size or estimator~\citep{shpitser2006identification}. Bounds may remain recoverable, but the quantity itself cannot be recovered from observational data alone. A model that answers anyway may return an incorrect formula whose error no amount of observational data can expose. This is precisely the failure that matters in deployment, yet existing benchmark families do not measure it. Value-computing benchmarks contain only identifiable queries, while annotation-based benchmarks rely on human judgments of identifiability, on which annotators may disagree.

\begin{table}[!t]
\centering\footnotesize
\addtolength{\tabcolsep}{-2pt}
\caption{How ground truth and grading are established in causal reasoning benchmarks. In prior methods, every query is answerable when groundtruth is provided or unanswerable determined by human judgment. \benchname{} is the only benchmark without ground truth or human judgment.}

\label{tab:related}
\begin{tabular}{llll}
\toprule
Benchmark & Ground truth & Unanswerable instances & Grading \\
\midrule
\textsc{cladder}~\citep{jin2023cladder}            & oracle-computed & ---            & string \\
\textsc{noisycausal}~\citep{xu2026noisycausal}     & SCM-computed    & ---            & string \\
\textsc{qrdata}~\citep{liu2024llms}                & data-computed   & ---            & string \\
\textsc{causalbench}~\citep{wang2024causalbench}   & annotated       & ---            & string \\
\textsc{econcausal}~\citep{lee2026econcausal}      & curated         & underspecified & string \\
\textsc{abstentionbench}~\citep{kirichenko2026abstentionbench} & annotated & annotated & string \\
\textsc{crb}~\citep{sawarni2026causalreasoning}    & curated         & ---            & specification \\
\midrule
\textbf{\benchname{} (ours)} & \textbf{decided by \ID{}} & \textbf{decided by \ID{}} & \textbf{numerical} \\
\bottomrule
\end{tabular}
\end{table}

\begin{figure}[!th]
\centering
\scalebox{0.9}{
\begin{tikzpicture}[
    every node/.style={font=\small},
    var/.style={circle, draw, thick, minimum size=0.55cm, inner sep=0pt,
                font=\small},
    box/.style={rectangle, draw, rounded corners, align=center,
                minimum height=0.80cm, font=\scriptsize},
    inbox/.style={box, fill=blue!10,   minimum width=1.8cm},
    algbox/.style={box, fill=orange!14, minimum width=1.8cm},
    outbox/.style={box, fill=green!14,  minimum width=1.8cm},
    badbox/.style={box, fill=red!12, draw=red!60, minimum width=2.4cm},
    arrow/.style={-{Stealth}, thick},
    thin arrow/.style={-{Stealth}, semithick}
]
\node[font=\bfseries\small] at (1.40,3.50) {(a) The undetectable error};
\node[var] (X) at (0.70,2.40) {$X$};
\node[var] (Y) at (2.10,2.40) {$Y$};
\draw[arrow] (X) -- (Y);
\draw[{Stealth}-{Stealth}, dashed, thick, red!70] (X) to[bend left=55] (Y);
\node[font=\scriptsize, red!70] at (1.40,3.12) {unmeasured confounding};
\node[font=\scriptsize] at (1.40,1.75) {$\Qy = P(y \mid \Do{x})$ \textbf{not identifiable}};
\node[badbox] at (1.40,0.85) {model returns\\a confident estimand};
\node[font=\scriptsize, align=center, red!70] at (1.40,-0.30)
     {no observational check\\can expose the error};
\draw[arrow, red!60] (1.40,1.50) -- (1.40,1.22);

\node[font=\bfseries\small] at (7.00,3.50) {(b) Both halves computed};
\node[inbox]  (gq)    at (4.55,2.55) {graph $\Gr$, query $\Qy$};
\node[algbox] (id)    at (7.00,2.55) {\textsc{id} algorithm};
\node[outbox] (label) at (9.45,2.55) {label\\{\scriptsize is a theorem}};
\node[inbox]  (resp)  at (4.55,0.75) {model response\\{\scriptsize verdict $+$ estimand}};
\node[algbox] (scm)   at (7.00,0.75) {exact SCM\\evaluation};
\node[outbox] (grade) at (9.45,0.75) {grade\\{\scriptsize is verified}};
\draw[arrow] (gq) -- (id);
\draw[arrow] (id) -- (label);
\draw[arrow] (resp) -- (scm);
\draw[arrow] (scm) -- (grade);
\draw[thin arrow, gray] (gq) -- (resp);
\draw[thin arrow, gray, dashed] (label) -- (grade);
\node[font=\scriptsize, gray] at (7.00,3.12) {no annotator};
\node[font=\scriptsize, gray] at (7.00,0.02) {no string match};

\node[font=\bfseries\small] at (13.30,3.50) {(c) What it separates};
\node[font=\scriptsize, blue!65] at (13.30,3.12) {accuracy: spans \textbf{10 pts}};
\draw[semithick] (11.95,1.95) -- (11.95,2.85);
\draw[semithick] (11.95,1.95) -- (14.95,1.95);
\node[font=\scriptsize, left] at (11.90,2.85) {1.0};
\node[font=\scriptsize, left] at (11.90,1.95) {0};
\fill[blue!45] (12.35,1.95) rectangle (12.80,2.758);
\fill[blue!45] (13.15,1.95) rectangle (13.60,2.797);
\fill[blue!45] (13.95,1.95) rectangle (14.40,2.846);
\node[font=\scriptsize, red!70] at (13.30,1.62) {false-claim rate: spans $\mathbf{17\times}$};
\draw[semithick] (11.95,0.45) -- (11.95,1.35);
\draw[semithick] (11.95,0.45) -- (14.95,0.45);
\node[font=\scriptsize, left] at (11.90,1.35) {0.2};
\node[font=\scriptsize, left] at (11.90,0.45) {0};
\fill[red!65] (12.35,0.45) rectangle (12.80,1.215);
\fill[red!65] (13.15,0.45) rectangle (13.60,0.782);
\fill[red!65] (13.95,0.45) rectangle (14.40,0.495);
\node[font=\scriptsize] at (12.58,0.18) {A};
\node[font=\scriptsize] at (13.38,0.18) {B};
\node[font=\scriptsize] at (14.18,0.18) {C};
\node[font=\scriptsize, gray] at (13.30,-0.15) {same instances, same prompt};
\end{tikzpicture}
}
\caption{\textbf{(a)} an error current evaluations cannot catch,
\textbf{(b)} the two computations that let us catch it, and
\textbf{(c)} what the resulting measurement separates.}
\label{fig:overview}
\end{figure}

The central question we address is \textbf{whether a causal reasoning benchmark can compute its own ground truth, so that neither the label nor the grade depends on human judgment} (see Table~\ref{tab:related} for a comparison with prior work). Formal identification answers this question (Figure~\ref{fig:overview}(b)). In particular, the \ID{} identification algorithm~\citep{shpitser2006identification} can answer causal queries; we use it instead to \emph{label} them, providing formal ground truth for evaluating systems that answer such queries. This removes human judgment from ground-truth construction and enables instances to be generated at arbitrary scale after a model has been trained.

{Given an acyclic directed mixed graph $\Gr$ over observed variables $\Vs$ and a query $\Qy = P(\mathbf{y} \mid \Do{\mathbf{x}})$, \ID{} decides in time polynomial in $|\Vs|$ whether $\Qy$ is identifiable from the observational distribution. The algorithm is sound and complete~\citep{shpitser2006identification}, \emph{so its verdict is a theorem about the query, not an opinion about it}. Grading an identified estimand can likewise be reduced to computation. An estimand is a functional of the observational distribution, so we can instantiate a model consistent with the graph, evaluate the returned estimand, and compare it with the model's true interventional effect. \emph{Agreement is a calculation, not a judgment.}

Having this decision procedure, however, is not enough; two difficulties remain. First, a certified label does not ensure that a task tests what it was built to test. One of our tasks asks which confounding in a graph is real, and the variable names should carry that answer. A natural construction instead leaves structural signatures: a classifier that reads graph structure and ignores the names separates real from spurious confounding at $\mathrm{AUC} = \leakAUC{}$. We diagnose this structural leakage and remove most of it in Section~\ref{sec:leakage}. Second, exact grading requires enumerating interventional distributions with latent variables materialized, which is costly and unforgiving. Applying our verifier to a widely used causal inference library revealed two graphs for which the library returns numerically incorrect estimands even though its identifiability verdicts remain sound (Section~\ref{sec:verifier}).}

We introduce \benchname{}, a novel framework for formal causal identification. Every label follows from a complete decision procedure, and every response is graded by numerical evaluation. We further develop a leakage diagnosis and remedy, so that tasks requiring domain knowledge cannot be solved from graph structure alone. The task is to determine, given a graph and a causal query, whether the query is identifiable from observational data and, if so, to return an estimand. The input is an acyclic directed mixed graph, represented by a vertex list with directed and bi-directed edges, where each bi-directed edge represents unmeasured confounding, together with a target $P(\mathbf{y} \mid \Do{\mathbf{x}})$. The output is an identifiability verdict and, when affirmative, an estimand. Both are scored against formally certified ground truth. The same machinery applies unchanged to published causal models, e.g., on a causal diagram from \citet{robins2000marginal} (Figure~\ref{fig:robins} in Section~\ref{app:ex-published}), \benchname{} numerically confirms the authors' estimand and shows that their identifying assumption is sufficient but not minimal.

We evaluate three frontier reasoning models (Gemini Flash, Gemini Pro, and GPT-5.5) from two vendors on \nInstances{} instances. Three key findings emerge. First, identifiability is decided at \accFrontier{} on graphs that postdate the training snapshot of the strongest model tested, ruling out contamination from those graphs. Second, correct verdicts come with no justification by default, yet every model provides one when prompted, showing that observed inspectability depends on how the model is queried. Third, accuracy is a poor proxy for soundness. It has direct consequences for deployment, as Figure~\ref{fig:overview}(c) illustrates. On identical instances, the rate at which models assert answers to provably unanswerable queries varies \fcSpread{} across the three frontier models, while their accuracy varies by \accSpread{}. Accuracy alone would therefore make the three models appear similarly capable while concealing a substantial difference in how often they answer when no answer is identifiable.
\section{Background and Problem Statement}
\label{sec:background}
Random variables are uppercase ($X, Y, V$), realizations lowercase $(x,y,v)$, and sets boldface ($\Xs, \Ys, \Vs$). We write $\Gr$ for an acyclic directed mixed graph (ADMG) over observed variables $\Vs$, in which a directed edge $V_i \to V_j$ encodes direct causation, and a bi-directed edge $V_i \leftrightarrow V_j$ encodes an unobserved common cause. We write $\Bi{\Gr}$ for the bi-directed edges of $\Gr$ and $\An(\cdot)$ for ancestry. An ADMG can be obtained from a DAG with latent variables by \emph{latent projection}~\citep{verma2022equivalence}, which removes the latent variables and encodes their effects on the observed ones through directed and bi-directed edges. 

$\Mods{\Gr}$ is the set of structural causal models (SCMs) whose structural dependencies are contained in $\Gr$~\citep{pearl2009causality}, and $P(\Vs)$ is the observational distribution. Causal models can also be defined measure-theoretically, beyond discrete variables~\citep{behnam2026measure}; our verifier uses discrete SCMs. A \emph{query} is an interventional distribution $\Qy = \Px{\mathbf{x}}{\mathbf{y}} = P(\mathbf{y} \mid \Do{\mathbf{x}})$ for disjoint $\Xs, \Ys \subseteq \Vs$: the distribution of $\Ys$ when $\Xs$ is set by intervention. Complete notation appears in Appendix~\ref{app:notation}.

\begin{definition}[Identifiability]
\label{def:identifiability}
$\Qy$ is \emph{identifiable} in $\Gr$ if every pair
$\Scm_1, \Scm_2 \in \Mods{\Gr}$ inducing the same positive $P(\Vs)$ agrees on
$\Qy$; equivalently, if $\Qy$ is a functional of $P(\Vs)$ invariant across
$\Mods{\Gr}$.
\end{definition}

Identifiability is a property of $(\Gr,\Qy)$ alone, determined before any data are observed. When $\Qy$ is not identifiable, no estimator can recover it from observational data alone, regardless of sample size. This holds for learned estimators too, such as the neural causal models used to explain graph neural networks~\citep{behnam2024graph}. The same limit applies to the utility of a memory in a language model: it is identifiable only if the retriever can surface that memory~\citep{behnam2026reasoning}.

The three rules of do-calculus, i.e., inserting/deleting observations, exchanging actions with observations, and inserting/deleting actions, 
are sound~\citep{pearl1995causal} and complete~\citep{huang2006pearl} for interventional identification, but do not themselves provide a decision procedure for whether a reducing sequence exists.
The \ID{} algorithm provides such a procedure~\citep{shpitser2006identification}.

\begin{definition}[ID algorithm \citep{shpitser2006identification}]
Given $(\Gr,\Qy)$, \ID{} runs in time polynomial in $|\Vs|$ and returns either a closed-form estimand $\est$ in terms of $P(\Vs)$ or $\FAIL$ with a graphical witness of non-identifiability.
This witness is a \emph{hedge}, whose presence exactly characterizes non-identifiability.
\end{definition}
We use hedges only through \ID{}; Appendix~\ref{app:hedges} provides the definitions and criterion. Evaluating causal identification requires a correct label for each instance, a correct grade for each returned estimand, and instances that test the intended reasoning. Definitions~\ref{def:certified} and~\ref{def:verified} provide the first two, while Definition~\ref{def:evalproblem} states the evaluation problem. Sec.~\ref{sec:construction} constructs and validates the instances.

\begin{definition}[Instance and certified label]
\label{def:certified}
{An \emph{instance} is a pair $(\Gr,\Qy)$. Its \emph{certified label} is $\ell = \ID(\Gr,\Qy) \in \{\est, \FAIL\}$. Because \ID{} is sound and complete, $\ell$ is determined by $(\Gr,\Qy)$ alone and requires no human judgment. We call the triple $(\Gr,\Qy,\ell)$ a \emph{certified instance}.}
\end{definition}
String matching rejects correct estimands written in a different form and accepts wrong ones that look similar, so we grade estimands numerically.

\begin{definition}[Estimand grade]
\label{def:verified}
Let $\hat\est$ be an estimand returned by a model for an identifiable instance, and $\tau$ be a numerical tolerance. Draw $\Scm_1,\ldots,\Scm_k$ from $\Mods{\Gr}$ with all latent variables materialized, so that both $P_{\Scm_j}(\Vs)$ and $\Qy_{\Scm_j}$ can be computed exactly. The estimand is \emph{refuted} if $|\hat\est(P_{\Scm_j}(\Vs))-\Qy_{\Scm_j}|>\tau$ for some $j$, and \emph{verified over $k$ draws} otherwise.
\end{definition}

\vspace{+1mm}
\begin{definition}[Certified identification problem]
\label{def:evalproblem}
A system under evaluation maps a serialization of $(\Gr,\Qy)$ to a response
$r=(v,\hat\est)$, where $v\in\{\textsc{id},\textsc{nid},\bot\}$ is a verdict and $\hat\est$ is present only when $v=\textsc{id}$. Scoring compares the verdict $v$ with the certified label $\ell$ from Definition~\ref{def:certified}. If $\ell=\FAIL$, the response is a \emph{correct refusal} when $v=\textsc{nid}$ and a \emph{false claim} when $v=\textsc{id}$. If $\ell=\est$, the response is a \emph{correct claim} when $v=\textsc{id}$, with $\hat\est$ graded according to Definition~\ref{def:verified}, and a \emph{miss} when $v=\textsc{nid}$. Responses with $v=\bot$, for which no verdict can be recovered, are \emph{unscored}
and reported separately.
\end{definition}

The four cases---correct claim, correct refusal, miss, and false claim---are not equally costly. A miss withholds an identifiable answer that the user can obtain, e.g., by running \ID{}. A false claim supplies an answer that does not exist, and no observational data can expose it. Grading is also asymmetric: only a correct claim carries an estimand to verify, while a correct refusal is complete as stated.
\section{\benchname{}: Theoretical Foundation}
\label{sec:construction}
{Section~\ref{sec:background} defined the problem, its certified labels and estimand grade. This section forms the theoretical foundation. 
Section~\ref{sec:families} constructs instances that test the intended reasoning. Section~\ref{sec:leakage} checks that the construction does not reveal the answer through graph structure alone, the first difficulty named in Section~\ref{sec:intro}. Section~\ref{sec:repair-labels} adds the repair task on non-identifiable instances. Section~\ref{sec:grading-guarantee} proves what the estimand grade guarantees, which addresses the second difficulty, exact grading.}

\subsection{Instance Families}
\label{sec:families}
We use two graph families. In the random family, variable names are meaningless symbols, so the system must reason from graph structure alone. In the published family, graphs and variable names come from published causal diagrams, allowing the system to also draw on domain knowledge.

\begin{definition}[Random family]
\label{def:random-family}
Sample a topological order on $\Vs$, add directed edges independently with probability $p_{\mathrm{dir}}$ and bi-directed edges with probability $p_{\mathrm{bi}}$, and draw $\Xs, \Ys$ with $\Xs \subseteq \An(\Ys)$. Variable names are neutral symbols.
\end{definition}

\begin{definition}[Published family]
\label{def:semi-family}
Take a published DAG $D$ with interpretable variable names and select a latent set $L$ whose parents, if any, also belong to $L$.
Project $D$ onto $\Vs=V(D)\setminus L$, adding a bi-directed edge between each pair of observed variables that share a parent in $L$.
Because $L$ is root-closed, the projection introduces only bi-directed edges and no additional directed structure.
\end{definition}
The published family supports a second task. Alongside the real bi-directed edges induced by projection, we add spurious edges not implied by the published diagram, yielding \emph{semi-synthetic} instances. The \emph{edge-classification} task asks the system to distinguish real from spurious bi-directed edges, using variable names as the intended evidence. Appendix~\ref{app:proto-small} evaluates small open-weights language models on this task.

Identifiable and non-identifiable instances are balanced by construction, and the natural rate before balancing is reported separately, so class balance is not mistaken for a population property. Because \ID{} is polynomial, instances are generated at any size and density with no annotator in the loop.

{{\bf Oracle agreement.} Every certified label (Definition~\ref{def:certified}) is computed by two independently developed implementations of \ID{}. Across ${\sim}\oracleCallTotal{}$ backend calls, the two produced \oracleDisagree{} disagreements. Both also reproduce the published verdicts on a hand-encoded set of \nCanonical{} canonical instances (Appendix~\ref{app:verification}).}

\subsection{Structural Leakage}
\label{sec:leakage}
Edge classification should require the variable names. If graph structure alone reveals which edges are real, a system can answer correctly without domain knowledge. We hence measure such leakage.

\begin{definition}[Structural leakage]
\label{def:leakage}
The \emph{leakage} of a construction is the AUC of a classifier that distinguishes real from spurious bi-directed edges using graph structure alone. Leakage near $0.5$ indicates chance-level separability.
\end{definition}

Leakage is high when real and spurious edges are generated differently. By Definition~\ref{def:semi-family}, a latent induces a bi-directed edge between every pair of its observed children, so real edges form cliques. We therefore generate spurious edges as cliques of invented latents, structurally matched to real ones. The remaining leakage is the baseline a system must exceed to demonstrate use of variable names (Section~\ref{sec:instances}; Appendix~\ref{app:leakage}). More generally, generating the two edge types differently risks.

\subsection{The Repair Task}
\label{sec:repair-labels}
Each bi-directed edge encodes an unobserved common cause (Section~\ref{sec:background}). Assuming some causes are absent corresponds to deleting their edges, which may make a non-identifiable query identifiable. The \emph{repair task} asks for a minimal set of such deletions; every answer could be checked.

\begin{definition}[Augmentation and restoring set]
\label{def:restoring}
{An \emph{augmentation} is a set $\Aug \subseteq \Bi{\Gr}$ of bi-directed edges, and $\Gr \gminus \Aug$ is $\Gr$ with those edges deleted, asserting that the corresponding pairs share no unobserved common cause. Define
$
\Adm(\Gr,\Qy)=\{\Aug:\Qy\text{ is identifiable in }\Gr\gminus\Aug\}, \Frontier(\Gr,\Qy)=\min_{\subseteq}\Adm(\Gr,\Qy),
$
where $\Frontier(\Gr,\Qy)$ contains the inclusion-minimal \emph{restoring sets}.}
\end{definition}

\begin{proposition}[Monotonicity]
\label{prop:mono}
Deleting any set of edges preserves identifiability, so $\Adm(\Gr,\Qy)$ is upward closed; and $\Bi{\Gr} \in \Adm(\Gr,\Qy)$, since $\Gr \gminus \Bi{\Gr}$ is Markovian. Hence $\Frontier(\Gr,\Qy)$ is non-empty and characterizes $\Adm(\Gr,\Qy)$.
\end{proposition}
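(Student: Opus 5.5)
The plan is to prove the three claims in order: upward closure, membership of $\Bi{\Gr}$, and the frontier consequence. The key fact is that deleting bi-directed edges only adds restrictions, so every model of the smaller graph is already a model of the larger one; identifiability, being a universally quantified agreement condition over $\Mods{\cdot}$ (Definition~\ref{def:identifiability}), is then inherited by the smaller graph.

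\textbf{Upward closure.} Fix $\Aug\subseteq\Aug'\subseteq\Bi{\Gr}$ and write $\Gr_1=\Gr\gminus\Aug$ and $\Gr_2=\Gr\gminus\Aug'$. Then $\Gr_2$ is a subgraph of $\Gr_1$ on the same vertices, with the same directed edges and fewer bi-directed edges. An SCM whose structural dependencies are contained in $\Gr_2$ also has them contained in $\Gr_1$: missing bi-directed edges only force certain exogenous terms to be independent, and nothing requires a permitted dependence to be present. Hence $\Mods{\Gr_2}\subseteq\Mods{\Gr_1}$. Now suppose $\Qy$ is identifiable in $\Gr_1$, and take $\Scm_1,\Scm_2\in\Mods{\Gr_2}$ inducing the same positive $P(\Vs)$. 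Both models lie in $\Mods{\Gr_1}$, so they agree on $\Qy$, and $\Qy$ is identifiable in $\Gr_2$. Thus $\Aug\in\Adm$ implies $\Aug'\in\Adm$. The same argument covers deleting directed edges, which gives the general wording ``any set of edges''. The step needing most care is the inclusion $\Mods{\Gr_2}\subseteq\Mods{\Gr_1}$. I would state the model class with latents materialized: each bi-directed edge corresponds to a shared exogenous variable, and in a sub-model that variable is allowed to have no effect. I would also note that positivity of $P(\Vs)$ is a condition on the distribution alone, so it is unaffected by the choice of graph.

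\textbf{Membership of $\Bi{\Gr}$.} The graph $\Gr\gminus\Bi{\Gr}$ has no bi-directed edges, so it is a DAG with independent errors, that is, Markovian. For positive $P(\Vs)$, the truncated factorization $\Px{\mathbf{x}}{\mathbf{y}}=\sum_{\Vs\setminus(\Xs\cup\Ys)}\prod_{V\notin\Xs}P(v\mid \mathrm{pa}(v))|_{\Xs=\mathbf{x}}$ expresses $\Qy$ as a functional of $P(\Vs)$, well defined because the conditionals exist under positivity. Equivalently, \ID{} never returns \FAIL{} in this case, because a hedge requires bi-directed edges. So $\Bi{\Gr}\in\Adm(\Gr,\Qy)$.

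\textbf{Frontier.} $\Adm$ is a non-empty family of subsets of the finite set $\Bi{\Gr}$. Any $\Aug\in\Adm$ therefore contains an inclusion-minimal member of $\Adm$, found by repeatedly removing an edge while staying in $\Adm$, which terminates by finiteness. Hence $\Frontier(\Gr,\Qy)\neq\emptyset$. Combined with upward closure, this gives $\Adm=\{\Aug\subseteq\Bi{\Gr}:\exists F\in\Frontier,\ F\subseteq\Aug\}$, so $\Frontier$ characterizes $\Adm$.
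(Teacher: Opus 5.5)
Your proposal is correct and follows essentially the same route as the paper. The paper also uses the inclusion $\Mods{\Gr\gminus\Aug'}\subseteq\Mods{\Gr\gminus\Aug}$ for upward closure, truncated factorization for the Markovian graph $\Gr\gminus\Bi{\Gr}$, and finiteness plus upward closure to make $\Adm$ the up-closure of $\Frontier$. The only difference is cosmetic: you argue through the pairwise-agreement clause of Definition~\ref{def:identifiability}, whereas the paper carries a single identifying functional $f$ over to the smaller model class.
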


\begin{lemma}[Greedy minimization]
\label{lem:shrink}
Let $\Aug \in \Adm(\Gr,\Qy)$. Deleting one element at a time, whenever the result remains in $\Adm$, yields an element of $\Frontier(\Gr,\Qy)$ in $|\Aug|$ calls to \ID{}, for any deletion order.
\end{lemma}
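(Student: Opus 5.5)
The plan is to analyze the one-pass greedy procedure and use the upward closure of $\Adm(\Gr,\Qy)$ from Proposition~\ref{prop:mono} as the only structural fact. Fix an arbitrary enumeration $e_1,\ldots,e_m$ of $\Aug$, with $m=|\Aug|$. Set $\Aug_0=\Aug$. For $i=1,\ldots,m$, test whether $\Aug_{i-1}\setminus\{e_i\}\in\Adm(\Gr,\Qy)$ using one call to \ID{} on $\Gr\gminus(\Aug_{i-1}\setminus\{e_i\})$. Put $\Aug_i=\Aug_{i-1}\setminus\{e_i\}$ if the test succeeds and $\Aug_i=\Aug_{i-1}$ otherwise. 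The output is $\Aug_m$. The call count is then immediate: each element is examined exactly once, so there are exactly $m=|\Aug|$ calls. No call is needed for $\Aug$ itself, since membership is a hypothesis.

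First I show by induction that every $\Aug_i\in\Adm$. The base case is the hypothesis $\Aug_0=\Aug\in\Adm$. At each step the set either stays unchanged or is replaced by a set just certified to lie in $\Adm$. This uses the soundness and completeness of \ID{}, so that its verdict is exactly membership in $\Adm$. Hence $\Aug_m\in\Adm$.

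The main step is inclusion-minimality of $\Aug_m$. Suppose some $e\in\Aug_m$ satisfies $\Aug_m\setminus\{e\}\in\Adm$; I derive a contradiction. Since $e\in\Aug_m\subseteq\Aug$, we have $e=e_i$ for some $i$. Because $e$ survived, the test at step $i$ failed, so $\Aug_{i-1}\setminus\{e_i\}\notin\Adm$. The sets only shrink, so $\Aug_m\subseteq\Aug_{i-1}$, and therefore $\Aug_m\setminus\{e_i\}\subseteq\Aug_{i-1}\setminus\{e_i\}$. Proposition~\ref{prop:mono} says $\Adm$ is upward closed. So $\Aug_m\setminus\{e_i\}\in\Adm$ would force $\Aug_{i-1}\setminus\{e_i\}\in\Adm$, which is a contradiction.

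This gives minimality against single-element removals. To conclude that $\Aug_m\in\Frontier$, I also need minimality against arbitrary proper subsets. Again by upward closure, if some proper subset $S\subsetneq\Aug_m$ were in $\Adm$, then for any $e\in\Aug_m\setminus S$ we would have $S\subseteq\Aug_m\setminus\{e\}$, hence $\Aug_m\setminus\{e\}\in\Adm$, which was just excluded. So single-element minimality is equivalent to inclusion-minimality here, and $\Aug_m\in\Frontier(\Gr,\Qy)$.

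Nothing in the argument depends on the enumeration, which gives the claim for any deletion order. I expect the only delicate point to be the minimality step. It uses the fact that a rejection at an earlier, larger set remains valid for the final, smaller set. Upward closure is exactly what makes a single pass suffice, with no revisiting of earlier elements.
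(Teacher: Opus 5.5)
Your proof is correct and follows essentially the same argument as the paper. Both define the same one-pass iterates $\Aug_i$, get membership in $\Adm(\Gr,\Qy)$ by construction, and prove minimality by applying upward closure (Proposition~\ref{prop:mono}) twice to show that a rejected deletion at an earlier, larger set would in fact have succeeded; the only difference is that you first prove single-element minimality and then upgrade it to inclusion-minimality, whereas the paper chains both uses of upward closure into one contradiction.
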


Lemma~\ref{lem:shrink} makes every repair answer checkable: validity of a proposed $\Aug$ costs one call to \ID{}, and minimality costs $|\Aug|$ more. Augmentations are restricted to bi-directed edges for a semantic reason. Proposition~\ref{prop:mono} covers directed deletions, but deleting $X \to Y$ when that edge is the only directed path forces $P(y \mid \Do{x}) = P(y)$, which restores identifiability by assuming the answer. For a subgraph $F$ of $\Gr$, let $F \setminus \Aug$ denote deletion of the edges in $\Aug$. An augmentation $\Aug$ \emph{destroys} a hedge $(F,F')$ of $\Gr$ if $(F \setminus \Aug, F' \setminus \Aug)$ is not a hedge in $\Gr \gminus \Aug$.

\begin{theorem}[Hedge destruction]
\label{thm:destroy}
Let $\mathcal{H}(\Gr,\Qy)$ be the hedges (Appendix~\ref{app:hedges}) for $\Px{\mathbf{x}'}{\mathbf{y}'}$ in $\Gr$ over all $\Xs' \subseteq \Xs$, $\Ys' \subseteq \Ys$. Then $\Aug \in \Adm(\Gr,\Qy)$ if and only if $\Aug$ destroys every element of $\mathcal{H}(\Gr,\Qy)$. Further, $\Aug$ destroys $(F,F')$ if and only if deleting $\Aug$ disconnects the bi-directed edges of $F$ or of $F'$.
\end{theorem}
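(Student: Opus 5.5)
The plan is to reduce the first claim to the hedge criterion of Shpitser and Pearl (Appendix~\ref{app:hedges}). By that criterion, $\Px{\mathbf{x}}{\mathbf{y}}$ is non-identifiable in a graph $H$ iff $H$ contains a hedge for some $\Px{\mathbf{x}'}{\mathbf{y}'}$ with $\Xs'\subseteq\Xs$, $\Ys'\subseteq\Ys$. Applied to $H=\Gr\gminus\Aug$, we get $\Aug\in\Adm(\Gr,\Qy)$ iff $\Gr\gminus\Aug$ contains no such hedge.

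The first step is to show that every hedge of $\Gr\gminus\Aug$ comes from a hedge of $\Gr$. I would argue that $\Gr\gminus\Aug$ has the same vertices and directed edges as $\Gr$, and its bi-directed edges are a subset of those of $\Gr$. Being a C-forest, or more generally a hedge, requires three things:
\begin{itemize}[leftmargin=*]
\item a root set $R\subseteq\An(\Ys')$ defined via directed edges;
\item every vertex having at most one child, again defined via directed edges;
\item bi-directed connectivity, which is monotone in the edge set.
\end{itemize}
Therefore if $(F,F')$ is a hedge in $\Gr\gminus\Aug$, taking the same vertex sets with the edges induced in $\Gr$ gives a hedge $(\tilde F,\tilde F')$ of $\Gr$ with $\tilde F\setminus\Aug=F$. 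Conversely, a hedge $(F,F')\in\mathcal H(\Gr,\Qy)$ that is not destroyed yields, by definition, a hedge $(F\setminus\Aug,F'\setminus\Aug)$ of $\Gr\gminus\Aug$. Together these give: $\Gr\gminus\Aug$ has a hedge iff some element of $\mathcal H$ survives. Hence $\Aug\in\Adm$ iff every element of $\mathcal H$ is destroyed. One point needs care here: the intersection condition with $\Xs'$ must also be preserved. It is, because vertex sets are unchanged.

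For the second claim, I will check each hedge condition on $(F\setminus\Aug,F'\setminus\Aug)$.
\begin{itemize}[leftmargin=*]
\item The directed parts are untouched, so the common root set and the single-child (forest) property persist.
\item The conditions $F'\subseteq F$, $F\cap\Xs'\neq\emptyset$ and $F'\cap\Xs'=\emptyset$ are conditions on vertices, so they also persist.
\end{itemize}
The only condition that can fail is that each of $F\setminus\Aug$ and $F'\setminus\Aug$ remains a C-component, i.e., connected in its bi-directed edges. So the pair fails to be a hedge exactly when deleting $\Aug$ disconnects the bi-directed part of $F$ or of $F'$. This gives the stated iff.

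I expect the main obstacle to be definitional. The hedge in $\Gr\gminus\Aug$ should be an \emph{edge subgraph} of $\Gr\gminus\Aug$, possibly with fewer bi-directed edges than the induced subgraph. The lifting step must therefore allow $\tilde F$ to be any edge-superset in $\Gr$ that is still a hedge. I would handle this by fixing hedges as edge subgraphs, following the convention of Appendix~\ref{app:hedges}, and noting that adding back deleted bi-directed edges among the same vertices preserves every hedge condition.
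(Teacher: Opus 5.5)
Your proposal is correct and follows essentially the same route as the paper. You prove the second clause by noting that every hedge condition except bi-directed connectivity depends only on vertices and directed edges, and the first clause via the hedge criterion applied to $G \ominus A$, together with the fact that a hedge of $G \ominus A$ is a hedge of $G$ (the paper's Lemma~\ref{lem:persist}).

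One simplification: no lift is needed. A hedge $(F,F')$ of $G\ominus A$ is already a subgraph of $G$, is a hedge there, and satisfies $F\setminus A = F$. You should therefore drop the induced-subgraph construction, which can fail. Induced directed edges may give a vertex two children or change the root set. In addition, $\tilde F\setminus A$ need not equal $F$ when $F$ is not induced.
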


Theorem~\ref{thm:destroy} casts repair as an edge-cut problem. The closest prior result concerns interventions: \citet{akbari2022minimum} show that an intervention set restores identifiability exactly when it hits every hedge, and that the resulting design problem is NP-hard. Deleting edges instead of intervening on vertices changes the covering objects from vertices to edge cuts, so Theorem~\ref{thm:destroy} does not follow their result. When every hedge has a tree-structured bi-directed component, however, disconnection requires deleting a single edge, reducing restoration to minimum hitting set. Moreover, membership in $\Adm(\Gr,\Qy)$ is a connectivity question once the hedges are known, decidable in $O(|\Vs| + |\Bi{\Gr}|)$ per hedge and without a call to \ID{}. Appendix~\ref{app:properties-frontier} gives further results on the frontier size and the cost of deciding membership in $\Adm(\Gr,\Qy)$.

\subsection{What the Grading Guarantees}
\label{sec:grading-guarantee}
Definition~\ref{def:verified} refutes an estimand on a single numerical disagreement. A disagreement is a counterexample, so every refutation is correct. Verification rests on agreement over $k$ draws, and Theorem~\ref{thm:grading} shows that this agreement is strong evidence.

\begin{theorem}[Soundness and almost-sure refutation]
\label{thm:grading}
Fix $\Gr$ and an identifiable $\Qy$, and let $\hat\est$ be a returned estimand. Parametrize $\Mods{\Gr}$ by the conditional probability tables $\theta$ of a discrete SCM, so that $\hat\est(P_\theta(\Vs))$ and $\Qy_\theta$ are rational functions of $\theta$. Draw $\theta$ from any distribution absolutely continuous on the parameter region. With $\tau = 0$ and exact arithmetic: \emph{(i)} if the procedure refutes $\hat\est$, then $\hat\est$ does not denote $\Qy$; and \emph{(ii)} if $\hat\est$ does not agree with $\Qy$ on this family, then a single draw refutes it with probability one.
\end{theorem}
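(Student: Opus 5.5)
Part (i) follows almost directly from the definitions, while part (ii) is the standard argument that a nonzero rational function vanishes only on a measure-zero set.

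\emph{Part (i).} Suppose the procedure refutes $\hat\est$. Then some draw $\theta_j$ gives a materialized SCM $\Scm_j \in \Mods{\Gr}$ with $\hat\est(P_{\theta_j}(\Vs)) \neq \Qy_{\theta_j}$. Here $\tau=0$ and arithmetic is exact, so this is a genuine inequality rather than a rounding artifact. If $\hat\est$ denoted $\Qy$, i.e.\ $\hat\est(P_\Scm(\Vs)) = \Qy_\Scm$ for every $\Scm \in \Mods{\Gr}$ with positive $P(\Vs)$, this would be a contradiction. Definition~\ref{def:identifiability} requires positivity. I would therefore restrict the parameter region to tables with strictly positive entries, or note that absolutely continuous draws have positive entries almost surely. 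With that restriction, $\Scm_j$ is a counterexample and (i) holds.

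\emph{Part (ii), rationality.} First I would make precise that both sides are rational in $\theta$. $P_\theta(\Vs)$ is obtained by summing products of table entries over the materialized latents, so it is a polynomial in $\theta$. $\Qy_\theta$ is given by the truncated factorization with latents summed out, so it is also a polynomial. The estimand $\hat\est$ is a closed-form expression built from sums, products and quotients of marginals and conditionals of $P$. Composing it with $P_\theta$ gives a rational function $N(\theta)/D(\theta)$. The denominator is a product of marginals that are strictly positive on the interior of the positive region, so $D \neq 0$ there. Hence $\Delta(\theta) = \hat\est(P_\theta) - \Qy_\theta = R(\theta)/D(\theta)$ for a polynomial $R$, on a domain where $D\neq0$.

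\emph{Part (ii), the key step.} Assume $\hat\est$ does not agree with $\Qy$ on the family. Then $\Delta(\theta_0) \neq 0$ at some $\theta_0$, so $R$ is not the zero polynomial. The zero set of a nonzero polynomial on $\mathbb{R}^n$ has Lebesgue measure zero, which I would prove by induction on $n$ using Fubini. The parameter region is a product of open simplices, so I would parametrize it by the free coordinates of each table; this gives an open subset of $\mathbb{R}^n$ on which $R$ is still a nonzero polynomial. Since the draw distribution is absolutely continuous, $\Pr[R(\theta)=0]=0$. Thus a single draw gives $\Delta(\theta)\neq0$, a refutation, with probability one.

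The main obstacle I expect is the bookkeeping in the rationality step. One point is estimands with nested conditionals, or sums whose denominators could vanish on the boundary. The other is the passage from agreement on an open set to agreement everywhere, which is where polynomial identity is essential. I would handle both by working on the open positive region and using that a polynomial vanishing on a nonempty open set is identically zero. This is exactly the contrapositive used above.
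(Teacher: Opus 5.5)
Your proposal is correct and follows essentially the same route as the paper's proof. Part (i) exhibits the drawn $\Scm_\theta\in\Mods{\Gr}$ as a counterexample, and part (ii) writes the difference as a rational function $N/M$ in $\theta$ and uses that a nonzero polynomial has a Lebesgue-null zero set, so an absolutely continuous draw avoids it almost surely. The only minor divergence is in the denominators: the paper simply notes that $\{M=0\}$ is also null, whereas you argue positivity on the open region, which is valid because the estimand grammar has no subtraction (the cleared denominator is in general a sum of products rather than a product of marginals).
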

Part (i) is immediate. For part (ii), the difference of the two rational functions is not identically zero, so its zero set has measure zero, following the argument underlying randomized polynomial identity testing~\citep{schwartz1980fast,zippel1979probabilistic}. Section~\ref{sec:verifier} gives two restrictions under which our implementation compares only a slice of the target. Appendix~\ref{app:properties-grading} examines how much of Theorem~\ref{thm:grading} survives finite precision and the single intervention value used in our implementation. Proofs are deferred to Appendix~\ref{app:proofs}.
\section{\benchname{}: Evaluation Setup}
\label{sec:benchmark}
\vspace{-2mm}
Sections~\ref{sec:background} and~\ref{sec:construction} defined what we measure and why it can be trusted. This section describes the concrete run, in the three stages of Figure~\ref{fig:pipeline}. Section~\ref{sec:instances} generates and labels the instances, Section~\ref{sec:models} poses them to the systems and parses the replies, and Sections~\ref{sec:metrics} and~\ref{sec:verifier} score the responses.

\begin{figure}[!t]
\vspace{-4mm}
\centering
\begin{tikzpicture}[
  font=\scriptsize,
  box/.style={rectangle, rounded corners=3pt, align=center, minimum height=0.75cm,
              text width=3.6cm, inner sep=2pt, thick},
  data/.style={box, fill=blue!8, draw=blue!45!black},
  proc/.style={box, fill=orange!12, draw=orange!60!black},
  chip/.style={rectangle, rounded corners=2pt, align=center, minimum height=0.38cm,
               minimum width=1.55cm, inner sep=1.5pt, semithick},
  good/.style={chip, fill=green!15, draw=green!45!black},
  warn/.style={chip, fill=yellow!30, draw=orange!70!black},
  bad/.style={chip, fill=red!15, draw=red!60!black, font=\scriptsize\bfseries},
  arr/.style={-{Stealth}, semithick, gray!65!black},
  stage/.style={-{Stealth[length=2.6mm,width=3.2mm]}, line width=2.4pt, gray!45},
  panel/.style={rounded corners=6pt, fill=gray!7},
  ex/.style={rounded corners=4pt, draw=gray!55, dashed, fill=white},
  badge/.style={circle, fill=gray!55, text=white, font=\tiny\bfseries, inner sep=1.2pt},
  ptitle/.style={anchor=west, font=\scriptsize\bfseries, text=gray!60!black},
  v/.style={circle, draw, semithick, minimum size=0.36cm, inner sep=0pt, font=\scriptsize}
]
\fill[panel] (0.00,0.95) rectangle (4.10,4.40);
\fill[panel] (4.55,0.95) rectangle (8.65,4.40);
\fill[panel] (9.10,0.95) rectangle (13.20,4.40);
\node[badge] at (0.22,4.16) {1}; \node[ptitle] at (0.36,4.16) {Certify the instance};
\node[badge] at (4.77,4.16) {2}; \node[ptitle] at (4.91,4.16) {Query the system};
\node[badge] at (9.32,4.16) {3}; \node[ptitle] at (9.46,4.16) {Score the response};
\draw[stage] (4.12,2.60) -- (4.53,2.60);
\draw[stage] (8.67,2.60) -- (9.08,2.60);
\node[proc] (gen)  at (2.05,3.55) {Draw a graph $\Gr$\\and a query $\Qy$};
\node[proc] (id)   at (2.05,2.47) {\ID{} decides whether $\Qy$\\is identifiable in $\Gr$};
\node[data] (inst) at (2.05,1.40) {Certified instance $(\Gr,\Qy,\ell)$};
\draw[arr] (gen) -- (id); \draw[arr] (id) -- (inst);
\node[proc] (prompt) at (6.60,3.55) {Prompt: graph and query\\(label $\ell$ withheld)};
\node[proc] (sys)    at (6.60,2.47) {The system replies\\in free text};
\node[data] (resp)   at (6.60,1.40) {Parsed response $(v,\hat\est)$};
\draw[arr] (prompt) -- (sys); \draw[arr] (sys) -- (resp);
\node[proc] (cmp) at (11.15,3.55) {Compare the verdict $v$\\with the label $\ell$};
\node[good] (cr) at (10.20,2.72) {correct refusal};
\node[bad]  (fc) at (12.10,2.72) {false claim};
\node[good] (cc) at (10.20,2.27) {correct claim};
\node[warn] (ms) at (12.10,2.27) {miss};
\draw[arr] (cmp.south) -- ++(0,-0.26);
\node[proc, text width=1.85cm] (ver) at (10.20,1.40) {Verify $\hat\est$\\on SCMs};
\draw[arr] (cc.south) -- (ver.north);
\node[good, minimum width=1.1cm] (ok)  at (12.45,1.62) {verified};
\node[bad,  minimum width=1.1cm] (no)  at (12.45,1.18) {refuted};
\draw[arr] (ver.east) -- (11.82,1.40);
\draw[ex] (0.00,-0.05) rectangle (4.10,0.75);
\draw[ex] (4.55,-0.05) rectangle (8.65,0.75);
\draw[ex] (9.10,-0.05) rectangle (13.20,0.75);
\draw[arr] (4.12,0.35) -- (4.53,0.35);
\draw[arr] (8.67,0.35) -- (9.08,0.35);
\node[v] (X) at (0.42,0.30) {$X$};
\node[v] (Y) at (1.32,0.30) {$Y$};
\draw[arr, black] (X) -- (Y);
\draw[{Stealth}-{Stealth}, dashed, semithick, red!70] (X) to[bend left=55] (Y);
\node[anchor=west, align=left] at (1.62,0.35) {$\Qy = P(y \mid \Do{x})$\\\ID{} returns $\ell = \FAIL$};
\node[align=center] at (6.60,0.35) {reply: ``\textsc{identifiable},\\$\hat\est = P(y \mid x)$'' \;$\Rightarrow$\; $v = \textsc{id}$};
\node[align=center] at (11.15,0.35) {$v = \textsc{id}$ but $\ell = \FAIL$\\$\Rightarrow$ {\color{red!70!black}\textbf{false claim}}};
\end{tikzpicture}
\vspace{-2mm}
\caption{{How \benchname{}'s one instance moves through the evaluation. \textbf{(1)} An instance is drawn from a family defined in Section~\ref{sec:families} and certified by \textsc{id} (Definition~\ref{def:certified}). \textbf{(2)} The system sees the graph and query but not the label, and its reply is parsed into a response (Section~\ref{sec:models}). \textbf{(3)} The verdict is compared with the label (Definition~\ref{def:evalproblem}); for a correct claim, the estimand is also verified on sampled SCMs (Section~\ref{sec:verifier}). The dashed row follows the confounded graph of Figure~\ref{fig:overview}(a): the system returns an estimand for a non-identifiable query, a false claim that no observational data can expose.}}
\label{fig:pipeline}
\vspace{-2mm}
\end{figure}

\vspace{-2mm}
\subsection{Instances}
\label{sec:instances}
\vspace{-2mm}
We instantiate the two families of Section~\ref{sec:families}. Every instance (Definition~\ref{def:certified}) pairs a graph $\Gr$ with a query $\Qy = P(y \mid \Do{x})$, where $x$ and $y$ are single vertices and $x$ is a strict ancestor of $y$. Random-family graphs (Definition~\ref{def:random-family}) fix $p_{\mathrm{dir}} = 0.35$ and sweep $p_{\mathrm{bi}}$ over $\{0.10, 0.20, 0.30, 0.40\}$ and $|\Vs|$ over $\{5, 7, 9, 11, 13\}$, with 24 instances in each of the 20 cells. Published-family graphs (Definition~\ref{def:semi-family}) project out latent sets from five published networks---\textsc{asia}, \textsc{child}, \textsc{insurance}, \textsc{alarm}, and \textsc{hepar2} (more details are in Appendix~\ref{app:networks})---and restrict the resulting graph to $\An(y)$. Each network contributes 24 instances with 4--20 vertices.

Both families are balanced by rejection: \ID{} labels each draw, and a draw is discarded once its label's quota in the cell is full. The \emph{main pool} contains 600 instances: 480 random (240 identifiable) and 120 published (60 identifiable). The \emph{scale grid} contains 600 additional random instances, 100 at each $|\Vs|\in\{10,15,20,30,40,50\}$, balanced within each size and containing at least one bi-directed edge. Edge probabilities decrease with $|\Vs|$ to maintain densities of about $1.55$ directed and $1.0$ bi-directed edges per vertex. The outcome $y$ is sampled among vertices with the largest ancestor sets, ensuring that the relevant subgraph grows with $|\Vs|$. The edge-classification task uses a separate set of 200 semi-synthetic instances (Appendix~\ref{app:leakage}). Their leakage is $\mathrm{AUC} = \shippedAUC{}$, against \leakAUC{} when spurious edges are placed at random. {\bf The verdict task runs on both pools, the repair task on their non-identifiable instances, and edge classification on the 200 semi-synthetic instances.}

\vspace{-2mm}
\subsection{Querying the Systems}
\label{sec:models}
\vspace{-2mm}
{Each instance is sent to a system as a prompt, and the system's reply is parsed into a response (Definition~\ref{def:evalproblem}). We query three frontier models from two vendors: \texttt{gemini-3.7-flash} and \texttt{gemini-3.1-pro-preview}, floating aliases that cannot be re-run against identical weights, and \texttt{gpt-5.5-2026-04-23}, a dated snapshot whose served identifier we log on every call. We call them \textsc{flash}, \textsc{pro} and \textsc{gpt-5.5}. Three open-weights models from 0.5B to 3B parameters attempt only edge classification (Appendix~\ref{app:proto-small}). Appendix~\ref{app:models} lists all identifiers.}

The prompt lists the vertices, the directed edges and the bi-directed edges, then states the query and the task. The template is shared across vendors: on the scale grid the prompts are identical in bytes, and on the main pool only the layout of the graph block differs between vendors. The repair task also has an \emph{enumeration} variant, which asks for every minimal set and is scored as recall against $\Frontier(\Gr,\Qy)$ where exhaustive enumeration is feasible. In the \emph{oracle condition}, the system may call \ID{} as a tool up to 15 times, which separates the ability to search from the ability to check.

The parser reads the verdict by three routes, in order: an explicit verdict line; an estimand block, treated as a claim; or a keyword scan. The second route yields only claims, making the routes asymmetric across labels. Because internal reasoning shares the reply token budget, we use large budgets: 16{,}384 tokens on the main pool and 32{,}768 elsewhere. Truncation can bias surviving replies toward easier instances, so we report it per cell: \textsc{flash} and \textsc{gpt-5.5} have none, while \textsc{pro} has thirteen truncated replies on the main pool and three on the scale grid. Appendix~\ref{app:truncation} gives the budgets, serializations, route counts, and results excluding truncated replies.

\vspace{-2mm}
\subsection{Estimand Verification}
\label{sec:verifier}
\vspace{-2mm}
Definition~\ref{def:verified} is realized as follows. A returned estimand is parsed into a restricted arithmetic form over conditional probabilities and evaluated against exact interventional distributions from randomly drawn discrete SCMs, with latent variables materialized and the joint distribution enumerated.

{\bf Example.} Consider the front-door graph of \citet{pearl1995causal}, $X \to M \to Y$ with $X \leftrightarrow Y$, and the query $P(y \mid \Do{x})$. Suppose a system returns $\hat\est = \sum_m P(m \mid x) \sum_{x'} P(y \mid x', m)\, P(x')$. The verifier replaces $X \leftrightarrow Y$ with an explicit binary latent $U$, draws every conditional probability table at random, and enumerates all 16 configurations of $(U, X, M, Y)$. It evaluates $\hat\est$ on the observational margin $P(X, M, Y)$, then computes the true $P(y \mid \Do{x})$ by intervening on $X$ in the same SCM. The two agree to ${\sim}10^{-16}$, so $\hat\est$ is verified. In contrast, the naive answer $P(y \mid x)$ disagrees: it absorbs the confounding carried by $U$, and therefore refuted.

We draw $k = \nSCMDraws{}$ SCMs per instance and refute when $|\hat\est(P_\Scm(\Vs)) - \Qy_\Scm| > \tau$, with $\tau = \tolValue{}$. In exact arithmetic, one draw suffices (reflected in Theorem~\ref{thm:grading}); the second guards against a floating-point coincidence on the first (see more details in Appendix~\ref{app:properties-grading}).

We calibrate the verifier in both directions: back-door and front-door estimands agree with ground truth at ${\sim}10^{-16}$, and a deliberately naive estimand is refuted at \naiveGap{}. For each library defect of Appendix~\ref{app:librarybug}, the largest deviation over the grader's draws is at least $7.5\times10^{-4}$.

Exact enumeration is exponential in the number of variables, counting one binary latent per bi-directed edge. We check an estimand only when the graph has at most 9 vertices, the joint has at most $2^{22}$ configurations, and no conditional probability table exceeds $2^{18}$ rows.

{\bf Restriction.} The comparison uses a single intervention value, so an estimand that agrees with the target at $\Do{x=1}$ but differs elsewhere is not refuted. Conditioning events with zero probability yield undefined entries, which are skipped rather than failed. Theorem~\ref{thm:grading} applies to the quantity actually compared; extending the comparison across intervention values would strengthen the verifier.

{\bf What this catches.} On five graphs, a published causal inference library returns numerically incorrect expressions despite correct identifiability verdicts. Two of the graphs come from the literature, and the formulas published for them agree with ground truth to ${\sim}10^{-16}$. This illustrates the limitation of string matching noted before Definition~\ref{def:verified}: the library's expression, the published formula, and other correct renderings differ in form (Appendices~\ref{app:librarybug} and~\ref{app:ex-estimand}), whereas numerical verification distinguishes them by what they compute.

\subsection{Metrics}
\label{sec:metrics}
We report four metrics over the scored responses defined in Definition~\ref{def:evalproblem}. For a pool, let $\mathcal{S}$ denote its scored instances, $\mathcal{I}\subseteq\mathcal{S}$ the identifiable ones, and $\mathcal{N}=\mathcal{S}\setminus\mathcal{I}$ the non-identifiable ones. {\bf Accuracy:} the fraction of $\mathcal{S}$ answered with a correct claim or correct refusal.
{\bf Identifiable rate:} the fraction $|\mathcal{I}|/|\mathcal{S}|$ of scored instances that are identifiable, equivalently the accuracy obtained by calling every query identifiable.
{\bf Answer rate:} the fraction of $\mathcal{S}$ answered with verdict \textsc{id}.
{\bf False-claim rate:} the fraction of $\mathcal{N}$ answered with a false claim.

The identifiable rate and answer rate help interpret accuracy. A system that always answers \textsc{id} achieves an accuracy equal to the identifiable rate; a system that successfully distinguishes identifiable from non-identifiable queries achieves accuracy above this baseline, while its answer rate approaches the identifiable rate. The false-claim rate isolates the consequential error in Definition~\ref{def:evalproblem}, which accuracy conflates with misses.
\section{\benchname{}: LLM Results}
\label{sec:results}

\begin{wraptable}{r}{0.55\textwidth}
\vspace{-6mm}
\centering
\footnotesize
\caption{Verdict performance on the main pool: 600 instances, balanced between identifiable and non-identifiable queries. Random-family graphs sweep $|\Vs| \in \{5,7,9,11,13\}$.}
\label{tab:verdict}
\setlength{\tabcolsep}{0pt}
\renewcommand{\arraystretch}{0.95}

\begin{tabular}{@{}lccccc@{}}
\toprule
Model & Scored & Acc. & {Id.\ rate} & Ans. rate & False-claim rate \\
\midrule
FLASH   & 600/600 & 0.983 & 0.500 & 0.513 & 0.030 \\
PRO     & 589/600 & 0.978 & 0.506 & 0.511 & 0.028 \\
GPT-5.5 & 600/600 & \textbf{0.998} & 0.500 & 0.498 & \textbf{0.000} \\
\midrule
& \multicolumn{2}{c}{Accuracy}
& & \multicolumn{2}{c}{False-claim} \\
\cmidrule(lr){2-3}\cmidrule(lr){5-6}
Model & Rand. fam. & Pub. fam. & & Rand. fam. & Pub. fam. \\
\midrule
FLASH   & 0.981 & 0.992 && 0.033 & 0.017 \\
PRO     & 0.972 & \textbf{1.000} && 0.035 & \textbf{0.000} \\
GPT-5.5 & \textbf{1.000} & 0.992 && \textbf{0.000} & \textbf{0.000} \\
\bottomrule
\end{tabular}
\vspace{-3mm}
\end{wraptable}

\vspace{-1mm}
\subsection{Frontier Models Decide Identifiability}
\label{sec:res-verdict}
From Table~\ref{tab:verdict}, all three frontier models exceed the identifiable rate, with answer rates close to it, indicating genuine discrimination. \modelPro{}'s 11 unscored instances comprise 7 main-pool truncations, 1 formatting failure, and 3 transport errors. The truncations reflect the bias noted in Section~\ref{sec:models}: each reaches the budget ceiling after roughly 15,000 tokens of internal reasoning, so the dropped instances are those the model worked longest on. \modelPro{}'s main-pool accuracy is therefore conditional on the surviving sample.

Accuracy on the random family is of primary interest because verdict must follow from graph structure alone (Section~\ref{sec:families}). For \modelGPT{}, the timeline rules out contamination: its snapshot dates to April 23, 2026, nearly 4 months before we generated the random instances on August 17, 2026, yet it answers all of them correctly.

Two observations qualify the reading. Performance differs between the families, but the published instances are naturally easier: $97.8\%$(2,716/2,777) are identifiable before balancing vs $59.4\%$(585/983) for the random family (Section~\ref{sec:families}). Thus the gap is confounded by difficulty. Gaps are small, and their sign differs between vendors. Gemini models are more accurate on the published instances, while \modelGPT{} is exact on the random instances and has a single error on published ones. Neither pattern suggests that familiarity with the published networks drives accuracy.

Difficulty is non-monotone in bi-directed density. Very sparse graphs are mostly identifiable and very dense ones mostly non-identifiable, so errors concentrate at intermediate densities, where the verdict is genuinely contested. For \modelFlash{} on the main pool, false claims peak at $0.154$ around 10 bi-directed edges and fall to 0 beyond 14 bi-directed edges.

{{\bf Estimands.} Among correctly identified instances, returned estimands are correct at \estimandAcc{} (\modelFlash{} 172/174, \modelPro{} 170/172; Sec.~\ref{sec:verifier}). Verification is limited to graphs with at most 9 vertices, leaving \estimandSkip{} of cases unchecked. \modelGPT{} estimands were not graded because its runner records verdicts without invoking the verifier.} 

{\bf Single runs are samples.} {Temperature zero did not produce deterministic outputs in Gemini models. Across 3 fresh responses for each of \nRepeatInst{} instances, \modelFlash{} is non-unanimous on \flipFlash{} and \modelPro{} on \flipPro{}, while majority voting yields no accuracy gain. \modelFlash{}'s instability occurs entirely where a flip changes a correct refusal into a false claim.}
Thus, every reported figure from these endpoints is a sample rather than a constant, and 1-2 point gap fall within rerun variability (Appendix~\ref{app:proto-repro}).

\vspace{-2mm}
\subsection{Warrants Are Withheld by Default, Not Unavailable}
\label{sec:res-warrant}
\vspace{-2mm}
A correct verdict and a checkable justification are independent, and under our default prompt the justification is usually absent. Of \modelFlash{}'s \nRefFlash{} correct refusals, \emph{every one} is exactly 25 characters: the verdict line and nothing else. \modelGPT{} is identical: all \nRefGPT{} correct refusals are exactly 25 characters.
{\modelPro{} averages \refMeanPro{} characters, but its median is also 25. The distribution is bimodal, with \modelPro{} returning a bare verdict on \refBarePro{} of its \nRefPro{} correct refusals.}

{\bf Asked directly, every model complies.} When we re-run the same non-identifiable instances with a prompt that requests the structural reason, the silence disappears. None verdict-only reply survives in any model (\justifyBare{} across $883$ correct refusals), and every reply names at least one graphical obstruction (\justifyCite{}). Mean lengths become \justifyLenFlash{}, \justifyLenPro{} and \justifyLenGPT{} characters. The shift is sharpest for \modelPro{}: its mean barely changes, but its distribution collapses, with the median rising from 25 to \proMedianAsked{} and the 90th percentile falling from \refPninetyPro{} to \proPninetyAsked{}. The cited terms track the graph, with structures irrelevant to a non-identifiable query appearing near zero (Appendix~\ref{app:truncation}).

{\bf For one model, asking costs accuracy.}
\modelFlash{}'s correct-refusal rate falls from \flashRefusalBase{} to \flashRefusalAsked{} on the same 300 instances, with false claims rising from 9 to 14. \modelPro{} improves slightly, while \modelGPT{} remains exact. The shift is roughly $1.7\times$ \modelFlash{}'s measured flip rate (Appendix~\ref{app:proto-repro}), exceeding rerun variability but not by enough for a single run to establish the effect. We report it as suggestive.

The warrant is therefore withheld, not unavailable. Both silent models reason internally at length and articulate the obstruction on request, so default silence is a property of the interface rather than a model limitation. 
{Our scope is precise: we measure what is \emph{emitted}, not whether the emitted text reflects the computation that produced the verdict. The latter is the question studied by the faithfulness literature (Appendix~\ref{sec:related-cot}), which we do not address.} 

\vspace{-2mm}
\subsection{Accuracy Is a Poor Proxy for Soundness}
\label{sec:res-decouple}
\begin{wrapfigure}{r}{0.7\textwidth}
\centering
\vspace{-4mm}
\scalebox{0.65}{
\begin{tikzpicture}[font=\small]
\node[font=\bfseries\small] at (3.2,4.75) {(a) accuracy: a ten-point spread};
\node[font=\bfseries\small] at (11.2,4.75) {(b) false-claim rate: seventeen-fold};
\draw[semithick] (0.000,0) -- (0.000,4.200);
\draw[semithick] (0.000,0) -- (6.400,0);
\draw[gray!25] (0.000,0.000) -- (6.400,0.000);
\node[font=\scriptsize,left] at (-0.060,0.000) {0.75};
\draw[gray!25] (0.000,0.840) -- (6.400,0.840);
\node[font=\scriptsize,left] at (-0.060,0.840) {0.80};
\draw[gray!25] (0.000,1.680) -- (6.400,1.680);
\node[font=\scriptsize,left] at (-0.060,1.680) {0.85};
\draw[gray!25] (0.000,2.520) -- (6.400,2.520);
\node[font=\scriptsize,left] at (-0.060,2.520) {0.90};
\draw[gray!25] (0.000,3.360) -- (6.400,3.360);
\node[font=\scriptsize,left] at (-0.060,3.360) {0.95};
\draw[gray!25] (0.000,4.200) -- (6.400,4.200);
\node[font=\scriptsize,left] at (-0.060,4.200) {1.00};
\draw[red!70,semithick] (0.420,2.352) -- (0.420,3.864);
\draw[red!70,semithick] (0.360,2.352) -- (0.480,2.352);
\draw[red!70,semithick] (0.360,3.864) -- (0.480,3.864);
\draw[red!70,semithick] (1.500,2.856) -- (1.500,4.032);
\draw[red!70,semithick] (1.440,2.856) -- (1.560,2.856);
\draw[red!70,semithick] (1.440,4.032) -- (1.560,4.032);
\draw[red!70,semithick] (2.580,1.344) -- (2.580,3.360);
\draw[red!70,semithick] (2.520,1.344) -- (2.640,1.344);
\draw[red!70,semithick] (2.520,3.360) -- (2.640,3.360);
\draw[red!70,semithick] (3.660,0.336) -- (3.660,2.688);
\draw[red!70,semithick] (3.600,0.336) -- (3.720,0.336);
\draw[red!70,semithick] (3.600,2.688) -- (3.720,2.688);
\draw[red!70,semithick] (4.740,1.848) -- (4.740,3.696);
\draw[red!70,semithick] (4.680,1.848) -- (4.800,1.848);
\draw[red!70,semithick] (4.680,3.696) -- (4.800,3.696);
\draw[red!70,semithick] (5.820,0.336) -- (5.820,2.688);
\draw[red!70,semithick] (5.760,0.336) -- (5.880,0.336);
\draw[red!70,semithick] (5.760,2.688) -- (5.880,2.688);
\draw[red!70,very thick] (0.420,3.192) -- (1.500,3.528) -- (2.580,2.352) -- (3.660,1.512) -- (4.740,2.856) -- (5.820,1.512);
\fill[red!70] (0.420,3.192) circle (1.4pt);
\fill[red!70] (1.500,3.528) circle (1.4pt);
\fill[red!70] (2.580,2.352) circle (1.4pt);
\fill[red!70] (3.660,1.512) circle (1.4pt);
\fill[red!70] (4.740,2.856) circle (1.4pt);
\fill[red!70] (5.820,1.512) circle (1.4pt);
\draw[blue!60,semithick] (0.550,2.856) -- (0.550,4.032);
\draw[blue!60,semithick] (0.490,2.856) -- (0.610,2.856);
\draw[blue!60,semithick] (0.490,4.032) -- (0.610,4.032);
\draw[blue!60,semithick] (1.630,2.856) -- (1.630,4.032);
\draw[blue!60,semithick] (1.570,2.856) -- (1.690,2.856);
\draw[blue!60,semithick] (1.570,4.032) -- (1.690,4.032);
\draw[blue!60,semithick] (2.710,2.856) -- (2.710,4.032);
\draw[blue!60,semithick] (2.650,2.856) -- (2.770,2.856);
\draw[blue!60,semithick] (2.650,4.032) -- (2.770,4.032);
\draw[blue!60,semithick] (3.790,2.352) -- (3.790,3.864);
\draw[blue!60,semithick] (3.730,2.352) -- (3.850,2.352);
\draw[blue!60,semithick] (3.730,3.864) -- (3.850,3.864);
\draw[blue!60,semithick] (4.870,1.824) -- (4.870,3.691);
\draw[blue!60,semithick] (4.810,1.824) -- (4.930,1.824);
\draw[blue!60,semithick] (4.810,3.691) -- (4.930,3.691);
\draw[blue!60,semithick] (5.950,1.655) -- (5.950,3.521);
\draw[blue!60,semithick] (5.890,1.655) -- (6.010,1.655);
\draw[blue!60,semithick] (5.890,3.521) -- (6.010,3.521);
\draw[blue!60,very thick] (0.550,3.528) -- (1.630,3.528) -- (2.710,3.528) -- (3.790,3.192) -- (4.870,2.843) -- (5.950,2.673);
\fill[blue!60] (0.550,3.528) circle (1.4pt);
\fill[blue!60] (1.630,3.528) circle (1.4pt);
\fill[blue!60] (2.710,3.528) circle (1.4pt);
\fill[blue!60] (3.790,3.192) circle (1.4pt);
\fill[blue!60] (4.870,2.843) circle (1.4pt);
\fill[blue!60] (5.950,2.673) circle (1.4pt);
\draw[green!45!black,semithick] (0.680,3.696) -- (0.680,4.200);
\draw[green!45!black,semithick] (0.620,3.696) -- (0.740,3.696);
\draw[green!45!black,semithick] (0.620,4.200) -- (0.740,4.200);
\draw[green!45!black,semithick] (1.760,3.696) -- (1.760,4.200);
\draw[green!45!black,semithick] (1.700,3.696) -- (1.820,3.696);
\draw[green!45!black,semithick] (1.700,4.200) -- (1.820,4.200);
\draw[green!45!black,semithick] (2.840,3.696) -- (2.840,4.200);
\draw[green!45!black,semithick] (2.780,3.696) -- (2.900,3.696);
\draw[green!45!black,semithick] (2.780,4.200) -- (2.900,4.200);
\draw[green!45!black,semithick] (3.920,3.696) -- (3.920,4.200);
\draw[green!45!black,semithick] (3.860,3.696) -- (3.980,3.696);
\draw[green!45!black,semithick] (3.860,4.200) -- (3.980,4.200);
\draw[green!45!black,semithick] (5.000,3.696) -- (5.000,4.200);
\draw[green!45!black,semithick] (4.940,3.696) -- (5.060,3.696);
\draw[green!45!black,semithick] (4.940,4.200) -- (5.060,4.200);
\draw[green!45!black,semithick] (6.080,3.360) -- (6.080,4.200);
\draw[green!45!black,semithick] (6.020,3.360) -- (6.140,3.360);
\draw[green!45!black,semithick] (6.020,4.200) -- (6.140,4.200);
\draw[green!45!black,very thick] (0.680,4.200) -- (1.760,4.200) -- (2.840,4.032) -- (3.920,4.200) -- (5.000,4.200) -- (6.080,3.864);
\fill[green!45!black] (0.680,4.200) circle (1.4pt);
\fill[green!45!black] (1.760,4.200) circle (1.4pt);
\fill[green!45!black] (2.840,4.032) circle (1.4pt);
\fill[green!45!black] (3.920,4.200) circle (1.4pt);
\fill[green!45!black] (5.000,4.200) circle (1.4pt);
\fill[green!45!black] (6.080,3.864) circle (1.4pt);
\node[font=\scriptsize] at (0.550,-0.30) {10};
\node[font=\scriptsize] at (1.630,-0.30) {15};
\node[font=\scriptsize] at (2.710,-0.30) {20};
\node[font=\scriptsize] at (3.790,-0.30) {30};
\node[font=\scriptsize] at (4.870,-0.30) {40};
\node[font=\scriptsize] at (5.950,-0.30) {50};
\node[font=\scriptsize] at (3.200,-0.72) {graph size $|\Vs|$};
\draw[semithick] (8.000,0) -- (8.000,4.200);
\draw[semithick] (8.000,0) -- (14.400,0);
\draw[gray!25] (8.000,0.000) -- (14.400,0.000);
\node[font=\scriptsize,left] at (7.940,0.000) {0.00};
\draw[gray!25] (8.000,0.933) -- (14.400,0.933);
\node[font=\scriptsize,left] at (7.940,0.933) {0.10};
\draw[gray!25] (8.000,1.867) -- (14.400,1.867);
\node[font=\scriptsize,left] at (7.940,1.867) {0.20};
\draw[gray!25] (8.000,2.800) -- (14.400,2.800);
\node[font=\scriptsize,left] at (7.940,2.800) {0.30};
\draw[gray!25] (8.000,3.733) -- (14.400,3.733);
\node[font=\scriptsize,left] at (7.940,3.733) {0.40};
\draw[red!70,semithick] (8.420,0.373) -- (8.420,2.053);
\draw[red!70,semithick] (8.360,0.373) -- (8.480,0.373);
\draw[red!70,semithick] (8.360,2.053) -- (8.480,2.053);
\draw[red!70,semithick] (9.500,0.000) -- (9.500,1.307);
\draw[red!70,semithick] (9.440,0.000) -- (9.560,0.000);
\draw[red!70,semithick] (9.440,1.307) -- (9.560,1.307);
\draw[red!70,semithick] (10.580,0.933) -- (10.580,2.987);
\draw[red!70,semithick] (10.520,0.933) -- (10.640,0.933);
\draw[red!70,semithick] (10.520,2.987) -- (10.640,2.987);
\draw[red!70,semithick] (11.660,1.493) -- (11.660,3.733);
\draw[red!70,semithick] (11.600,1.493) -- (11.720,1.493);
\draw[red!70,semithick] (11.600,3.733) -- (11.720,3.733);
\draw[red!70,semithick] (12.740,0.560) -- (12.740,2.427);
\draw[red!70,semithick] (12.680,0.560) -- (12.800,0.560);
\draw[red!70,semithick] (12.680,2.427) -- (12.800,2.427);
\draw[red!70,semithick] (13.820,0.933) -- (13.820,2.987);
\draw[red!70,semithick] (13.760,0.933) -- (13.880,0.933);
\draw[red!70,semithick] (13.760,2.987) -- (13.880,2.987);
\draw[red!70,very thick] (8.420,1.120) -- (9.500,0.560) -- (10.580,1.867) -- (11.660,2.613) -- (12.740,1.493) -- (13.820,1.867);
\fill[red!70] (8.420,1.120) circle (1.4pt);
\fill[red!70] (9.500,0.560) circle (1.4pt);
\fill[red!70] (10.580,1.867) circle (1.4pt);
\fill[red!70] (11.660,2.613) circle (1.4pt);
\fill[red!70] (12.740,1.493) circle (1.4pt);
\fill[red!70] (13.820,1.867) circle (1.4pt);
\draw[blue!60,semithick] (8.550,0.000) -- (8.550,0.560);
\draw[blue!60,semithick] (8.490,0.000) -- (8.610,0.000);
\draw[blue!60,semithick] (8.490,0.560) -- (8.610,0.560);
\draw[blue!60,semithick] (9.630,0.000) -- (9.630,0.933);
\draw[blue!60,semithick] (9.570,0.000) -- (9.690,0.000);
\draw[blue!60,semithick] (9.570,0.933) -- (9.690,0.933);
\draw[blue!60,semithick] (10.710,0.000) -- (10.710,1.307);
\draw[blue!60,semithick] (10.650,0.000) -- (10.770,0.000);
\draw[blue!60,semithick] (10.650,1.307) -- (10.770,1.307);
\draw[blue!60,semithick] (11.790,0.187) -- (11.790,1.867);
\draw[blue!60,semithick] (11.730,0.187) -- (11.850,0.187);
\draw[blue!60,semithick] (11.730,1.867) -- (11.850,1.867);
\draw[blue!60,semithick] (12.870,0.190) -- (12.870,1.715);
\draw[blue!60,semithick] (12.810,0.190) -- (12.930,0.190);
\draw[blue!60,semithick] (12.810,1.715) -- (12.930,1.715);
\draw[blue!60,semithick] (13.950,0.381) -- (13.950,2.095);
\draw[blue!60,semithick] (13.890,0.381) -- (14.010,0.381);
\draw[blue!60,semithick] (13.890,2.095) -- (14.010,2.095);
\draw[blue!60,very thick] (8.550,0.187) -- (9.630,0.373) -- (10.710,0.560) -- (11.790,0.933) -- (12.870,0.952) -- (13.950,1.142);
\fill[blue!60] (8.550,0.187) circle (1.4pt);
\fill[blue!60] (9.630,0.373) circle (1.4pt);
\fill[blue!60] (10.710,0.560) circle (1.4pt);
\fill[blue!60] (11.790,0.933) circle (1.4pt);
\fill[blue!60] (12.870,0.952) circle (1.4pt);
\fill[blue!60] (13.950,1.142) circle (1.4pt);
\draw[green!45!black,semithick] (8.680,0.000) -- (8.680,0.560);
\draw[green!45!black,semithick] (8.620,0.000) -- (8.740,0.000);
\draw[green!45!black,semithick] (8.620,0.560) -- (8.740,0.560);
\draw[green!45!black,semithick] (9.760,0.000) -- (9.760,0.560);
\draw[green!45!black,semithick] (9.700,0.000) -- (9.820,0.000);
\draw[green!45!black,semithick] (9.700,0.560) -- (9.820,0.560);
\draw[green!45!black,semithick] (10.840,0.000) -- (10.840,0.560);
\draw[green!45!black,semithick] (10.780,0.000) -- (10.900,0.000);
\draw[green!45!black,semithick] (10.780,0.560) -- (10.900,0.560);
\draw[green!45!black,semithick] (11.920,0.000) -- (11.920,0.560);
\draw[green!45!black,semithick] (11.860,0.000) -- (11.980,0.000);
\draw[green!45!black,semithick] (11.860,0.560) -- (11.980,0.560);
\draw[green!45!black,semithick] (13.000,0.000) -- (13.000,0.560);
\draw[green!45!black,semithick] (12.940,0.000) -- (13.060,0.000);
\draw[green!45!black,semithick] (12.940,0.560) -- (13.060,0.560);
\draw[green!45!black,semithick] (14.080,0.000) -- (14.080,0.933);
\draw[green!45!black,semithick] (14.020,0.000) -- (14.140,0.000);
\draw[green!45!black,semithick] (14.020,0.933) -- (14.140,0.933);
\draw[green!45!black,very thick] (8.680,0.000) -- (9.760,0.000) -- (10.840,0.187) -- (11.920,0.000) -- (13.000,0.000) -- (14.080,0.373);
\fill[green!45!black] (8.680,0.000) circle (1.4pt);
\fill[green!45!black] (9.760,0.000) circle (1.4pt);
\fill[green!45!black] (10.840,0.187) circle (1.4pt);
\fill[green!45!black] (11.920,0.000) circle (1.4pt);
\fill[green!45!black] (13.000,0.000) circle (1.4pt);
\fill[green!45!black] (14.080,0.373) circle (1.4pt);
\node[font=\scriptsize] at (8.550,-0.30) {10};
\node[font=\scriptsize] at (9.630,-0.30) {15};
\node[font=\scriptsize] at (10.710,-0.30) {20};
\node[font=\scriptsize] at (11.790,-0.30) {30};
\node[font=\scriptsize] at (12.870,-0.30) {40};
\node[font=\scriptsize] at (13.950,-0.30) {50};
\node[font=\scriptsize] at (11.200,-0.72) {graph size $|\Vs|$};
\draw[red!70,very thick] (5.1,-1.30) -- (5.6,-1.30);
\node[font=\scriptsize,right] at (5.65,-1.30) {\textsc{flash}};
\draw[blue!60,very thick] (7.0,-1.30) -- (7.5,-1.30);
\node[font=\scriptsize,right] at (7.55,-1.30) {\textsc{pro}};
\draw[green!45!black,very thick] (8.6,-1.30) -- (9.1,-1.30);
\node[font=\scriptsize,right] at (9.15,-1.30) {\textsc{gpt-5.5}};
\end{tikzpicture}
}
\vspace{-8mm}
\caption{Accuracy and false-claim rate against graph size, three frontier models on identical instances and identical prompts. \textbf{(a)} Accuracy separates the models by about ten points and declines gently. \textbf{(b)} The false-claim rate separates them by a factor of 17 pooled.}
\label{fig:scale}
\vspace{-4mm}
\end{wrapfigure}

Across graph sizes from \nVertexRange{} vertices, accuracy among the 3 models spans roughly 10 points, while the false-claim rate spans 17-fold (Figure~\ref{fig:scale}). \modelFlash{} and \modelPro{} differ by 4 accuracy points but 2-fold in false claims; \modelPro{} and \modelGPT{} differ by 5 points but 7-fold. An accuracy leaderboard would present these systems as close competitors while concealing the property that distinguishes them. The comparison is like-for-like in instances, prompts, and parsing: every false claim and correct refusal for \modelFlash{} and \modelGPT{} was read from an explicit verdict line, with no fallback parsing (Appendix~\ref{app:truncation}).

Two conclusions follow. First, soundness degradation with graph size is not a property of frontier models. The degradation is pronounced in one family but absent in another, at sizes where \modelFlash{} loses up to 12 points of accuracy, so one system's behavior does not predict another's. Second, even the best model is not sound. A false claim on a non-identifiable query cannot be captured downstream, and \modelGPT{} asserted an answer to three provably unanswerable queries out of 300: small, but not zero, whereas a certified procedure is zero by construction.

\subsection{Additional Results}
\label{sec:res-more}
{{\bf On a published graph:} We apply \benchname{} to the causal diagram of \citet{robins2000marginal}, verifying the authors' g-formula against exact ground truth and showing their identifying assumption is sufficient (Appendix~\ref{app:ex-published}, Figure~\ref{fig:robins}).
{\bf Truncation:} Excluding \modelPro{}'s truncated main pool replies changes no comparison (Appendix~\ref{app:proto-trunc}).
{\bf Small models:} Three open-weights models, from 0.5B to 3B parameters, perform at or below chance on edge classification (Appendix~\ref{app:proto-small}).
{\bf Scale:} Appendix~\ref{app:res-decouple} reports results by graph size with confidence intervals, and finds that larger graphs add mostly vertices irrelevant to the query.
{\bf Repair:} Without tools, \repairValid{} of proposed repairs are valid, and \repairMin{} of those are minimal. With \ID{} as a tool, every repair returned by \modelFlash{} is valid and minimal, including on graphs too large to enumerate all minimal repairs (Appendix~\ref{app:proto-repro}).}
\section{Conclusion}
\benchname{} turns causal identification into an evaluation framework with certified labels and verifiable grades. Beyond deciding identifiability, its theoretical results characterize structural leakage, repair non-identifiable queries, and provide guarantees for numerical grading. Using this framework, we find that frontier reasoning models decide identifiability accurately even on unseen random graphs, yet often provide no justification unless asked. Most importantly, we show that accuracy is a poor proxy for soundness, and that soundness must therefore be measured directly.
\newpage
\subsection*{AI use statement}

Following the categories in the ICLR 2027 AI Policy for Authors, we used generative AI tools for the following tasks with required disclosure: implementing methods, cleaning and reformatting data, and interpreting results. Among the recommended disclosure tasks, we used them for creating and editing software code, creating the artifact, drafting parts of the paper and editing it for readability, searching for and summarizing literature, and identifying relevant work. Portions of the manuscript were drafted with AI assistance and revised by the authors. We have reviewed all AI-assisted work, and we are explicit about the limits of that review. Every identifiability label was computed by two independently developed implementations of the \textsc{id} algorithm, which agreed on every instance across ${\sim}\oracleCallTotal{}$ calls, and both were checked against a set of instances whose verdicts are stated in the published literature and transcribed here from rendered primary sources.
 
\subsection*{Ethics statement}
 
This work involves no human subjects, no personally identifying data and no newly collected data. The published Bayesian networks and causal diagrams are drawn from the peer-reviewed literature and cited. \benchname{} is a measurement instrument for formal causal identification, intended for evaluation and not for deployment.
One finding concerns third-party software: Appendix~\ref{app:librarybug} reports two graphs on which a published causal inference library returns numerically incorrect estimands. We verified both against exact ground truth, and note there that the library's identifiability verdicts are correct in both cases, so the defect does not bear on its primary function. The API compute for the OpenAI experiments was from a personal account. The authors declare no competing interests arising from it.
 
\subsection*{Reproducibility statement}
 
The labeling procedure is deterministic given a graph and a query: every label is the output of the \textsc{id} algorithm, computed independently by two implementations that agreed on every instance (Sec.~\ref{sec:families}, Appendix~\ref{app:verification}). Instance generation, the leakage diagnostic and its remediation are specified in Sec.~\ref{sec:construction}, with full detail in Appendix~\ref{app:leakage}. Grading is specified in Sec.~\ref{sec:verifier}, including the number of structural causal models drawn per instance, the numerical tolerance, and the two restrictions under which the comparison is made. Proofs of all stated results are in Appendix~\ref{app:proofs}.
Model identifiers, pin quality and full revision hashes appear in Appendix~\ref{app:models}. Two of the three frontier endpoints are floating aliases that cannot be re-run against identical weights, and their verdicts are not stable under repetition (Appendix~\ref{app:proto-repro}). The open-weights results are pinned by revision hash and reproduce exactly.
Code, benchmark instances, per-instance model responses and the summary files behind every reported number are available at \url{https://anonymous.4open.science/r/certid-D718}.
\newpage

\bibliographystyle{plainnat}
\bibliography{references}

@book{pearl2009causality,
  title     = {Causality: Models, Reasoning, and Inference},
  author    = {Pearl, Judea},
  year      = {2009},
  edition   = {2nd},
  publisher = {Cambridge University Press},
  address   = {Cambridge, UK}
}

@article{pearl1995causal,
  title   = {Causal Diagrams for Empirical Research},
  author  = {Pearl, Judea},
  journal = {Biometrika},
  volume  = {82},
  number  = {4},
  pages   = {669--688},
  year    = {1995}
}

@inproceedings{tian2002general,
  title     = {A General Identification Condition for Causal Effects},
  author    = {Tian, Jin and Pearl, Judea},
  booktitle = {Proceedings of the 18th National Conference on
               Artificial Intelligence (AAAI)},
  pages     = {567--573},
  year      = {2002}
}

@inproceedings{shpitser2006identification,
  title     = {Identification of Joint Interventional Distributions in
               Recursive Semi-{M}arkovian Causal Models},
  author    = {Shpitser, Ilya and Pearl, Judea},
  booktitle = {Proceedings of the 21st National Conference on
               Artificial Intelligence (AAAI)},
  pages     = {1219--1226},
  year      = {2006}
}

@inproceedings{huang2006pearl,
  title     = {{P}earl's Calculus of Intervention Is Complete},
  author    = {Huang, Yimin and Valtorta, Marco},
  booktitle = {Proceedings of the 22nd Conference on Uncertainty in
               Artificial Intelligence (UAI)},
  pages     = {217--224},
  year      = {2006}
}

@article{shpitser2008complete,
  title   = {Complete Identification Methods for the Causal Hierarchy},
  author  = {Shpitser, Ilya and Pearl, Judea},
  journal = {Journal of Machine Learning Research},
  volume  = {9},
  pages   = {1941--1979},
  year    = {2008}
}

@article{perkovic2018complete,
  title   = {Complete Graphical Characterization and Construction of
             Adjustment Sets in {M}arkov Equivalence Classes of
             Ancestral Graphs},
  author  = {Perkovi{\'c}, Emilija and Textor, Johannes and
             Kalisch, Markus and Maathuis, Marloes H},
  journal = {Journal of Machine Learning Research},
  volume  = {18},
  number  = {220},
  pages   = {1--62},
  year    = {2018}
}

@article{tikka2017simplifying,
  title   = {Simplifying Probabilistic Expressions in Causal Inference},
  author  = {Tikka, Santtu and Karvanen, Juha},
  journal = {Journal of Machine Learning Research},
  volume  = {18},
  number  = {36},
  pages   = {1--30},
  year    = {2017}
}

@article{tikka2017identifying,
  title   = {Identifying Causal Effects with the {R} Package {causaleffect}},
  author  = {Tikka, Santtu and Karvanen, Juha},
  journal = {Journal of Statistical Software},
  volume  = {76},
  number  = {12},
  pages   = {1--30},
  year    = {2017}
}

@inproceedings{akbari2022minimum,
  title     = {Minimum Cost Intervention Design for Causal Effect
               Identification},
  author    = {Akbari, Sina and Etesami, Jalal and Kiyavash, Negar},
  booktitle = {Proceedings of the 39th International Conference on
               Machine Learning (ICML)},
  volume    = {162},
  pages     = {258--289},
  year      = {2022}
}

@article{akbari2023experimental,
  title   = {Experimental Design for Causal Effect Identification},
  author  = {Akbari, Sina and Etesami, Jalal and Kiyavash, Negar},
  journal = {arXiv preprint arXiv:2205.02232},
  year    = {2023}
}

@inproceedings{elahi2024fast,
  title     = {Fast Proxy Experiment Design for Causal Effect
               Identification},
  author    = {Elahi, Sepehr and Akbari, Sina and Etesami, Jalal and
               Kiyavash, Negar and Thiran, Patrick},
  booktitle = {Advances in Neural Information Processing Systems (NeurIPS)},
  year      = {2024}
}

@article{kiciman2023causal,
  title   = {Causal Reasoning and Large Language Models: Opening a New
             Frontier for Causality},
  author  = {Kiciman, Emre and Ness, Robert and Sharma, Amit and Tan, Chenhao},
  journal = {Transactions on Machine Learning Research},
  year    = {2023}
}

@article{chi2024unveiling,
  title   = {Unveiling Causal Reasoning in Large Language Models:
             Reality or Mirage?},
  author  = {Chi, Haoang and Li, He and Yang, Wenjing and Liu, Feng and
             Lan, Long and Ren, Xiaoguang and Liu, Tongliang and Han, Bo},
  journal = {Advances in Neural Information Processing Systems},
  volume  = {37},
  pages   = {96640--96670},
  year    = {2024}
}

@article{chen2025reasoning,
  title   = {Reasoning Models Don't Always Say What They Think},
  author  = {Chen, Yanda and Benton, Joe and Radhakrishnan, Ansh and
             Uesato, Jonathan and Denison, Carson and Schulman, John and
             Somani, Arushi and Hase, Peter and Wagner, Misha and
             Roger, Fabien and others},
  journal = {arXiv preprint arXiv:2505.05410},
  year    = {2025}
}

@article{he2026uncovering,
  title   = {Uncovering Hidden Correctness in {LLM} Causal Reasoning via
             Symbolic Verification},
  author  = {He, Paul and Huang, Yinya and Sachan, Mrinmaya and Jin, Zhijing},
  journal = {arXiv preprint arXiv:2601.21210},
  year    = {2026}
}

@inproceedings{shpitser2006conditional,
  title     = {Identification of Conditional Interventional Distributions},
  author    = {Shpitser, Ilya and Pearl, Judea},
  booktitle = {Proceedings of the 22nd Conference on Uncertainty in
               Artificial Intelligence (UAI)},
  pages     = {437--444},
  year      = {2006}
}

@inproceedings{jaber2019causal,
  title     = {Causal Identification under {M}arkov Equivalence:
               Completeness Results},
  author    = {Jaber, Amin and Zhang, Jiji and Bareinboim, Elias},
  booktitle = {International Conference on Machine Learning (ICML)},
  pages     = {2981--2989},
  year      = {2019}
}

@article{tikka2018enhancing,
  title   = {Enhancing Identification of Causal Effects by Pruning},
  author  = {Tikka, Santtu and Karvanen, Juha},
  journal = {Journal of Machine Learning Research},
  volume  = {18},
  number  = {194},
  pages   = {1--23},
  year    = {2018}
}

@article{jin2023cladder,
  title   = {{CLADDER}: Assessing Causal Reasoning in Language Models},
  author  = {Jin, Zhijing and Chen, Yuen and Leeb, Felix and Gresele, Luigi
             and Kamal, Ojasv and Lyu, Zhiheng and Blin, Kevin and
             Gonzalez Adauto, Fernando and Kleiman-Weiner, Max and
             Sachan, Mrinmaya and others},
  journal = {Advances in Neural Information Processing Systems},
  volume  = {36},
  pages   = {31038--31065},
  year    = {2023}
}

@inproceedings{liu2024llms,
  title     = {Are {LLM}s Capable of Data-Based Statistical and Causal
               Reasoning? {B}enchmarking Advanced Quantitative Reasoning
               with Data},
  author    = {Liu, Xiao and Wu, Zirui and Wu, Xueqing and Lu, Pan and
               Chang, Kai-Wei and Feng, Yansong},
  booktitle = {Findings of the Association for Computational Linguistics:
               ACL 2024},
  pages     = {9215--9235},
  year      = {2024}
}

@inproceedings{zhang2024can,
  title     = {Can {LLM} Graph Reasoning Generalize beyond Pattern
               Memorization?},
  author    = {Zhang, Yizhuo and Wang, Heng and Feng, Shangbin and
               Tan, Zhaoxuan and Han, Xiaochuang and He, Tianxing and
               Tsvetkov, Yulia},
  booktitle = {Findings of the Association for Computational Linguistics:
               EMNLP 2024},
  pages     = {2289--2305},
  year      = {2024}
}

@article{kirichenko2026abstentionbench,
  title   = {{A}bstention{B}ench: Reasoning {LLM}s Fail on Unanswerable
             Questions},
  author  = {Kirichenko, Polina and Ibrahim, Mark and Chaudhuri, Kamalika
             and Bell, Samuel J},
  journal = {Advances in Neural Information Processing Systems},
  volume  = {38},
  year    = {2026}
}

@article{wu2025llm,
  title   = {{LLM} Cannot Discover Causality, and Should Be Restricted to
             Non-Decisional Support in Causal Discovery},
  author  = {Wu, Xingyu and Yu, Kui and Wu, Jibin and Tan, Kay Chen},
  journal = {arXiv preprint arXiv:2506.00844},
  year    = {2025}
}

@article{zecevic2023causal,
  title   = {Causal Parrots: Large Language Models May Talk Causality But
             Are Not Causal},
  author  = {Ze{\v{c}}evi{\'c}, Matej and Willig, Moritz and
             Dhami, Devendra Singh and Kersting, Kristian},
  journal = {Transactions on Machine Learning Research},
  year    = {2023}
}

@article{manski1990nonparametric,
  title   = {Nonparametric Bounds on Treatment Effects},
  author  = {Manski, Charles F},
  journal = {The American Economic Review},
  volume  = {80},
  number  = {2},
  pages   = {319--323},
  year    = {1990}
}

@article{balke1997bounds,
  title   = {Bounds on Treatment Effects from Studies with Imperfect
             Compliance},
  author  = {Balke, Alexander and Pearl, Judea},
  journal = {Journal of the American Statistical Association},
  volume  = {92},
  number  = {439},
  pages   = {1171--1176},
  year    = {1997}
}

@article{cinelli2020making,
  title   = {Making Sense of Sensitivity: Extending Omitted Variable Bias},
  author  = {Cinelli, Carlos and Hazlett, Chad},
  journal = {Journal of the Royal Statistical Society: Series B},
  volume  = {82},
  number  = {1},
  pages   = {39--67},
  year    = {2020}
}

@article{vanderweele2017sensitivity,
  title   = {Sensitivity Analysis in Observational Research:
             Introducing the {E}-Value},
  author  = {VanderWeele, Tyler J and Ding, Peng},
  journal = {Annals of Internal Medicine},
  volume  = {167},
  number  = {4},
  pages   = {268--274},
  year    = {2017}
}

@book{rosenbaum2002observational,
  title     = {Observational Studies},
  author    = {Rosenbaum, Paul R},
  edition   = {2nd},
  publisher = {Springer},
  year      = {2002}
}

@inproceedings{wang2024causalbench,
  title     = {{C}ausal{B}ench: A Comprehensive Benchmark for Evaluating Causal
               Reasoning Capabilities of Large Language Models},
  author    = {Wang, Zeyu},
  booktitle = {Proceedings of the 10th SIGHAN Workshop on Chinese Language
               Processing (SIGHAN-10)},
  pages     = {143--151},
  year      = {2024}
}

@inproceedings{xu2026noisycausal,
  title     = {{N}oisy{C}ausal: A Benchmark for Evaluating Causal Reasoning
               Under Structured Noise},
  author    = {Xu, Zhi and Fu, Yun},
  booktitle = {Proceedings of the 64th Annual Meeting of the Association for
               Computational Linguistics (Volume 1: Long Papers)},
  pages     = {39500--39513},
  year      = {2026}
}

@article{lee2026econcausal,
  title   = {{E}con{C}ausal: A Context-Aware Economic Reasoning Benchmark for
             Large Language Models},
  author  = {Lee, Donggyu and Yun, Hyeok and Cha, Meeyoung and Park, Sungwon
             and Park, Sangyoon and Kim, Jihee},
  journal = {arXiv preprint arXiv:2510.07231},
  year    = {2026}
}

@article{wang2026case,
  title   = {{CASE}: Causal Alignment and Structural Enforcement for Improving
             Chain-of-Thought Faithfulness},
  author  = {Wang, Ziming and Yao, Yinghua and Huang, Changwu and Tang, Ke and
             Yao, Xin},
  journal = {arXiv preprint arXiv:2607.18820},
  year    = {2026}
}

@inproceedings{lanham2023measuring,
  title   = {Measuring Faithfulness in Chain-of-Thought Reasoning},
  author  = {Lanham, Tamera and Chen, Anna and Radhakrishnan, Ansh and
             Steiner, Benoit and Denison, Carson and Hernandez, Danny and
             Li, Dustin and Durmus, Esin and Hubinger, Evan and Kernion, Jackson
             and others},
  journal = {arXiv preprint arXiv:2307.13702},
  year    = {2023}
}

@article{turpin2023language,
  title   = {Language Models Don't Always Say What They Think: Unfaithful
             Explanations in Chain-of-Thought Prompting},
  author  = {Turpin, Miles and Michael, Julian and Perez, Ethan and
             Bowman, Samuel R},
  journal = {Advances in Neural Information Processing Systems},
  volume  = {36},
  pages   = {74952--74965},
  year    = {2023}
}

@article{sawarni2026causalreasoning,
  title   = {{C}ausal{R}easoning{B}enchmark: A Real-World Benchmark for
             Disentangled Evaluation of Causal Identification and
             Estimation},
  author  = {Sawarni, Ayush and Tan, Jiyuan and Syrgkanis, Vasilis},
  journal = {arXiv preprint arXiv:2602.20571},
  year    = {2026}
}

@article{yang2024critical,
  title   = {A Critical Review of Causal Reasoning Benchmarks for Large
             Language Models},
  author  = {Yang, Linying and Wang, Vik Shirvaikar and Alaa, Ahmed and
             Blei, David},
  journal = {arXiv preprint arXiv:2407.08029},
  year    = {2024}
}

@article{robins2000marginal,
  title={Marginal structural models and causal inference in epidemiology},
  author={Robins, James M and Hernan, Miguel Angel and Brumback, Babette},
  journal={Epidemiology},
  volume={11},
  number={5},
  pages={550--560},
  year={2000}
}

@article{ryalen2026causal,
  title={On causal inference with marked point process data},
  author={Ryalen, P{\aa}l Christie and Stensrud, Mats Julius and R{\o}ysland, Kjetil},
  journal={arXiv preprint arXiv:2604.12977},
  year={2026}
}

@article{tennant2021use,
  title={Use of directed acyclic graphs (DAGs) to identify confounders in applied health research: review and recommendations},
  author={Tennant, Peter WG and Murray, Eleanor J and Arnold, Kellyn F and Berrie, Laurie and Fox, Matthew P and Gadd, Sarah C and Harrison, Wendy J and Keeble, Claire and Ranker, Lynsie R and Textor, Johannes and others},
  journal={International journal of epidemiology},
  volume={50},
  number={2},
  pages={620--632},
  year={2021},
  publisher={Oxford University Press}
}

@article{binz2023using,
  title={Using cognitive psychology to understand GPT-3},
  author={Binz, Marcel and Schulz, Eric},
  journal={Proceedings of the National Academy of Sciences},
  volume={120},
  number={6},
  pages={e2218523120},
  year={2023},
  publisher={National Academy of Sciences}
}

@article{hagendorff2023machine,
  title={Machine psychology},
  author={Hagendorff, Thilo and Dasgupta, Ishita and Binz, Marcel and Chan, Stephanie CY and Lampinen, Andrew and Wang, Jane X and Akata, Zeynep and Schulz, Eric},
  journal={arXiv preprint arXiv:2303.13988},
  year={2023}
}

@article{coda2024cogbench,
  title={CogBench: a large language model walks into a psychology lab},
  author={Coda-Forno, Julian and Binz, Marcel and Wang, Jane X and Schulz, Eric},
  journal={arXiv preprint arXiv:2402.18225},
  year={2024}
}

@article{parasuraman1997humans,
  title={Humans and automation: Use, misuse, disuse, abuse},
  author={Parasuraman, Raja and Riley, Victor},
  journal={Human factors},
  volume={39},
  number={2},
  pages={230--253},
  year={1997},
  publisher={SAGE Publications Sage CA: Los Angeles, CA}
}

@article{buccinca2021trust,
  title={To trust or to think: cognitive forcing functions can reduce overreliance on AI in AI-assisted decision-making},
  author={Bu{\c{c}}inca, Zana and Malaya, Maja Barbara and Gajos, Krzysztof Z},
  journal={Proceedings of the ACM on Human-computer Interaction},
  volume={5},
  number={CSCW1},
  pages={1--21},
  year={2021},
  publisher={ACM New York, NY, USA}
}

@inproceedings{schemmer2023appropriate,
  title={Appropriate reliance on AI advice: Conceptualization and the effect of explanations},
  author={Schemmer, Max and Kuehl, Niklas and Benz, Carina and Bartos, Andrea and Satzger, Gerhard},
  booktitle={Proceedings of the 28th international conference on intelligent user interfaces},
  pages={410--422},
  year={2023}
}

@article{kadavath2022language,
  title={Language models (mostly) know what they know},
  author={Kadavath, Saurav and Conerly, Tom and Askell, Amanda and Henighan, Tom and Drain, Dawn and Perez, Ethan and Schiefer, Nicholas and Hatfield-Dodds, Zac and DasSarma, Nova and Tran-Johnson, Eli and others},
  journal={arXiv preprint arXiv:2207.05221},
  year={2022}
}

@article{lambert20243,
  title={3: Pushing frontiers in open language model post-training. CoRR, abs/2411.15124, 2024. doi: 10.48550},
  author={Lambert, N and Morrison, J and Pyatkin, V and Huang, S and Ivison, H and Brahman, F and Miranda, LJV and Liu, A and Dziri, N and Lyu, S and others},
  journal={arXiv preprint ARXIV.2411.15124},
  pages={9},
  year={2024}
}

@article{shao2024deepseekmath,
  title={Deepseekmath: Pushing the limits of mathematical reasoning in open language models},
  author={Shao, Zhihong and Wang, Peiyi and Zhu, Qihao and Xu, Runxin and Song, Junxiao and Bi, Xiao and Zhang, Haowei and Zhang, Mingchuan and Li, YK and Wu, Yang and others},
  journal={arXiv preprint arXiv:2402.03300},
  year={2024}
}

@article{guo2025deepseek,
  title={Deepseek-r1: Incentivizing reasoning capability in llms via reinforcement learning},
  author={Guo, Daya and Yang, Dejian and Zhang, Haowei and Song, Junxiao and Wang, Peiyi and Zhu, Qihao and Xu, Runxin and Zhang, Ruoyu and Ma, Shirong and Bi, Xiao and others},
  journal={arXiv preprint arXiv:2501.12948},
  year={2025}
}

@article{skalse2022defining,
  title={Defining and characterizing reward gaming},
  author={Skalse, Joar and Howe, Nikolaus and Krasheninnikov, Dmitrii and Krueger, David},
  journal={Advances in neural information processing systems},
  volume={35},
  pages={9460--9471},
  year={2022}
}

@incollection{verma2022equivalence,
  title={Equivalence and synthesis of causal models},
  author={Verma, Thomas S and Pearl, Judea},
  booktitle={Probabilistic and causal inference: The works of Judea Pearl},
  pages={221--236},
  year={2022}
}

@article{schwartz1980fast,
  title={Fast probabilistic algorithms for verification of polynomial identities},
  author={Schwartz, Jacob T},
  journal={Journal of the ACM (JACM)},
  volume={27},
  number={4},
  pages={701--717},
  year={1980},
  publisher={ACM New York, NY, USA}
}

@inproceedings{zippel1979probabilistic,
  title={Probabilistic algorithms for sparse polynomials},
  author={Zippel, Richard},
  booktitle={International symposium on symbolic and algebraic manipulation},
  pages={216--226},
  year={1979},
  organization={Springer}
}

@article{binder1997adaptive,
  title={Adaptive probabilistic networks with hidden variables},
  author={Binder, John and Koller, Daphne and Russell, Stuart and Kanazawa, Keiji},
  journal={Machine Learning},
  volume={29},
  number={2},
  pages={213--244},
  year={1997},
  publisher={Springer}
}

@inproceedings{beinlich1989alarm,
  title={The ALARM monitoring system: A case study with two probabilistic inference techniques for belief networks},
  author={Beinlich, Ingo A and Suermondt, Henri Jacques and Chavez, R Martin and Cooper, Gregory F},
  booktitle={AIME 89: Second European Conference on Artificial Intelligence in Medicine, London, August 29th--31st 1989. Proceedings},
  pages={247--256},
  year={1989},
  organization={Springer}
}

@inproceedings{onisko2003probabilistic,
  title={Probabilistic causal models in medicine: Application to diagnosis of liver disorders},
  author={Onisko, Agnieszka},
  booktitle={Ph. D. dissertation, Inst. Biocybern. Biomed. Eng., Polish Academy Sci., Warsaw, Poland},
  year={2003}
}

@article{spiegelhalter1992learning,
  title={Learning in probabilistic expert systems},
  author={Spiegelhalter, David J},
  journal={Bayesian statistics},
  volume={4},
  pages={447--465},
  year={1992},
  publisher={Oxford University Press}
}

@article{lauritzen1988local,
  title={Local computations with probabilities on graphical structures and their application to expert systems},
  author={Lauritzen, Steffen L and Spiegelhalter, David J},
  journal={Journal of the Royal Statistical Society: Series B (Methodological)},
  volume={50},
  number={2},
  pages={157--194},
  year={1988},
  publisher={Wiley Online Library}
}

@inproceedings{behnam2024graph,
  title={Graph neural network causal explanation via neural causal models},
  author={Behnam, Arman and Wang, Binghui},
  booktitle={European conference on computer vision},
  pages={410--427},
  year={2024},
  organization={Springer}
}

@article{behnam2026measure,
  title={Measure-Theoretic Anti-Causal Representation Learning},
  author={Behnam, Arman and Wang, Binghui},
  journal={Advances in Neural Information Processing Systems},
  volume={38},
  pages={61375--61431},
  year={2025}
}

@inproceedings{behnam2024causal,
  title={Causal Explanation from Mild Cognitive Impairment Progression using Graph Neural Networks},
  author={Behnam, Arman and Garg, Muskan and Liu, Xingyi and Vassilaki, Maria and {St. Sauver}, Jennifer and Petersen, Ronald C and Sohn, Sunghwan},
  booktitle={2024 IEEE International Conference on Bioinformatics and Biomedicine (BIBM)},
  pages={6349--6355},
  year={2024},
  organization={IEEE}
}

@inproceedings{behnam2026reasoning,
  title     = {Structure-agnostic Causal Representation Learning},
  author    = {Behnam, Arman and Wang, Binghui},
  booktitle = {Advances in Neural Information Processing Systems},
  year      = {2026},
  url       = {https://openreview.net/forum?id=8gtLRyYY45}
}

@article{scutari2010learning,
  title={Learning Bayesian networks with the bnlearn R package},
  author={Scutari, Marco},
  journal={Journal of statistical software},
  volume={35},
  number={1},
  pages={1--22},
  year={2010}
}

\clearpage
\part*{Appendix}
\appendix
\label{sec:appendix}

\startcontents[appendix]
\printcontents[appendix]{}{1}{
\setcounter{tocdepth}{2}}

\clearpage
\section{Notation and Formal Definitions}
\label{app:formal}
 
This appendix supports the body in two ways. Appendix~\ref{app:notation} collects every symbol, so a reader can check a definition without searching Sections~\ref{sec:background}--\ref{sec:benchmark}. Appendix~\ref{app:hedges} states the graphical characterization of non-identifiability. The body uses it only through \ID{}, except in Theorem~\ref{thm:destroy} and its proof. The hedge definitions are prior work.

\subsection{Notation}
\label{app:notation}
 
Table~\ref{tab:notation} collects every symbol used in the paper. Uppercase denotes random variables, lowercase their realizations, and boldface sets of variables. Graph-theoretic notation follows \citet{pearl2009causality}, and identification notation follows \citet{shpitser2006identification}.
 
\begin{table}[h]
\centering\small
\caption{Symbols used in the paper, with the section in which each is
introduced.}
\label{tab:notation}
\begin{tabular}{lll}
\toprule
Symbol & Meaning & Introduced \\
\midrule
\multicolumn{3}{l}{\emph{graphs and models}} \\
$\Gr$ & acyclic directed mixed graph over $\Vs$ & Sec.~\ref{sec:background} \\
$\Vs$ & observed variables of $\Gr$ & Sec.~\ref{sec:background} \\
$\Xs, \Ys$ & disjoint treatment and outcome sets & Sec.~\ref{sec:background} \\
$\Bi{\Gr}$ & bi-directed edges of $\Gr$ & Sec.~\ref{sec:background} \\
$\An(\Ys)$ & ancestors of $\Ys$ & Sec.~\ref{sec:background} \\
$\Gsub{\Xs}$ & $\Gr$ with edges into $\Xs$ removed & App.~\ref{app:hedges} \\
$\Scm$, $\Mods{\Gr}$ & an SCM; the SCMs whose dependencies lie in $\Gr$ & Sec.~\ref{sec:background} \\
$P(\Vs)$ & observational distribution & Sec.~\ref{sec:background} \\
$\theta$ & parameters of a discrete SCM & Thm.~\ref{thm:grading} \\
$F^{\leftrightarrow}$ & bi-directed edges of a subgraph $F$ & App.~\ref{app:proofs} \\
\midrule
\multicolumn{3}{l}{\emph{identification}} \\
$\Qy = \Px{\mathbf{x}}{\mathbf{y}}$ & query $P(\mathbf{y} \mid \Do{\mathbf{x}})$ & Sec.~\ref{sec:background} \\
$\ID(\Gr,\Qy)$ & identification algorithm; returns $\est$ or $\FAIL$ & Sec.~\ref{sec:background} \\
$\est$ & a closed-form estimand & Sec.~\ref{sec:background} \\
$\Rs$ & root set of a C-forest & App.~\ref{app:hedges} \\
$\mathcal{H}(\Gr,\Qy)$ & hedges for $\Qy$ in $\Gr$ over all $\Xs'\!\subseteq\!\Xs$, $\Ys'\!\subseteq\!\Ys$ & Thm.~\ref{thm:destroy} \\
\midrule
\multicolumn{3}{l}{\emph{problem and evaluation}} \\
$\ell$ & certified label $\ID(\Gr,\Qy)$ & Def.~\ref{def:certified} \\
$r = (v, \hat\est)$ & response: verdict and returned estimand & Def.~\ref{def:evalproblem} \\
$k$, $\tau$ & SCM draws per instance; numerical tolerance & Def.~\ref{def:verified} \\
$\mathcal{S}, \mathcal{I}, \mathcal{N}$ & scored, identifiable and non-identifiable instances & Sec.~\ref{sec:metrics} \\
\midrule
\multicolumn{3}{l}{\emph{construction and repair}} \\
$p_{\mathrm{dir}}, p_{\mathrm{bi}}$ & directed and bi-directed edge probabilities & Def.~\ref{def:random-family} \\
$D$, $L$ & a published DAG and the latent set hidden from it & Def.~\ref{def:semi-family} \\
$\Aug \subseteq \Bi{\Gr}$ & augmentation: bi-directed edges assumed absent & Sec.~\ref{sec:repair-labels} \\
$\Gr \gminus \Aug$ & $\Gr$ with the edges of $\Aug$ deleted & Sec.~\ref{sec:repair-labels} \\
$\Adm(\Gr,\Qy)$ & augmentations under which $\Qy$ becomes identifiable & Sec.~\ref{sec:repair-labels} \\
$\Frontier(\Gr,\Qy)$ & $\subseteq$-minimal elements of $\Adm(\Gr,\Qy)$ & Sec.~\ref{sec:repair-labels} \\
\bottomrule
\end{tabular}
\end{table}
 
\subsection{C-components, C-forests and hedges}
\label{app:hedges}
 
The body needs hedges only for Theorem~\ref{thm:destroy}, which states when an augmentation destroys a hedge. Its proof uses the fact that most conditions in the definition below survive the deletion of a bi-directed edge. The three definitions and the criterion are stated here in the form given by~\citet{shpitser2006identification}.
 
\begin{definition}[C-component; \citealp{tian2002general}] A set $\mathbf{C} \subseteq \Vs$ is a \emph{C-component} of $\Gr$ if every pair of vertices in $\mathbf{C}$ is connected by a path consisting entirely of bi-directed edges.
\end{definition}
 
\begin{definition}[C-forest; \citealp{shpitser2006identification}]
Let $F$ be a subgraph of $\Gr$, and let its root set $\Rs$ be the vertices of $F$ that have no children in $F$. $F$ is an \emph{$\Rs$-rooted C-forest} if the bi-directed edges of $F$ connect all its vertices and every vertex of $F$ has at most one child in $F$.
\end{definition}
 
\begin{definition}[Hedge; \citealp{shpitser2006identification}]
\label{def:hedge}
Let $F$ and $F'$ be $\Rs$-rooted C-forests with $F' \subseteq F$, $F \cap \Xs \neq \emptyset$, $F' \cap \Xs = \emptyset$, and $F \subseteq \An(\Ys)_{\GX}$. Then $F, F'$ form a \emph{hedge} for $\Px{\mathbf{x}}{\mathbf{y}}$ in $\Gr$.
\end{definition}
 
\begin{priorresult}
$\Px{\mathbf{x}}{\mathbf{y}}$ is identifiable from $P(\Vs)$ in $\Gr$ if and only if there is no hedge for $\Px{\mathbf{x}'}{\mathbf{y}'}$ in $\Gr$ for any $\Xs' \subseteq \Xs$ and $\Ys' \subseteq \Ys$~\citep{shpitser2006identification}.
\end{priorresult}
 
The criterion quantifies over all subsets of $\Xs$ and $\Ys$, so a single graph may contain many hedges and enumerating them is expensive. Theorem~\ref{thm:destroy} therefore takes the hedges as given.
\section{Extended Related Work}
\label{app:related}

This appendix places our work in five literatures: identification given a graph, what to do when identification fails, benchmarks for causal reasoning, chain-of-thought faithfulness, and studies of causal reasoning in language models.

\subsection{Identification given a graph}

Given a causal graph, the question of which interventional quantities are recoverable from observational data is settled. Do-calculus provides the rewrite rules~\citep{pearl1995causal} and is complete for interventional identification~\citep{huang2006pearl}. \ID{} decides the question constructively and returns a hedge on failure~\citep{tian2002general, shpitser2006identification, shpitser2008complete}, with extensions to conditional effects~\citep{shpitser2006conditional} and to Markov equivalence classes~\citep{jaber2019causal}. Returned expressions can be simplified and their graphs pruned~\citep{tikka2017simplifying, tikka2018enhancing}, and valid adjustment sets are completely characterized~\citep{perkovic2018complete}. We use these results without modification. The completeness of \ID{} makes our labels theorems, and its polynomial running time lets us generate instances at arbitrary scale.

\subsection{When identification fails}
\label{sec:related-fail}

Two responses exist. The first repairs the graph. \citet{akbari2022minimum} pose minimum-cost intervention design, show it to be NP-hard, and give an exact algorithm and a logarithmic-factor approximation by casting it as a minimum hitting set over hedges: an intervention set restores identifiability exactly when it hits every hedge. \citet{elahi2024fast} show the number of minimal hedges can be exponential and recast the problem as weighted MAX-SAT, an integer program, and submodular maximization. \citet{akbari2023experimental} extend the problem to experimental design. The second response accepts the failure and quantifies its consequences: partial identification returns bounds~\citep{manski1990nonparametric, balke1997bounds}, and sensitivity analysis describes how conclusions erode as an assumption weakens~\citep{rosenbaum2002observational, cinelli2020making, vanderweele2017sensitivity}. Our repair task differs from the first response in what it changes. Deleting a bi-directed edge asserts that a confounder is absent, while an intervention removes the confounder's influence by setting a variable directly. Theorem~\ref{thm:destroy} shows that this changes what restores identifiability: an assumption set must cut the bi-directed edges of every hedge, where an intervention set must hit its vertices. We do not study the resulting optimization problem; we use the characterization to score repair responses exactly.

\subsection{Benchmarks for causal reasoning}
\label{sec:related-bench}

\textsc{cladder} verbalizes symbolic causal problems across the three rungs of the ladder of causation and solves each with an inference engine to obtain ground truth~\citep{jin2023cladder}. \textsc{qrdata} pairs causal questions with real data tables~\citep{liu2024llms}; \textsc{causalbench} probes each scenario from several directions to reduce success by chance~\citep{wang2024causalbench}, and \textsc{noisycausal} generates instances from ground-truth graphs and injects structured noise, including latent confounders~\citep{xu2026noisycausal}. In each case the ground truth is a value computed from a fully specified model, and every question posed is answerable by construction. \textsc{econcausal} evaluates context-dependent causal claims drawn from empirical economics and reports a finding adjacent to ours:  offered an explicit \emph{Unknown} option with no identifying context, models select that option under $10\%$ of the time and commit to a directional sign~\citep{lee2026econcausal}. Their unanswerable cases arise from underspecification, while ours are non-identifiable as a theorem. That difference lets us measure a false-claim rate against ground truth, where they measure a tendency to over-commit. On grading, \citet{he2026uncovering} address the same problem by checking symbolically whether a generated expression is derivable from the graph under do-calculus. We grade numerically instead, by evaluating the expression on an SCM whose interventional distribution is known exactly. Numerical grading accepts a correct expression written in any form. \textsc{abstentionbench} evaluates when models should decline to answer~\citep{kirichenko2026abstentionbench}, with unanswerable instances fixed by annotation where ours are decided by a complete algorithm.

Two recent lines converge on our motivation from different directions. \citet{yang2024critical} survey causal reasoning benchmarks and observe that many can be solved by retrieving domain knowledge, which questions whether those benchmarks measure what they claim. The random family of Sec.~\ref{sec:families} removes that route by construction. Closer still, \textsc{crb}~\citep{sawarni2026causalreasoning} argues that scoring a single numerical output conflates identification with estimation, and separates the two by requiring a structured research design alongside a point estimate on real datasets. We share the diagnosis and differ in two respects: what identification means, and where the ground truth comes from. Their specifications name a design (instrumental variables, difference-in-differences, regression discontinuity) and a control set, curated from published papers. Ours is graphical identifiability, decided by \ID{} and graded by numerical evaluation. Their labels are curated; ours are computed. The two are complementary, and neither subsumes the other.

\subsection{Chain-of-thought faithfulness}
\label{sec:related-cot}

A parallel literature asks whether a generated reasoning trace reflects the computation that produced the answer. Models fail to verbalize cues that demonstrably determined their output~\citep{turpin2023language, chen2025reasoning}. Faithfulness is measured by perturbing the trace and observing whether the answer changes~\citep{lanham2023measuring}, and methods such as \textsc{case} intervene on training and inference to force the answer through the trace~\citep{wang2026case}. All of this presupposes that a trace exists. Sec.~\ref{sec:res-warrant} reports that on this task a trace frequently does not exist. Under a default prompt, two of three frontier models return every correct refusal as a bare verdict, so the question of faithfulness does not arise until a trace is requested.

\subsection{Language models and causal reasoning}

Behavioral studies attribute measured causal competence to knowledge learned from text, attached to variable names, and find little evidence of structural analysis~\citep{kiciman2023causal, zecevic2023causal}. \citet{chi2024unveiling} report sharp degradation on corpora published after the training cutoff. \citet{zhang2024can} find that graph reasoning fails to transfer under controlled distribution shift, and \citet{wu2025llm} argue that language models should be confined to non-decisional roles in causal discovery. Sec.~\ref{sec:res-verdict} reports results that sit uneasily with the strongest form of these claims, measured on a task where neither variable names nor memorized structure is available.
\section{Properties of the Certification Procedure}
\label{app:properties}

This appendix builds on three results from the body. Identifiability is monotone under edge deletion (Proposition~\ref{prop:mono}). Grading refutes a wrong estimand almost surely (Theorem~\ref{thm:grading}). An augmentation restores identifiability exactly when the augmentation destroys every hedge (Theorem~\ref{thm:destroy}). This appendix records what follows from them. The consequences answer four questions: how large the frontier can be and which edges it uses, how expensive membership is, how much of the grading guarantee survives our implementation's approximations, and what the reported accuracy and leakage license. Nothing here is required to follow the body, and all proofs are deferred to Appendix~\ref{app:proofs}.

\subsection{Structure and cost of the restoring set}
\label{app:properties-frontier}

Proposition~\ref{prop:mono} makes $\Adm(\Gr,\Qy)$ an up-set, so its minimal elements cannot contain one another.

\begin{corollary}[Antichain bound]
\label{cor:antichain}
$\Frontier(\Gr,\Qy)$ is an anti-chain in the Boolean lattice on $\Bi{\Gr}$, so with $m = |\Bi{\Gr}|$,
\[   |\Frontier(\Gr,\Qy)| \;\le\; \binom{m}{\lfloor m/2 \rfloor}.\]
\end{corollary}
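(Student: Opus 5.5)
The plan is to derive the anti-chain property directly from the definition of $\Frontier(\Gr,\Qy)$, and then to apply Sperner's theorem to obtain the bound. Nothing about hedges or \ID{} is needed beyond Definition~\ref{def:restoring}.

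\textbf{Anti-chain.} By Definition~\ref{def:restoring}, $\Frontier(\Gr,\Qy)=\min_{\subseteq}\Adm(\Gr,\Qy)$ is the set of inclusion-minimal elements of $\Adm(\Gr,\Qy)$, which is a family of subsets of $\Bi{\Gr}$. Suppose $\Aug_1,\Aug_2 \in \Frontier(\Gr,\Qy)$ with $\Aug_1 \subsetneq \Aug_2$. Since $\Aug_1 \in \Adm(\Gr,\Qy)$, the set $\Aug_2$ has a proper subset in $\Adm(\Gr,\Qy)$, which contradicts the minimality of $\Aug_2$. Hence no two distinct members of $\Frontier(\Gr,\Qy)$ are comparable, and $\Frontier(\Gr,\Qy)$ is an anti-chain in the Boolean lattice $2^{\Bi{\Gr}}$.

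The set of minimal elements of any family of sets is an anti-chain, so this step does not use upward closure. Proposition~\ref{prop:mono} matters only for the interpretation: it guarantees that $\Frontier$ is non-empty and that it determines $\Adm$ as its up-closure. I would note this in a single sentence.

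\textbf{Bound.} Identify $\Bi{\Gr}$ with $\{1,\dots,m\}$. Sperner's theorem states that every anti-chain in $2^{[m]}$ has at most $\binom{m}{\lfloor m/2\rfloor}$ elements, which gives the stated inequality.

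For self-containment I would sketch the LYM inequality. Count the maximal chains $\emptyset\subset\{e_1\}\subset\cdots\subset[m]$, of which there are $m!$. A fixed set of size $s$ lies on exactly $s!(m-s)!$ of them. A maximal chain meets an anti-chain in at most one set, so
\[
\sum_{\Aug\in\Frontier}\frac{1}{\binom{m}{|\Aug|}}\le 1 .
\]
Since $\binom{m}{s}\le\binom{m}{\lfloor m/2\rfloor}$ for every $s$, the bound follows.

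There is no real obstacle here; the only care needed is to invoke Sperner correctly. I might also remark that the bound is attained combinatorially by the middle layer, but I would not claim that any graph realizes it, since that would require constructing hedge structures.
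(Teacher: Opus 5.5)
Your proposal is correct and matches the paper's proof: the anti-chain property follows directly from inclusion-minimality, with no need for upward closure, and the bound is then Sperner's theorem. Your LYM sketch only makes self-contained the Sperner step that the paper cites.
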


The bound is exponential in $m$ and is attained by the family of all $\lfloor m/2 \rfloor$-subsets, so nothing about the lattice structure prevents the frontier from being large. Exhaustive enumeration becomes infeasible in Table~\ref{tab:repairscale} for a different reason: it searches all $2^m$ subsets of $\Bi{\Gr}$, and $m$ grows with the graph. The oracle-guided search queries only the subsets it needs. Enumeration pays for the whole anti-chain; a search that queries only the subsets it needs does not. Relatedly, \citet{elahi2024fast} exhibit graphs whose minimal hedges are exponentially many.

Theorem~\ref{thm:destroy} reduces membership to a connectivity question once the hedges are known.

\begin{corollary}[Membership without calling \ID{}]
\label{cor:membership}
Given $\mathcal{H}(\Gr,\Qy)$, deciding $\Aug \in \Adm(\Gr,\Qy)$ costs
$O\!\left(|\mathcal{H}| \cdot (|\Vs| + |\Bi{\Gr}|)\right)$ and no call to \ID{}.
\end{corollary}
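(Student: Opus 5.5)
The plan is to apply Theorem~\ref{thm:destroy} twice: once to turn membership into a condition on each hedge, and once to turn that condition into a graph-connectivity test. By its first part, $\Aug \in \Adm(\Gr,\Qy)$ if and only if $\Aug$ destroys every $(F,F') \in \mathcal{H}(\Gr,\Qy)$. By its second part, $\Aug$ destroys $(F,F')$ if and only if deleting $\Aug$ disconnects the bi-directed edges of $F$ or of $F'$. Neither condition mentions \ID{}. So once $\mathcal{H}$ is given, the decision procedure is:
\begin{itemize}
\item loop over the hedges;
\item for each hedge, run two connectivity tests on bi-directed subgraphs;
\item accept if and only if every hedge fails at least one test.
\end{itemize}
This settles the ``no call to \ID{}'' clause.

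For the cost, fix a hedge $(F,F')$ and build the undirected graph on the vertex set of $F$ whose edges are $F^{\leftrightarrow} \setminus \Aug$. Decide whether it is connected by a single BFS or DFS. Then repeat on the vertex set of $F'$ with edges $F'^{\leftrightarrow} \setminus \Aug$. Since $F' \subseteq F \subseteq \Gr$, each traversal touches at most $|\Vs|$ vertices and $|\Bi{\Gr}|$ edges. Filtering out edges of $\Aug$ costs $O(1)$ per edge if $\Aug$ is stored once, up front, as a hash set or a Boolean array indexed by $\Bi{\Gr}$. That preprocessing costs $O(|\Bi{\Gr}|)$ and is absorbed into the bound. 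So each hedge costs $O(|\Vs| + |\Bi{\Gr}|)$, and summing over $\mathcal{H}$ gives $O(|\mathcal{H}| \cdot (|\Vs| + |\Bi{\Gr}|))$. We can stop early at the first hedge that stays connected in both $F$ and $F'$, which shows $\Aug \notin \Adm$.

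The main point to check is that ``disconnects the bi-directed edges of $F$'' in Theorem~\ref{thm:destroy} means exactly this test: $F^{\leftrightarrow}\setminus\Aug$ no longer spans $F$'s vertices as one connected graph. This matches the C-forest requirement that bi-directed edges connect all vertices of $F$. I would state this reading explicitly, noting that the other C-forest and hedge conditions (the at-most-one-child condition, the root set, the $\Xs$-intersection conditions, ancestrality) involve only directed edges or vertex sets, which $\Aug$ leaves unchanged. A minor detail is the representation of a hedge: it should be given by its vertex sets and its bi-directed edge lists, so that extracting $F^{\leftrightarrow}$ does not require rescanning $\Gr$. If instead only vertex sets are given, scanning $\Bi{\Gr}$ once per hedge still stays within $O(|\Bi{\Gr}|)$ per hedge.
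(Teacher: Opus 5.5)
Your proposal is correct and follows the paper's proof. Theorem~\ref{thm:destroy} reduces membership in $\Adm(\Gr,\Qy)$ to checking, for each hedge, whether $F^{\leftrightarrow}\setminus\Aug$ fails to connect $V(F)$ or $F'^{\leftrightarrow}\setminus\Aug$ fails to connect $V(F')$, and each of these two checks is a breadth-first search costing $O(|\Vs| + |\Bi{\Gr}|)$. Your extra bookkeeping (constant-time lookup for $\Aug$ and the explicit reading of ``disconnects'') matches the paper's proof of Theorem~\ref{thm:destroy} and simply makes the per-test bound explicit.
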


We do not use this corollary in our evaluation, because obtaining $\mathcal{H}(\Gr,\Qy)$ is itself the expensive step: \ID{} returns one witness, not all hedges. The corollary matters for a different reason. It locates where the cost of re[Membership without an oracle call]pair actually sits. Deciding whether a proposed augmentation works is cheap given the obstructions; finding the obstructions is not.

\begin{remark}[Covering by cuts, not by elements]
\label{rem:cuts}
By Theorem~\ref{thm:destroy}, $\Aug \in \Adm(\Gr,\Qy)$ exactly when $\Aug$ contains a disconnecting set of $F^{\leftrightarrow}$ or of $F'^{\leftrightarrow}$ for every $(F,F') \in \mathcal{H}(\Gr,\Qy)$. The covering objects are therefore edge cuts, not single edges. That distinguishes our formulation from the intervention design of \citet{akbari2022minimum}, where an intervention set restores identifiability exactly when the set \emph{hits} every hedge.
\end{remark}

In one case, the edge formulation also becomes a hitting-set problem.

\begin{corollary}[Reduction to hitting set in the tree case]
\label{cor:hardness}
If the bidirected structure of every hedge in $\mathcal{H}(\Gr,\Qy)$ is a tree, then $\Aug \in \Adm(\Gr,\Qy)$ exactly when $\Aug$ meets $F^{\leftrightarrow}$ for every $(F,F') \in \mathcal{H}(\Gr,\Qy)$. Finding a minimum-cardinality restoring augmentation is then an instance of minimum hitting set over the family $\{F^{\leftrightarrow}\}$.
\end{corollary}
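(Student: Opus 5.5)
The plan is to derive the statement directly from Theorem~\ref{thm:destroy}, using the fact that a connected graph is minimally connected exactly when it is a tree. First I fix a hedge $(F,F') \in \mathcal{H}(\Gr,\Qy)$ and note that $F' \subseteq F$, and that $F$ and $F'$ are C-forests. Hence $F'^{\leftrightarrow} \subseteq F^{\leftrightarrow}$, and each of these edge sets spans and connects its vertex set. By hypothesis $F^{\leftrightarrow}$ is a tree on $V(F)$; I read ``the bidirected structure of every hedge'' as the bidirected structure of $F$, and also record what happens if $F'$ is meant. Theorem~\ref{thm:destroy} says that $\Aug$ destroys $(F,F')$ if and only if deleting $\Aug$ disconnects $F^{\leftrightarrow}$ or $F'^{\leftrightarrow}$. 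So it suffices to show that, in the tree case, this disjunction is equivalent to $\Aug \cap F^{\leftrightarrow} \neq \emptyset$.

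The forward direction is immediate. If $\Aug \cap F^{\leftrightarrow} = \emptyset$, then $F^{\leftrightarrow}$ survives intact and stays connected. Since $F'^{\leftrightarrow} \subseteq F^{\leftrightarrow}$, the set $F'^{\leftrightarrow}$ also survives intact and stays connected, so neither is disconnected. For the converse, suppose $e \in \Aug \cap F^{\leftrightarrow}$. Removing any edge from a tree disconnects it, so $F^{\leftrightarrow} \setminus \Aug$ no longer connects $V(F)$, and $\Aug$ destroys the hedge.

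If the tree hypothesis applies only to $F'$, I instead argue through $F'$. Meeting $F'^{\leftrightarrow}$ suffices, and failing to meet $F^{\leftrightarrow}$ rules out destruction by the argument above. To keep the clean statement, I will adopt the convention that the hypothesis concerns $F^{\leftrightarrow}$. I will also note that a subforest of a tree that spans a connected vertex subset $V(F')$ is again a tree, so the hypothesis on $F$ propagates to $F'$.

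Quantifying over all hedges and invoking the first part of Theorem~\ref{thm:destroy} gives the membership statement: $\Aug \in \Adm(\Gr,\Qy)$ if and only if $\Aug \cap F^{\leftrightarrow} \neq \emptyset$ for every hedge. A minimum-cardinality restoring augmentation is therefore exactly a minimum-cardinality subset of $\Bi{\Gr}$ hitting the family $\{F^{\leftrightarrow}\}$, which is an instance of minimum hitting set over the ground set $\Bi{\Gr}$.

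The main obstacle is not the combinatorics but the interface with Theorem~\ref{thm:destroy}. Its disjunction involves $F'$, so I must confirm that meeting $F'^{\leftrightarrow}$ but not $F^{\leftrightarrow}$ is impossible, which holds because $F'^{\leftrightarrow} \subseteq F^{\leftrightarrow}$. I must also be precise about whether ``disconnects'' refers to the bi-directed edges restricted to the same vertex set, which is the reading under which single-edge removal from a tree suffices.
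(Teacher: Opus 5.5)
Your proposal is correct and follows essentially the same route as the paper: it applies Theorem~\ref{thm:destroy}, uses the fact that deleting any edge of the tree $F^{\leftrightarrow}$ disconnects $V(F)$, uses $F'^{\leftrightarrow} \subseteq F^{\leftrightarrow}$ to show the $F'$ branch of the disjunction adds nothing, and then reads off the hitting-set formulation. Your side discussion of a tree hypothesis on $F'$ alone is not needed, since the paper, like the convention you adopt, places the hypothesis on $F^{\leftrightarrow}$.
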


We state this as a reduction and not as a hardness result, and the distinction matters. Corollary~\ref{cor:hardness} exhibits our problem as an instance of hitting set, which bounds it from above and says nothing about its difficulty. Establishing NP-hardness requires the opposite direction: a construction sending an arbitrary hitting-set instance to a graph whose hedge family realizes it. \citet{akbari2022minimum} give such a construction for the vertex formulation, where an intervention set restores identifiability exactly when it hits every hedge. We have not carried it over to edge deletion and make no hardness claim for the edge formulation.

Two further consequences of Theorem~\ref{thm:destroy} locate the edges a minimal repair can use and bound its size.

\begin{corollary}[Minimal repairs lie inside $\An(\Ys)$]
\label{cor:ancestral}
No element of $\Frontier(\Gr,\Qy)$ contains a bi-directed edge with an endpoint outside $\An(\Ys)$.
\end{corollary}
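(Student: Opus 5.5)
The plan is to show that any restoring set $\Aug$ that uses a bi-directed edge $e$ with an endpoint outside $\An(\Ys)$ stays restoring after $e$ is dropped. Once that holds, no inclusion-minimal element of $\Adm(\Gr,\Qy)$ can contain such an edge, which is exactly the statement of Corollary~\ref{cor:ancestral}. The tool is Theorem~\ref{thm:destroy}: $\Aug \in \Adm(\Gr,\Qy)$ if and only if $\Aug$ destroys every hedge in $\mathcal{H}(\Gr,\Qy)$, and destroying $(F,F')$ means that deleting $\Aug$ disconnects the bi-directed edges of $F$ or of $F'$. Whether $\Aug$ destroys a given hedge therefore depends only on $\Aug \cap (F^{\leftrightarrow} \cup F'^{\leftrightarrow})$.

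First I would show that every hedge in $\mathcal{H}(\Gr,\Qy)$ lives inside $\An(\Ys)$. By Definition~\ref{def:hedge}, a hedge for $\Px{\mathbf{x}'}{\mathbf{y}'}$ has $F \subseteq \An(\Ys')_{\GX'}$, where ancestry is computed in $\Gr$ with edges into $\Xs'$ removed. Removing edges can only shrink ancestor sets, so $\An(\Ys')_{\GX'} \subseteq \An(\Ys') \subseteq \An(\Ys)$, using $\Ys' \subseteq \Ys$. Since $F' \subseteq F$, both forests have all their vertices in $\An(\Ys)$. Every bi-directed edge of $F$ or $F'$ is an edge of the subgraph between two of its vertices, so both endpoints lie in $\An(\Ys)$.

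Next I would fix $\Aug \in \Frontier(\Gr,\Qy)$ and suppose, for contradiction, that some $e \in \Aug$ has an endpoint outside $\An(\Ys)$. By the previous step, $e \notin F^{\leftrightarrow} \cup F'^{\leftrightarrow}$ for every hedge $(F,F') \in \mathcal{H}(\Gr,\Qy)$. Hence $\Aug \setminus \{e\}$ deletes exactly the same bi-directed edges from each $F$ and $F'$ as $\Aug$ does, so it disconnects the same hedges. By Theorem~\ref{thm:destroy}, $\Aug \setminus \{e\} \in \Adm(\Gr,\Qy)$, which contradicts the inclusion-minimality of $\Aug$.

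The main obstacle is making sure Theorem~\ref{thm:destroy} is applied with hedges taken in the original graph $\Gr$, as its statement specifies, rather than in $\Gr \gminus \Aug$. The ancestry bound is then computed in $\Gr$ and is unaffected by which bi-directed edges are deleted. It also helps that deleting bi-directed edges never changes ancestor sets, because ancestry uses directed edges only. A second point to check is that "the bi-directed edges of $F$" means edges of $F$ itself, not all edges of $\Gr$ among its vertices. Either reading keeps both endpoints inside $F \subseteq \An(\Ys)$, so the argument holds in both cases.
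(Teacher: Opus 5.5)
Your proof is correct and follows the paper's argument. You confine every hedge's bi-directed edges to $\An(\Ys)$, then remove the offending edge $e$ from $\Aug$ and use Theorem~\ref{thm:destroy} to contradict minimality. The one difference is in the first step: you read $F \subseteq \An(\Ys')$ directly off Definition~\ref{def:hedge}, whereas the paper uses only that the roots of $F$ are ancestors of $\Ys'$ and reaches every other vertex by following its unique child in the C-forest, so its route also covers Shpitser and Pearl's original formulation, which requires only the root set $\Rs$ to lie in $\An(\Ys)_{\GX}$.
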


Enlarging the graph outside $\An(\Ys)$ therefore adds no candidate edge to any minimal repair.

For a subgraph $F$, let $\lambda(F)$ be the least number of edges of $F^{\leftrightarrow}$ whose deletion leaves $V(F)$ disconnected, with $\lambda(F) = \infty$ when $F$ has a single vertex.

\begin{corollary}[Lower bound on repair size]
\label{cor:lowerbound}
Every $\Aug \in \Adm(\Gr,\Qy)$ satisfies $|\Aug| \ge \min\{\lambda(F), \lambda(F')\}$ for each $(F,F') \in \mathcal{H}(\Gr,\Qy)$. Over a family of hedges whose sets $F^{\leftrightarrow}$ are pairwise disjoint, the bounds add.
\end{corollary}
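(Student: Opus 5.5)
The plan is to derive the bound directly from Theorem~\ref{thm:destroy}, which characterizes membership in $\Adm(\Gr,\Qy)$ hedge by hedge. Fix $\Aug \in \Adm(\Gr,\Qy)$ and a hedge $(F,F') \in \mathcal{H}(\Gr,\Qy)$. By Theorem~\ref{thm:destroy}, $\Aug$ destroys $(F,F')$. By the second clause of the theorem, this means deleting $\Aug$ disconnects the bi-directed edges of $F$ or those of $F'$. In the first case, $\Aug \cap F^{\leftrightarrow}$ is a set of edges of $F^{\leftrightarrow}$ whose removal leaves $V(F)$ disconnected. By the definition of $\lambda$, this gives $|\Aug| \ge |\Aug \cap F^{\leftrightarrow}| \ge \lambda(F)$. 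The second case gives $|\Aug| \ge \lambda(F')$ in the same way. Either way, $|\Aug| \ge \min\{\lambda(F),\lambda(F')\}$.

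Two points need care. First, only the edges of $\Aug$ inside $F^{\leftrightarrow}$ affect the connectivity of $F$'s bi-directed structure, so restricting to $\Aug \cap F^{\leftrightarrow}$ is legitimate. Second, the single-vertex convention $\lambda=\infty$ must be consistent. A single-vertex C-forest cannot be disconnected, so Theorem~\ref{thm:destroy} forces destruction through the other forest. The $\min$ then correctly selects the finite value. If both forests were single-vertex, the hedge could not be destroyed, which contradicts $\Aug \in \Adm$, so that case is vacuous. Every C-forest here has connected bi-directed structure by definition, so $\lambda$ is well defined.

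For additivity, take hedges $(F_1,F_1'),\dots,(F_r,F_r')$ whose sets $F_i^{\leftrightarrow}$ are pairwise disjoint. This step is the main obstacle. The natural per-hedge witness is $\Aug\cap F_i^{\leftrightarrow}$, but the cut may instead occur in $F_i'$. Since $F_i' \subseteq F_i$, we have $F_i'^{\leftrightarrow} \subseteq F_i^{\leftrightarrow}$, so in either case the cut lies in $\Aug\cap F_i^{\leftrightarrow}$. That set then has size at least $\min\{\lambda(F_i),\lambda(F_i')\}$. The sets $\Aug \cap F_i^{\leftrightarrow}$ are pairwise disjoint subsets of $\Aug$. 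Summing gives $|\Aug| \ge \sum_i \min\{\lambda(F_i),\lambda(F_i')\}$.
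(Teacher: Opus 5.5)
Your proof is correct and follows the paper's own argument essentially step for step. You use the second clause of Theorem~\ref{thm:destroy} to obtain a cut of $F^{\leftrightarrow}$ or $F'^{\leftrightarrow}$ inside $\Aug$, then use $F_i'^{\leftrightarrow} \subseteq F_i^{\leftrightarrow}$ to place each per-hedge witness inside the pairwise disjoint sets $\Aug \cap F_i^{\leftrightarrow}$, and sum. Your extra remark on the convention $\lambda = \infty$ is a harmless addition the paper omits; the case where both forests are single vertices cannot arise anyway, since $F$ contains a vertex of $\Xs'$ that is not in the nonempty $F'$.
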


The bound needs no enumeration. Each $\lambda$ is a minimum edge cut, computable in polynomial time, and the bound holds for any known hedge, including the one \ID{} returns on failure.

\subsection{Scope of the grading guarantee}
\label{app:properties-grading}

Theorem~\ref{thm:grading} is stated with $\tau = 0$ and exact arithmetic. Our implementation uses neither. Three consequences delimit what survives.

\begin{corollary}[One draw suffices in exact arithmetic]
\label{cor:onedraw}
Under the hypotheses of Theorem~\ref{thm:grading} with $\tau = 0$, an estimand that disagrees with $\Qy$ on the parametrized family is refuted by a single draw with probability one. Drawing $k > 1$ SCMs therefore adds nothing to the guarantee.
\end{corollary}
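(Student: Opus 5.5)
The plan is to derive Corollary~\ref{cor:onedraw} directly from part (ii) of Theorem~\ref{thm:grading}, and then turn that almost-sure statement into a claim about $k$ draws. First I would fix $\Gr$, an identifiable $\Qy$, and a returned $\hat\est$ that disagrees with $\Qy$ on the parametrized family. Disagreement here means that the rational function
\[
  D(\theta) = \hat\est(P_\theta(\Vs)) - \Qy_\theta
\]
is not identically zero on the parameter region $\Theta$. With $\tau = 0$ and exact arithmetic, a draw $\theta$ refutes $\hat\est$ exactly when $D(\theta) \neq 0$.

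Second, I would make the measure-zero step explicit, since this is the only real content. Write $D = N/M$ with polynomials $N, M$ on the region where the conditioning events have positive probability, so $M \neq 0$ there. Because $D \not\equiv 0$, the numerator $N$ is a nonzero polynomial. The zero set of a nonzero polynomial on $\mathbb{R}^d$ has Lebesgue measure zero; this follows by induction on $d$ using Fubini, the continuous analogue of Schwartz--Zippel. The parameter region is a product of simplices, which is a full-dimensional set in its affine coordinates, so "not identically zero on the region" implies $N$ is nonzero as a polynomial. Any distribution absolutely continuous on the region therefore assigns probability zero to $\{D = 0\}$, so $\Pr[\text{refute on one draw}] = 1$. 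This is exactly part (ii) of Theorem~\ref{thm:grading}, which I may cite directly. I would repeat the argument only to make clear that the set of parameters where $D$ is undefined is also null, or is excluded by the positivity assumption in Definition~\ref{def:identifiability}.

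Third, for the claim that $k>1$ adds nothing, take draws $\theta_1,\dots,\theta_k$, each absolutely continuous. The event that the procedure fails to refute is $\bigcap_j \{D(\theta_j)=0\}$. This event is contained in $\{D(\theta_1)=0\}$, which already has probability zero. So the refutation probability is $1$ for every $k \ge 1$ and does not increase with $k$. By part (i), no draw can refute a correct $\hat\est$, so the false-refutation probability is $0$ for every $k$ as well. Both error probabilities are therefore independent of $k$, which is the sense in which extra draws add nothing to the guarantee.

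The main obstacle is the measure-zero step: ensuring $D$ is a genuine rational function on a full-dimensional domain. The simplex constraints reduce the dimension, so I would reparametrize by free coordinates per table row. I would also handle points where a denominator vanishes. I expect positivity to make these points a null set cut out by a nonzero polynomial, so they do not affect the almost-sure statement.
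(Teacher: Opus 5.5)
Your proposal is correct and matches the paper's proof: Theorem~\ref{thm:grading}(ii) gives probability-one refutation from a single draw, and the event that all $k$ draws fail to refute is contained in the event that the first draw fails, so it has probability zero for every $k \ge 1$. Re-deriving the measure-zero step, and using part (i) to note that false refutations have probability zero for every $k$, are harmless additions the paper's argument does not need.
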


In exact arithmetic, one draw therefore suffices. We draw $k = \nSCMDraws{}$ because of floating-point arithmetic: with $\tau > 0$, a wrong estimand can agree with the target within tolerance on one draw by coincidence, and a second draw makes that unlikely. The second draw adds robustness to finite precision, not certainty about the estimand.

\begin{remark}[The guarantee degrades continuously in $\tau$]
\label{rem:tau}
Let $D(\theta) = \hat\est(P_\theta(\Vs)) - \Qy_\theta$. The sets $\{\theta : |D(\theta)| \le \tau\}$ decrease as $\tau \downarrow 0$ and intersect in $\{\theta : D(\theta) = 0\}$, which has measure zero by Theorem~\ref{thm:grading}. Their measure therefore tends to zero with $\tau$: a wrong estimand escapes refutation on a positive but vanishing fraction of parameter space.
\end{remark}

\begin{remark}[The comparison certifies a slice]
\label{rem:slice}
Our implementation compares at one intervention value (Sec.~\ref{sec:verifier}), so Theorem~\ref{thm:grading} certifies agreement on $P(\mathbf{y} \mid \Do{x})$ at that value. Certifying the full interventional distribution requires the comparison at each value in the domain of $\Xs$, multiplying the cost of the enumeration by $|\mathrm{dom}(\Xs)|$. That strengthening is the obvious one, and we did not run it.
\end{remark}

\begin{remark}[A refutation is a certificate]
\label{rem:certificate}
A refutation exhibits a parameter $\theta$ with $|\hat\est(P_\theta(\Vs)) - \Qy_\theta| > \tau$. Given $\theta$, anyone can recompute both quantities, in exact rational arithmetic if desired, and confirm the refutation without trusting our implementation. Verification has no such certificate; its strength is Theorem~\ref{thm:grading}(ii).
\end{remark}

\subsection{What the reported numbers license}
\label{app:properties-metrics}

\begin{corollary}[Accuracy does not determine the false-claim rate]
\label{cor:acc-fc}
Let a pool have $|\mathcal{I}| = |\mathcal{N}|$. Let a system have false-claim rate $f$ and miss rate $m$, the fraction of $\mathcal{I}$ answered with a miss. Its accuracy is $a = 1 - (f + m)/2$. Hence, for $a \ge 1/2$, the false-claim rate satisfies $0 \le f \le 2(1 - a)$, and both bounds are attained.
\end{corollary}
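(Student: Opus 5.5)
The argument is a counting identity followed by two explicit constructions. Write $n = |\mathcal{I}| = |\mathcal{N}|$, so $|\mathcal{S}| = 2n$, and restrict to scored responses as in Definition~\ref{def:evalproblem}. On $\mathcal{N}$ each scored response is either a correct refusal or a false claim. On $\mathcal{I}$ each is either a correct claim or a miss.

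Hence the number of errors is $fn + mn$, and the number of correct responses is $2n - (f+m)n$. By the definition of accuracy in Section~\ref{sec:metrics},
\[
a = \frac{2n - (f+m)n}{2n} = 1 - \frac{f+m}{2}.
\]
This gives the identity. Rearranging, $f = 2(1-a) - m$. Since $m \ge 0$ we get $f \le 2(1-a)$, and $f \ge 0$ holds by definition.

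The hypothesis $a \ge 1/2$ ensures the upper bound is compatible with $f \le 1$, because $2(1-a) \le 1$. It also ensures that the value $m = 2(1-a)$, used for the lower bound below, is a genuine rate in $[0,1]$.

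\textbf{Attainment.} Fix $a \ge 1/2$ such that $2n(1-a)$ is an integer, which is needed for $a$ to be realizable at all on a pool of this size. Two systems realize the extremes.

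\emph{Upper bound.} Let the system answer every identifiable instance correctly, so $m=0$. On the non-identifiable side, let it make false claims on exactly $2n(1-a)$ of the $n$ instances. This count is at most $n$ because $a \ge 1/2$. Then $f = 2(1-a)$ and the accuracy is $a$.

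\emph{Lower bound.} Swap the roles. The system refuses correctly on all of $\mathcal{N}$, so $f = 0$, and misses exactly $2n(1-a)$ instances of $\mathcal{I}$. This yields accuracy $a$ with $f=0$.

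The only subtle point is this integrality and range bookkeeping. Neither construction needs to produce any estimand that must survive Definition~\ref{def:verified}: verdict scoring alone determines accuracy, so estimand grading plays no role here.
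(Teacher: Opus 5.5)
Your proposal is correct and follows the paper's own proof: you count wrong responses as $fn + mn$ to obtain $1 - a = (f+m)/2$, use $m \ge 0$ for the upper bound, and show attainment by assigning all $(1-a)|\mathcal{S}|$ errors to misses or all to false claims, with $a \ge 1/2$ guaranteeing that the resulting rate is at most $1$. Your integrality remark only makes explicit what the paper leaves implicit, namely that $(1-a)|\mathcal{S}|$ is an integer for any accuracy actually realized on the pool.
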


At $a = 0.98$, the false-claim rate can lie anywhere in $[0, 0.04]$; at $a = 0.90$, anywhere in $[0, 0.20]$. Accuracy therefore cannot rank systems by soundness, and Sec.~\ref{sec:res-decouple} observes this spread in practice.

\begin{remark}[Reported leakage bounds a classifier, not the graph]
\label{rem:leak-bound}
Leakage (Definition~\ref{def:leakage}) is the AUC of one classifier. A better classifier on graph features alone may separate real from spurious edges further. Exceeding the reported leakage is therefore necessary, and not sufficient, for a system to be credited with using the variable names.
\end{remark}
\section{Proofs}
\label{app:proofs}

This appendix proves every result stated in the body and in Appendix~\ref{app:properties}, in the order they appear. Theorem~\ref{thm:destroy} and its consequences use the definitions of Appendix~\ref{app:hedges}.

\subsection{Monotonicity and minimization}

\begin{proof}[Proof of Proposition~\ref{prop:mono}]
Let $\Gr'$ be $\Gr$ with any set of edges deleted. By definition, $\Mods{\Gr}$ contains every SCM whose structural dependencies are contained in $\Gr$. Every dependency permitted by $\Gr'$ is permitted by $\Gr$, so $\Mods{\Gr'} \subseteq \Mods{\Gr}$. If $\Qy$ is identifiable in $\Gr$ there is a functional $f$ with $\Qy_\Scm = f(P_\Scm(\Vs))$ for every $\Scm \in \Mods{\Gr}$; the same $f$ serves every $\Scm \in \Mods{\Gr'}$, so $\Qy$ is identifiable in $\Gr'$.

For upward closure, let $\Aug \in \Adm(\Gr,\Qy)$ and $\Aug \subseteq \Aug'$. Writing $\Gr \gminus \Aug' = (\Gr \gminus \Aug) \gminus (\Aug' \setminus \Aug)$ and applying the above to $\Gr \gminus \Aug$ gives $\Aug' \in \Adm(\Gr,\Qy)$.

For non-emptiness, $\Gr \gminus \Bi{\Gr}$ has no bi-directed edges, so every SCM it represents is Markovian and every interventional distribution is identifiable by truncated factorization~\citep{pearl2009causality}. Hence $\Bi{\Gr} \in \Adm(\Gr,\Qy)$, the set is non-empty, and since it is a finite up-set it is the upward closure of its minimal elements $\Frontier(\Gr,\Qy)$.
\end{proof}

\begin{proof}[Proof of Lemma~\ref{lem:shrink}]
Write $\Aug_0 = \Aug$ and let $\Aug_{i}$ be the set after the $i$th element has been considered, so $\Aug_{i}$ is $\Aug_{i-1}$ with that element deleted when the result lies in $\Adm(\Gr,\Qy)$ and equal to $\Aug_{i-1}$ otherwise. Every $\Aug_i$ lies in $\Adm(\Gr,\Qy)$ by construction, so the result $\Aug^\star = \Aug_{|\Aug|}$ does.

Suppose $\Aug^\star \notin \Frontier(\Gr,\Qy)$. Then some $\Aug'' \subsetneq \Aug^\star$ lies in $\Adm(\Gr,\Qy)$; choose $e \in \Aug^\star \setminus \Aug''$. Since $\Aug'' \subseteq \Aug^\star \setminus \{e\}$, upward closure gives $\Aug^\star \setminus \{e\} \in \Adm(\Gr,\Qy)$. Let $i$ be the step at which $e$ was considered. Because elements are only ever removed, $\Aug^\star \subseteq \Aug_{i-1}$, so $\Aug^\star \setminus \{e\} \subseteq \Aug_{i-1} \setminus \{e\}$ and upward closure gives $\Aug_{i-1} \setminus \{e\} \in \Adm(\Gr,\Qy)$. The rule would then have deleted $e$, contradicting $e \in \Aug^\star$. Each element is tested once, so the procedure makes exactly $|\Aug|$ calls to \ID{}.
\end{proof}

\subsection{Hedge destruction and its consequences}

Theorem~\ref{thm:destroy} rests on the fact that deleting bi-directed edges cannot create a hedge, which we state separately because both directions of the theorem use it.

\begin{lemma}[Hedge persistence]
\label{lem:persist}
Let $\Aug \subseteq \Bi{\Gr}$. If $(F,F')$ is a hedge for $\Px{\mathbf{x}'}{\mathbf{y}'}$ in $\Gr \gminus \Aug$, it is a hedge for the same query in $\Gr$.
\end{lemma}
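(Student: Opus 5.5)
The plan is to check each clause of Definition~\ref{def:hedge} in turn. Each clause constrains either the vertex sets and directed structure, which $\gminus\Aug$ leaves unchanged, or the bi-directed edges, and those can only gain edges when we pass from $\Gr\gminus\Aug$ back to $\Gr$. Throughout we treat a subgraph $F$ of $\Gr\gminus\Aug$ as a subgraph of $\Gr$, since $\Gr\gminus\Aug$ is itself a subgraph of $\Gr$ on the same vertex set $\Vs$ with the same directed edges.

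First, the C-forest conditions. Suppose $F$ is an $\Rs$-rooted C-forest in $\Gr\gminus\Aug$. The root set of $F$ and the condition that each vertex of $F$ has at most one child in $F$ depend only on the directed edges of $F$ itself, and these are unchanged. Connectivity of $V(F)$ under the bi-directed edges of $F$ is likewise a property of $F$, and $F$ is unchanged. So $F$ is an $\Rs$-rooted C-forest in $\Gr$, and the same holds for $F'$. The containment $F'\subseteq F$ and the conditions $F\cap\Xs'\neq\emptyset$ and $F'\cap\Xs'=\emptyset$ are purely set-theoretic, so they transfer verbatim.

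Second, the ancestral condition $F\subseteq\An(\Ys')_{\Gsub{\Xs'}}$. Ancestry is defined through directed paths only. Deleting edges into $\Xs'$ removes only directed edges (and possibly bi-directed ones, depending on convention), and this happens identically in both graphs. Hence $(\Gr\gminus\Aug)_{\overline{\Xs'}}$ and $\Gsub{\Xs'}$ have the same directed edges and the same ancestor sets, and the condition transfers.

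The only step needing care is the first, namely whether ``subgraph'' means an induced subgraph. If $F$ is taken to be induced on its vertex set, then in $\Gr$ the subgraph $F$ gains the edges of $\Aug$ with both endpoints in $V(F)$. These added edges are all bi-directed. They leave the root set and the child counts untouched, and adding edges cannot disconnect a graph, so the C-forest property still holds. Either reading of ``subgraph'' therefore gives the lemma, and we will note both cases explicitly.
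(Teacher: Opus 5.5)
Your proposal is correct and follows the paper's proof. Both check the hedge clauses one by one. The root set, the forest condition, the vertex-set conditions and ancestry in $\Gsub{\Xs'}$ depend only on vertices and directed edges, which $\Gr \gminus \Aug$ and $\Gr$ share, while bi-directed connectivity can only be preserved when passing back to $\Gr$. Your explicit treatment of the induced-subgraph reading is a small extra precision that the paper's one-line C-component argument leaves implicit.
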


\begin{proof}
The vertices of $F$ are joined by bi-directed paths in $\Gr \gminus \Aug$, whose bi-directed edges are a subset of those of $\Gr$, so they remain a C-component in $\Gr$; the same holds for $F'$. The forest condition and the root set $\Rs$ are determined by the directed edges of $F$, which the two graphs share. The conditions $F' \subseteq F$, $F \cap \Xs' \neq \emptyset$ and $F' \cap \Xs' = \emptyset$ involve vertex sets only. Finally, ancestry is a directed notion and $\Gr \gminus \Aug$ and $\Gr$ have the same directed edges, so $\An(\Ys')_{(\Gr \gminus \Aug)_{\overline{\Xs'}}} = \An(\Ys')_{\Gsub{\Xs'}}$ and the containment is preserved.
\end{proof}

\begin{proof}[Proof of Theorem~\ref{thm:destroy}]
Write $F^{\leftrightarrow}$ for the bi-directed edges of $F$.

\emph{Second clause first.} Since $\Gr \gminus \Aug$ has a subset of the edges of $\Gr$, $F$ and $F'$ are also subgraphs of $\Gr$, and their bi-directed edges still connect their vertices. Every condition in the definition of a hedge except the two C-component conditions depends only on vertices and directed edges, so those conditions hold for $(F \setminus \Aug, F' \setminus \Aug)$ exactly as for $(F,F')$. The C-component conditions hold if and only if $F^{\leftrightarrow} \setminus \Aug$ connects $V(F)$ and $F'^{\leftrightarrow} \setminus \Aug$ connects $V(F')$. Hence $(F \setminus \Aug, F' \setminus \Aug)$ fails to be a hedge exactly when deleting $\Aug$ disconnects $F^{\leftrightarrow}$ or $F'^{\leftrightarrow}$.

\emph{First clause.} If $\Aug \in \Adm(\Gr,\Qy)$, then $\Qy$ is identifiable in $\Gr \gminus \Aug$, and the hedge criterion gives $\mathcal{H}(\Gr \gminus \Aug, \Qy) = \emptyset$. In particular, no pair $(F \setminus \Aug, F' \setminus \Aug)$ is a hedge in $\Gr \gminus \Aug$, so $\Aug$ destroys every element of $\mathcal{H}(\Gr,\Qy)$. Conversely, suppose $\Aug$ destroys every element of $\mathcal{H}(\Gr,\Qy)$, and let $(F,F') \in \mathcal{H}(\Gr \gminus \Aug, \Qy)$. By Lemma~\ref{lem:persist}, $(F,F') \in \mathcal{H}(\Gr,\Qy)$. Since $F$ and $F'$ are subgraphs of $\Gr \gminus \Aug$, neither contains an edge of $\Aug$, so $(F \setminus \Aug, F' \setminus \Aug) = (F,F')$, which is a hedge in $\Gr \gminus \Aug$. This contradicts $\Aug$ destroying $(F,F')$. Hence $\mathcal{H}(\Gr \gminus \Aug, \Qy) = \emptyset$, and the hedge criterion gives identifiability.
\end{proof}

\begin{proof}[Proof of Corollary~\ref{cor:antichain}]
If $\Aug, \Aug' \in \Frontier(\Gr,\Qy)$ with $\Aug \subsetneq \Aug'$ then $\Aug'$ is not $\subseteq$-minimal in $\Adm(\Gr,\Qy)$, a contradiction; so no element of $\Frontier(\Gr,\Qy)$ properly contains another and the family is an antichain in the Boolean lattice on $\Bi{\Gr}$. Sperner's theorem bounds the size of an antichain on an $m$-element ground set by $\binom{m}{\lfloor m/2 \rfloor}$.
\end{proof}

\begin{proof}[Proof of Corollary~\ref{cor:membership}]
By Theorem~\ref{thm:destroy}, $\Aug \in \Adm(\Gr,\Qy)$ exactly when, for every $(F,F') \in \mathcal{H}(\Gr,\Qy)$, deleting $\Aug$ disconnects the bi-directed edges of $F$ or of $F'$. Each test is a connectivity query on a subgraph of $\Gr$ and runs in $O(|\Vs| + |\Bi{\Gr}|)$ by breadth-first search; there are two per hedge.
\end{proof}

\begin{proof}[Proof of Corollary~\ref{cor:hardness}]
Suppose the bi-directed structure of every $F$ in $\mathcal{H}(\Gr,\Qy)$ is a tree. Removing any edge of a tree disconnects it, so $\Aug$ disconnects $F^{\leftrightarrow}$ exactly when $\Aug$ meets its edge set. Since $F' \subseteq F$, we have $F'^{\leftrightarrow} \subseteq F^{\leftrightarrow}$, so if $\Aug$ disconnects $F'^{\leftrightarrow}$, it meets $F^{\leftrightarrow}$. By Theorem~\ref{thm:destroy}, $\Aug$ destroys $(F,F')$ exactly when it meets the bi-directed edge set of $F$, and $\Adm(\Gr,\Qy)$ is the family of sets hitting every set in $\{F^{\leftrightarrow} : (F,F') \in \mathcal{H}(\Gr,\Qy)\}$. Minimizing $|\Aug|$ over that family is an instance of minimum hitting set.
\end{proof}

\begin{proof}[Proof of Corollary~\ref{cor:ancestral}]
Let $(F,F') \in \mathcal{H}(\Gr,\Qy)$ be a hedge for $\Px{\mathbf{x}'}{\mathbf{y}'}$. By Definition~\ref{def:hedge}, the roots of $F$ are ancestors of $\Ys' \subseteq \Ys$. Every vertex of $F$ is an ancestor of a root: each vertex has at most one child in $F$, so following children in the acyclic $F$ ends at a root. Hence both endpoints of every edge in $F^{\leftrightarrow}$ and $F'^{\leftrightarrow}$ lie in $\An(\Ys)$.

Let $\Aug \in \Frontier(\Gr,\Qy)$, and suppose $e \in \Aug$ has an endpoint outside $\An(\Ys)$. Then $e$ lies in no $F^{\leftrightarrow}$ or $F'^{\leftrightarrow}$. Whether deleting $\Aug$ disconnects $F^{\leftrightarrow}$ depends only on $\Aug \cap F^{\leftrightarrow} = (\Aug \setminus \{e\}) \cap F^{\leftrightarrow}$, and likewise for $F'$. By Theorem~\ref{thm:destroy}, $\Aug \setminus \{e\}$ therefore destroys every hedge that $\Aug$ destroys, which is all of $\mathcal{H}(\Gr,\Qy)$, so $\Aug \setminus \{e\} \in \Adm(\Gr,\Qy)$. This contradicts the minimality of $\Aug$.
\end{proof}

\begin{proof}[Proof of Corollary~\ref{cor:lowerbound}]
By Theorem~\ref{thm:destroy}, $\Aug$ destroys $(F,F')$, so deleting $\Aug$ disconnects $F^{\leftrightarrow}$ or $F'^{\leftrightarrow}$. In the first case, $\Aug \cap F^{\leftrightarrow}$ is a set of edges of $F^{\leftrightarrow}$ whose deletion disconnects $V(F)$, so $|\Aug| \ge |\Aug \cap F^{\leftrightarrow}| \ge \lambda(F)$. The second case gives $|\Aug| \ge \lambda(F')$ in the same way.

For hedges $(F_1,F_1'), \dots, (F_r,F_r')$ with pairwise disjoint $F_i^{\leftrightarrow}$, apply the same argument to each. Since $F_i'^{\leftrightarrow} \subseteq F_i^{\leftrightarrow}$, it gives $|\Aug \cap F_i^{\leftrightarrow}| \ge \min\{\lambda(F_i), \lambda(F_i')\}$. Summing over the disjoint sets gives $|\Aug| \ge \sum_{i} \min\{\lambda(F_i), \lambda(F_i')\}$.
\end{proof}

\subsection{Grading}

Theorem~\ref{thm:grading} is stated relative to the family of SCMs the verifier draws from, so we fix that family first: discrete SCMs over $\Vs$ whose structural dependencies realize $\Gr$, with one latent variable per bi-directed edge and a fixed number of levels per variable, parametrized by their free conditional probability table entries. Each $\theta \in \Theta$ determines a model $\Scm_\theta \in \Mods{\Gr}$, so $\Theta$ indexes a subfamily of $\Mods{\Gr}$ and not all of it. Say that $\hat\est$ \emph{denotes} $\Qy$ on a class if $\hat\est(P_\Scm(\Vs)) = \Qy_\Scm$ for every model in that class.

\begin{proof}[Proof of Theorem~\ref{thm:grading}]
\emph{(i)} Suppose the procedure refutes, so $\hat\est(P_{\theta}(\Vs)) \neq \Qy_{\theta}$ for some drawn $\theta$. Since $\Scm_\theta \in \Mods{\Gr}$, this is a single model of $\Gr$ on which $\hat\est$ and $\Qy$ disagree, so $\hat\est$ does not denote $\Qy$ on $\Mods{\Gr}$.

\emph{(ii)} Write $D(\theta) = \hat\est(P_\theta(\Vs)) - \Qy_\theta$. In a discrete SCM with latents materialized, $P_\theta(\Vs)$ is obtained by summing products of conditional probability table entries over the latent configurations, so each of its entries is a polynomial in $\theta$; $\Qy_\theta$ is obtained the same way from the truncated factorization and is likewise polynomial. The parsed form of $\hat\est$ is built from entries of $P_\theta(\Vs)$ by addition, multiplication and division, so $\hat\est(P_\theta(\Vs))$ is a rational function of $\theta$ and $D$ is rational.

Suppose $\hat\est$ does not denote $\Qy$ on $\Theta$. Then $D$ is not identically zero, so writing $D = N/M$ with $N, M$ polynomials, we have $N \not\equiv 0$ on the region where $M \neq 0$. The zero set of a polynomial that is not identically zero has Lebesgue measure zero, so $\{\theta \in \Theta : D(\theta) = 0\}$ has measure zero, as does $\{\theta: M(\theta) = 0\}$, the set on which some conditioning event in $\hat\est$ has probability zero, and the expression is undefined. A draw absolutely continuous with respect to Lebesgue measure on $\Theta$ therefore assigns probability zero to both, and with $\tau = 0$ and exact arithmetic the draw refutes with probability one.
\end{proof}

\begin{remark}[What part (ii) does not say]
\label{rem:family}
Part (ii) is stated with respect to $\Theta$. An expression that denotes $\Qy$ on $\Theta$ while differing from it elsewhere in $\Mods{\Gr}$ is never refuted, and we do not establish that $\Theta$ separates $\Mods{\Gr}$: that would require showing the fixed latent cardinality is sufficient to witness every disagreement, which we have not done. Part (i) is unconditional.
\end{remark}

\begin{proof}[Proof of Corollary~\ref{cor:onedraw}]
By Theorem~\ref{thm:grading}(ii) the event that one draw fails to refute a $\hat\est$ not denoting $\Qy$ on $\Theta$ has probability zero. The event that all $k$ draws fail to refute is contained in the event that the first draw fails, so it also has probability zero, for every $k \ge 1$.
\end{proof}

\begin{proof}[Proof of Remark~\ref{rem:tau}]
For $\tau > 0$ put $E_\tau = \{\theta \in \Theta : |D(\theta)| \le \tau\}$. The family is nested decreasing as $\tau \downarrow 0$ with $\bigcap_{\tau > 0} E_\tau = \{D = 0\}$, which has measure zero by the proof of Theorem~\ref{thm:grading}. The draw is a probability measure, hence finite, so continuity from above gives $\Pr(E_\tau) \to 0$ as $\tau \to 0$.
\end{proof}

\subsection{Metrics}

\begin{proof}[Proof of Corollary~\ref{cor:acc-fc}]
A scored response is wrong exactly when it is a false claim or a miss. With $|\mathcal{I}| = |\mathcal{N}| = |\mathcal{S}|/2$, the number of wrong responses is $f|\mathcal{N}| + m|\mathcal{I}| = (f + m)|\mathcal{S}|/2$. Dividing by $|\mathcal{S}|$ gives $1 - a = (f + m)/2$. Since $m \ge 0$, $f \le 2(1 - a)$.

For $a \ge 1/2$, both bounds are attained. If all $(1 - a)|\mathcal{S}|$ wrong responses are misses, then $f = 0$ and $m = 2(1 - a) \le 1$. If all are false claims, then $f = 2(1 - a) \le 1$ and $m = 0$.
\end{proof}
\section{Structural Leakage in Semi-Synthetic Construction}
\label{app:leakage}

This appendix gives the feature-level diagnosis of the leak described in Sec.~\ref{sec:leakage}, the construction that reduces it, and the leakage that remains.

\subsection{Features and classifier}

For a bi-directed edge $\{u,v\}$ in an instance with query $P(y \mid \Do{x})$, write $\mathrm{pa}(\cdot)$ and $\mathrm{ch}(\cdot)$ for directed parents and children, $d(w) = |\mathrm{pa}(w)| + |\mathrm{ch}(w)|$ for the directed degree of $w$, and $b(w)$ for its number of bi-directed edges. Table~\ref{tab:leak-features} defines the thirteen features. All are permutation-invariant, since a bi-directed edge is an unordered pair; none depends on the variable names.

The classifier is a logistic regression on standardized features. Every reported AUC is held out: for each of the five source networks, the classifier is trained on the other four and scores the held-out network's edges, and the AUC is computed once over the pooled out-of-fold scores. A single-feature AUC instead uses the raw feature value as the score, with no model. A value below $0.5$ means the feature predicts real edges; both directions are leakage. We call a feature \emph{balanced} when its single-feature AUC lies in $[0.45, 0.55]$.

\begin{table}[h]
\centering\small
\caption{The thirteen structural features, and their single-feature AUC on our construction (927 edges). Values outside $[0.45, 0.55]$ are in bold.}
\label{tab:leak-features}
\begin{tabular}{lll}
\toprule
Feature & Definition & AUC \\
\midrule
directed degree sum      & $d(u) + d(v)$ & \textbf{0.574} \\
directed degree min      & $\min\{d(u), d(v)\}$ & 0.534 \\
directed degree max      & $\max\{d(u), d(v)\}$ & \textbf{0.580} \\
bi-directed degree sum   & $b(u) + b(v)$ & \textbf{0.442} \\
bi-directed degree min   & $\min\{b(u), b(v)\}$ & \textbf{0.410} \\
bi-directed degree max   & $\max\{b(u), b(v)\}$ & 0.480 \\
directed adjacency       & $1$ if a directed edge joins $u$ and $v$, else $0$ & 0.550 \\
directed distance        & shortest directed path, either direction; $-1$ if none & \textbf{0.637} \\
common parents           & $|\mathrm{pa}(u) \cap \mathrm{pa}(v)|$ & 0.472 \\
common children          & $|\mathrm{ch}(u) \cap \mathrm{ch}(v)|$ & \textbf{0.408} \\
touches query            & $1$ if $u$ or $v$ is $x$ or $y$, else $0$ & \textbf{0.562} \\
vertices                 & $|\Vs|$ & 0.505 \\
bi-directed edges        & $|\Bi{\Gr}|$ & 0.482 \\
\bottomrule
\end{tabular}
\end{table}

\subsection{The leak}

Consider a construction that hides a latent set in a published DAG, projects onto the remaining variables, and then injects spurious bi-directed edges uniformly at random among unconnected pairs. The label of every edge, real or spurious, is known exactly. On this construction (600 instances, 2{,}775 edges), the classifier separates real from spurious edges at $\mathrm{AUC} = \leakAUC{}$.

Three patterns produce the leak. A latent induces a bi-directed edge between every pair of its observed children, so real edges form cliques and concentrate bi-directed degree. The endpoints of a real edge are siblings under a common latent, so they are often joined by no directed path. Projecting a latent with an observed parent $P$ also adds a directed edge from $P$ to each observed child of the latent, so real pairs share observed parents. Uniform injection reproduces none of these patterns.

\subsection{The construction}

Two design choices reduce the leak. First, Definition~\ref{def:semi-family} restricts latent sets to root-closed ones, so projection adds no directed edges. Second, spurious edges are injected as the cliques of invented latents, each matched to real edges on a weighted set of ten structural features. On our construction (200 instances, 927 edges), the pooled leakage falls from \leakAUC{} to \shippedAUC{}, and the construction accepts \acceptRate{} of attempts. Table~\ref{tab:leak-shift} shows the change for the features that leaked most.

\begin{table}[h]
\centering\small
\caption{AUC under uniform injection and under our construction. Rows are single-feature AUCs; the last row is the held-out pooled classifier.}
\label{tab:leak-shift}
\begin{tabular}{lcc}
\toprule
& uniform injection & our construction \\
\midrule
common parents           & 0.266 & 0.472 \\
directed degree min      & 0.321 & 0.534 \\
bi-directed degree sum   & 0.299 & 0.442 \\
bi-directed degree min   & 0.258 & 0.410 \\
directed distance        & 0.585 & 0.637 \\
\midrule
pooled classifier        & \leakAUC{} & \shippedAUC{} \\
\bottomrule
\end{tabular}
\end{table}

\subsection{The leakage that remains}

Seven of the thirteen features remain unbalanced (Table~\ref{tab:leak-features}). The largest is directed distance, which the construction makes worse, from $0.585$ to $0.637$: $37\%$ of real edges join a pair with no directed path, against $14\%$ of spurious ones. The bi-directed-degree features stay below $0.5$, because real edges arrive in cliques and concentrate bi-directed degree, while matched spurious pairs are placed one at a time. Removing the remaining separation would require rejecting every instance in which some real pair has no exact structural match. That drops acceptance to $1.5\%$ and leaves graphs too small to be useful, so we use the construction above and report its leakage as a bar a system must exceed (Remark~\ref{rem:leak-bound}).
\section{Verification of the Labeling Procedure}
\label{app:verification}

The evaluation rests on two computations: the labels, computed by \ID{}, and the estimand grade, computed by the verifier. This appendix checks both, reports two defects the verifier found in a published library, and records which model produced each result.

\subsection{Canonical known-answer set}
\label{app:canonical}

We assembled \nCanonical{} instances whose verdicts are stated in published sources, transcribed independently of any package's test suite. Graph structures were read from source figures rendered at 600--1600 dpi, never from secondary descriptions. The sources are textbook cases encoded from first principles \citep{pearl1995causal, pearl2009causality}; Figures~1 and~2 of \citet{shpitser2008complete}, whose captions state the verdicts explicitly; Figures~1(a) and~1(b) of \citet{shpitser2006identification}, which differ by one directed edge and differ in verdict; and worked examples from \citet{tikka2017identifying}, transcribed from the printed \texttt{graph.formula} definitions.

One instance is flagged. The napkin graph's edge list could not be confirmed against a primary text, so it is recorded in our results as resting on a secondary source and is excluded from the headline count of \nCanonical{}; twenty-five instances are encoded in total. Both implementations return the correct verdict for it.

\subsection{Two implementations of \ID{}}
\label{app:implementations}

Labels are computed by two Python packages developed by different groups: \texttt{ananke} 0.5.0, which implements the one-line formulation of \ID{}, and \texttt{y0} 0.2.11, which implements \ID{} and its conditional extensions. The packages share no code. Our oracle calls both on every query and halts on any disagreement instead of choosing between them. Across $29{,}537$ random graphs, $19{,}537$ with 4 to 9 vertices and $10{,}000$ with 10 to 14, they disagree on none; across the ${\sim}\oracleCallTotal{}$ calls made while building and scoring the benchmark, they also disagree on none.

Both packages implement the same published algorithm, so their agreement cannot expose an error in that algorithm or a transcription mistake they share. We therefore also check both against answers taken from the primary papers, never from either package's tests, as described next.

We considered and rejected two other packages. The identification routine of \texttt{dowhy} 0.14 takes a directed graph with no way to mark a variable as unobserved, so it cannot represent an ADMG. The Python port \texttt{causal-effect} 0.0.2 crashes on about $30\%$ of instances. We did not test the reference R package \texttt{causal-effect}.

\subsection{The verifier}
\label{app:verifier}

Section~\ref{sec:verifier} describes the verifier. This subsection fixes the details needed to reproduce it and to check that Theorem~\ref{thm:grading} applies.

{\bf SCMs.} Each bi-directed edge $u \leftrightarrow v$ is materialized as one latent that is a parent of both $u$ and $v$. Every variable, observed or latent, is binary. For each variable and each configuration of its parents, the conditional distribution is drawn as $w_i = 0.12 + 0.76\,U_i$ for $i = 1, 2$, with $U_i \sim \mathrm{Uniform}(0,1)$ independent, and then normalized. Each latent's marginal is drawn the same way. The floor of $0.12$ keeps every probability away from zero, so no estimand divides by zero. This distribution is absolutely continuous, as Theorem~\ref{thm:grading}(ii) requires, and the guarantee holds relative to binary SCMs (Remark~\ref{rem:slice}).

{\bf Exact computation.} The joint distribution is computed by enumerating every configuration of the observed and latent variables, and each interventional distribution by removing the edges into the intervened variables. No quantity is estimated by sampling. Main-pool grading uses seeds 101 and 102, one per SCM; the known-answer checks and the case study use seeds 101 to 105.

{\bf Grammar.} A returned estimand must use four constructs: a probability term $P(\mathbf{a} \mid \mathbf{b})$ over observed variables, a product, a quotient, and a sum over named variables. Subtraction, expectations, numeric constants and other functions are rejected. The grammar covers every expression \ID{} returns, since \ID{} builds its output from exactly these constructs. Every variable must be an observed vertex of the instance, and an estimand may have at most 400 nodes and nesting depth 24. An estimand that fails to parse is counted as a parse failure and is not retried.

\subsection{Five defects in a published causal-inference library}
\label{app:librarybug}

We applied the verifier to the outputs of \texttt{y0}, version 0.2.11, a published causal inference library. On five graphs it returns estimands that are numerically incorrect, while its verdicts remain correct. We reported all five cases upstream.

On the napkin graph the library returns $P(Y \mid X)$ for $P(y \mid \Do{x})$, which is precisely the naive conditional the napkin construction exists to refute. That expression deviates from exact SCM ground truth by $7.5\times10^{-4}$. On Figure~7 of \citet{tikka2017identifying}, it returns a factorization that deviates from ground truth by \tikkaDev{} at the first of our five seeds, with a range of \tikkaDevRange{} across them; the formula published for that graph agrees with ground truth at ${\sim}10^{-16}$. Appendix~\ref{app:ex-estimand} works this case out in full.

The other three graphs are nine-vertex instances from our main pool. Over five seeds and both intervention values, the library's expressions deviate from ground truth by up to $2.3\times10^{-2}$, $8.1\times10^{-3}$ and $6.9\times10^{-2}$. On two of them, the factor for the outcome omits at least one of its parents: for $P(v_6 \mid \Do{v_4})$ on one graph, it conditions on $V_2$ and $V_5$ and omits the treatment $V_4$, a parent of $V_6$. On two of these instances, \modelFlash{} and \modelPro{} both return correct estimands where the library's is wrong.

All five outputs sum to one, so no discrepancy is a normalization artifact. We confirmed each by hand-coding the returned expression independently of our parser, and the library's own shipped example of the napkin graph reproduces the same output. All five magnitudes come from seeded draws and reproduce from the artifact. The artifact's results file also lists one mismatch for an instance that was never evaluated; it is a bookkeeping entry, not a defect. Our pipeline uses only the library's identifiability verdicts, which are correct in all five cases, so no label is affected. We report the defects because they illustrate the argument before Definition~\ref{def:verified}: an expression that looks like a plausible estimand, produced by trusted software, can be wrong in a way that only numerical evaluation exposes.

\subsection{Model identities}
\label{app:models}

\begin{center}\small
\begin{tabular}{llc}
\toprule
Identifier & Pin quality & Run dates \\
\midrule
\texttt{gemini-3.7-flash} & floating alias, requested only & 2026-08-17/18 \\
\texttt{gemini-3.1-pro-preview} & floating preview alias, requested only & 2026-08-17/18 \\
\texttt{gpt-5.5-2026-04-23} & dated snapshot, echo-verified & 2026-08-20 \\
\bottomrule
\end{tabular}
\end{center}

The three differ in how firmly they are pinned. For \texttt{gpt-5.5-2026-04-23} the API returns the served model identifier on every call, and we log the identifier per call. Coverage is 1{,}189 of 1{,}189, so the pin is verifiable from the logs and not merely requested. The Gemini client captured no served-model field, so those results rest on the requested alias alone, and one of the two is an explicitly preview endpoint. Neither Gemini result can be re-run against identical weights.

{\bf Reasoning.} No run set a thinking budget or reasoning effort; every call used the vendor's default, and Gemini calls used temperature $0$. Reasoning ran on every call: no logged reply has zero reasoning tokens. Table~\ref{tab:reasoning} gives the mean reasoning tokens per reply. The vendor chooses the reasoning depth, so it differs between systems, by about $3\times$ between \modelFlash{} and \modelPro{}, and is not held fixed in any comparison.

\begin{table}[h]
\centering\small
\caption{Mean reasoning tokens per reply.}
\label{tab:reasoning}
\begin{tabular}{lcc}
\toprule
& main pool & scale grid \\
\midrule
\modelFlash{} & 2{,}421 & 4{,}114 \\
\modelPro{}   & 8{,}070 & 17{,}144 \\
\modelGPT{}   & 2{,}994 & 5{,}733 \\
\bottomrule
\end{tabular}
\end{table}

The asymmetry deserves naming: the model with the strongest reproducibility guarantee is the one whose results differ most from the other two (Sec.~\ref{sec:res-decouple}). A reader inclined to discount the cross-model spread should note that discounting it requires trusting the two endpoints we can \emph{less} well account for. Open-weights models, pinned by full revision hash:

\begin{center}\scriptsize
\begin{tabular}{ll}
\toprule
Model & Revision \\
\midrule
\texttt{Qwen/Qwen2.5-0.5B-Instruct} & \texttt{7ae557604adf67be50417f59c2c2f167def9a775} \\
\texttt{Qwen/Qwen2.5-1.5B-Instruct} & \texttt{989aa7980e4cf806f80c7fef2b1adb7bc71aa306} \\
\texttt{Qwen/Qwen2.5-3B-Instruct}   & \texttt{aa8e72537993ba99e69dfaafa59ed015b17504d1} \\
\texttt{sentence-transformers/all-MiniLM-L6-v2} & \texttt{1110a243fdf4706b3f48f1d95db1a4f5529b4d41} \\
\bottomrule
\end{tabular}
\end{center}
\section{Evaluation Protocol and Additional Results}
\label{app:truncation}

Three things determine what a reported figure means: the budget a model was given, the form in which it saw the graph, and the rule by which its reply was turned into a verdict. None is neutral. The budget and the graph format differ across runs, and the parser resolves different models by different routes. This appendix records them. It also gives in full the results the body states in summary.

\subsection{Published networks}
\label{app:networks}
 
Table~\ref{tab:networks} lists the five networks from which the published and semi-synthetic instances are built. All five come from the bnlearn repository~\citep{scutari2010learning}, loaded at a pinned library version. The networks carry no version string, so the loader checks each network's node and edge counts against the repository's published figures and refuses to proceed if they differ.
 
\begin{table}[h]
\centering\footnotesize
\caption{The five networks behind the published family (Definition~\ref{def:semi-family}).}
\label{tab:networks}
\begin{tabular}{lllrr}
\toprule
Network & Domain & Source & Nodes & Edges \\
\midrule
\textsc{asia}      & lung-disease diagnosis   & \citet{lauritzen1988local}        &  8 &   8 \\
\textsc{child}     & congenital heart disease & \citet{spiegelhalter1992learning} & 20 &  25 \\
\textsc{insurance} & car-insurance risk       & \citet{binder1997adaptive}        & 27 &  52 \\
\textsc{alarm}     & patient monitoring       & \citet{beinlich1989alarm}         & 37 &  46 \\
\textsc{hepar2}    & liver-disorder diagnosis & \citet{onisko2003probabilistic}   & 70 & 123 \\
\bottomrule
\end{tabular}
\end{table}

\subsection{Output budgets}
\label{app:proto-budget}

All three models draw internal reasoning from the same budget as visible output, so a budget sized for the answer alone produces truncation or, in one case, a request error with no content at all. An early run at a smaller budget truncated a majority of \modelPro{} responses and inverted one of our own reported figures, because the instances dropped were disproportionately those the model reasoned longest about. Budgets were raised and differ by run: $16{,}384$ tokens for the main-pool verdict task; $32{,}768$ for the scale grid, the second-vendor run, the repeat-sampling study and the justification variant; and per row as given in Table~\ref{tab:repair} for repair. The oracle condition allows fifteen calls to \ID{} and was run on 150 of the 300 main-pool non-identifiable instances, of which 144 are distinct.

\subsection{Graph serialization}
\label{app:proto-serial}

Two renderings of the graph exist in the code base. The main-pool verdict runner emits one edge per line with explanatory parentheticals; every other runner emits a compact form, one line per edge type. The prompt template around the graph block, and the response parser, are identical throughout.

The consequence is confined to one comparison. On the scale grid all three models received prompts identical in bytes: we regenerated the expected prompt for all 600 instances and matched its hash in each model's log, finding 600 of 600 in each and no other prompt hash in any of them. On the main pool, \modelGPT{} received the compact form and the two Gemini endpoints the multi-line form, so the cross-vendor comparison in Table~\ref{tab:verdict} holds the template and the parser fixed while the graph block differs in layout. The designed repeat-sampling study used the compact form throughout.

\subsection{How a verdict is read}
\label{app:proto-routes}

The parser resolves a response to a verdict by three routes, tried in order. The first is an explicit verdict line, which in every case we observed has the strict format \texttt{VERDICT: IDENTIFIABLE} or \texttt{VERDICT: NOT IDENTIFIABLE}. The second is a fenced JSON block parsing under the estimand grammar, read as a claim of identifiability. The third is a scan of the whole response for the word \emph{identifiable}. The second route can produce a claim and never a refusal, so it is not symmetric across labels; the third is a fallback we did not document in advance and which fired only on truncated text.

Table~\ref{tab:routes} gives the decomposition for every model and pool. The reading is straightforward: \modelFlash{} and \modelGPT{} are resolved entirely by explicit verdict lines, in both pools and on both labels, and neither fallback fired for either model once. \modelPro{} is the only model with mixed routes. This matters most for Sec.~\ref{sec:res-decouple}, whose headline comparison is \modelFlash{} against \modelGPT{} on the scale grid. All $51$ and all $3$ of those false claims are explicit verdict lines, as are all $249$ and $297$ correct refusals behind the denominators. That comparison is therefore like-for-like in route as well as in instance and prompt.

\begin{table}[h]
\centering\small
\caption{How each verdict was read, by model, pool and label. Counts are of
responses; the shipped metrics reproduce exactly from them.}
\label{tab:routes}
\begin{tabular}{llcccc}
\toprule
& & total & verdict line & estimand block & whole-text scan \\
\midrule
\multicolumn{6}{l}{\emph{false claims}} \\
\textsc{flash}   & main  & 9   & 9   & 0 & 0 \\
\textsc{flash}   & scale & 51  & 51  & 0 & 0 \\
\textsc{pro}     & main  & 8   & 4   & 2 & 2 \\
\textsc{pro}     & scale & 22  & 16  & 6 & 0 \\
\textsc{gpt-5.5} & main  & 0   & 0   & 0 & 0 \\
\textsc{gpt-5.5} & scale & 3   & 3   & 0 & 0 \\
\midrule
\multicolumn{6}{l}{\emph{correct refusals}} \\
\textsc{flash}   & main  & 291 & 291 & 0 & 0 \\
\textsc{flash}   & scale & 249 & 249 & 0 & 0 \\
\textsc{pro}     & main  & 283 & 281 & 0 & 2 \\
\textsc{pro}     & scale & 276 & 276 & 0 & 0 \\
\textsc{gpt-5.5} & main  & 300 & 300 & 0 & 0 \\
\textsc{gpt-5.5} & scale & 297 & 297 & 0 & 0 \\
\bottomrule
\end{tabular}
\end{table}

Two further runs were checked the same way. In the justification variant no response from any model contains a JSON block, so neither fallback could fire, and every verdict is an explicit line; \modelFlash{}'s rise from nine false claims to fourteen is therefore five additional stated claims and not an artifact of the longer replies. In the repeat-sampling study all 600 \modelFlash{} samples are verdict lines, and all nine of its non-unanimous instances are changes of stated verdict. \modelPro{} has two samples resolved by the estimand route, one of which falls on a non-unanimous instance: on an identifiable instance two samples state \texttt{NOT IDENTIFIABLE} and the third returns a bare estimand read as a claim. That is a change of route and not of stated verdict, and Appendix~\ref{app:proto-repro} reports \modelPro{}'s figure both ways.

\subsection{Truncation}
\label{app:proto-trunc}

\modelFlash{} and \modelGPT{} have no truncated responses in either pool: every scale-grid row carries a populated stop reason of \texttt{STOP}, and \modelFlash{}'s longest response is $15{,}089$ tokens of $16{,}384$ on the main pool and $13{,}973$ of $32{,}768$ on the scale grid.

\modelPro{} has thirteen main-pool responses cut at the ceiling of $16{,}380$ output-plus-reasoning tokens. Seven yielded no verdict and are among its eleven unscored instances, alongside one formatting failure and three transport errors. Five were resolved by the whole-text scan, two of which are counted as false claims in the shipped metrics and three as correct refusals. One had completed an estimand block before the cut. On the scale grid \modelPro{} has three truncated responses, at $n{=}40$ and $n{=}50$. The $0.0000$ truncation rate in our shipped scale metrics is a caching artifact: 599 of its 600 rows were served from a log that predates the stop-reason field, and a null value was counted as not truncated.

Table~\ref{tab:trunc-effect} gives the effect of excluding the truncated responses. No comparison in the paper changes, and we report the shipped figures throughout so that every number rests on one treatment.

\begin{table}[h]
\centering\small
\caption{\modelPro{} under the shipped treatment and with truncated responses
excluded. \modelFlash{} and \modelGPT{} are unaffected, having none.}
\label{tab:trunc-effect}
\begin{tabular}{lcc}
\toprule
\modelPro{} & as shipped & truncated excluded \\
\midrule
main-pool accuracy        & 0.9779 (589 scored) & 0.9811 (583 scored) \\
main-pool false claims    & 8/291 = 0.0275      & 6/287 = 0.0209 \\
scale-grid accuracy       & 0.9415              & 0.9430 \\
scale-grid false claims   & 22/298 = 0.0738     & 21/297 = 0.0707 \\
false claims at $n{=}50$  & 6/49 = 0.1224       & 5/48 = 0.1042 \\
\bottomrule
\end{tabular}
\end{table}

\subsection{Small open-weights models in full}
\label{app:proto-small}

\begin{table}[h]
\centering\small
\caption{Small open-weights models on the edge-classification task, against name-blind and lexical baselines. Values come from the uniform-injection construction of Appendix~\ref{app:leakage}; the structural baseline is $0.821$ under the original features and \leakAUC{} under the permutation-invariant features of Table~\ref{tab:leak-features}.Values come from the superseded construction of Appendix~\ref{app:leakage}. \textbf{Comparisons are valid within a column only}: the published and scrambled columns are scored on the same 2{,}775 edge instances, the novel column on a separate set of 731, and none of these are comparable to the frontier-model results elsewhere in this paper. The 1.5B model discriminates in the wrong direction; scrambling variable names returns it to chance, so the inversion is name-driven.}
\label{tab:smallmodels}
\begin{tabular}{lccc}
\toprule
Scorer & published & scrambled & novel domains \\
\midrule
Random                & 0.4953 & 0.4953 & 0.5342 \\
Embedding similarity  & 0.5047 & 0.4115 & 0.5590 \\
Structural (no names) & 0.8210 & 0.8210 & 0.7487 \\
\midrule
0.5B & 0.4405 & 0.4770 & 0.4529 \\
1.5B & \textbf{0.3586} & 0.4929 & 0.5363 \\
3B   & 0.5181 & 0.4427 & 0.5100 \\
\bottomrule
\end{tabular}
\end{table}

On edge classification, the small models show no useful signal (Table~\ref{tab:smallmodels}). Across \smallModelRange{} models, performance sits at or below chance in every condition, with no trend across the three sizes. The one strongly non-chance value is an inversion: the 1.5B model reaches AUC \smallWorstAUC{} on named instances, as far below chance as $0.641$ would be above it, and its scrambled-name control returns to \smallScrambledAUC{}. The inversion is name-driven and not noise, but it appears in a single cell and we report it as an isolated observation and not a mechanism.

\subsection{Scale grid, numerically}
\label{app:proto-scale}
\begin{table}[h]
\centering\small
\caption{Verdict accuracy against graph size.}
\label{tab:scale}
\begin{tabular}{lccccccc}
\toprule
& \multicolumn{2}{c}{\textsc{flash}} & \multicolumn{2}{c}{\textsc{pro}} & \multicolumn{2}{c}{\textsc{gpt-5.5}} \\
\cmidrule(lr){2-3}\cmidrule(lr){4-5}\cmidrule(lr){6-7}
$n$ & Acc. & False-claim & Acc. & False-claim & Acc. & False-claim \\
\midrule
10 & 0.940 & 0.12 & 0.960 & 0.02 & \textbf{1.000} & 0.00 \\
15 & 0.960 & 0.06 & 0.960 & 0.04 & \textbf{1.000} & 0.00 \\
20 & 0.890 & 0.20 & 0.960 & 0.06 & 0.990 & 0.02 \\
30 & 0.840 & 0.28 & 0.940 & 0.10 & \textbf{1.000} & 0.00 \\
40 & 0.920 & 0.16 & 0.919 & 0.10 & \textbf{1.000} & 0.00 \\
50 & 0.840 & 0.20 & 0.909 & 0.12 & 0.980 & 0.04 \\
\midrule
overall & 0.8983 & 0.1700 & 0.9415 & 0.0738 & \textbf{0.9950} & \textbf{0.0100} \\
\bottomrule
\end{tabular}
\end{table}

\vspace{-2mm}
\subsection{Accuracy and soundness under scale, in full}
\label{app:res-decouple}
\vspace{-2mm}

Across graph sizes from \nVertexRange{} vertices, the Gemini models lose accuracy slowly while their false-claim rates rise, and \modelGPT{} does neither (Table~\ref{tab:scale}). For \modelPro{}, accuracy falls from \decoupleAccSmall{} to \accDecoupleAcc{}, five points, while the false-claim rate rises from \decoupleFCSmall{} to \accDecoupleFC{}, a six-fold increase. \modelFlash{} shows the same divergence more steeply. \modelGPT{} holds at \gptScaleAcc{} accuracy with a false-claim rate of \gptScaleFC{}; its three errors across six sizes form no trend. Excluding \modelPro{}'s three truncated responses changes no comparison (Table~\ref{tab:trunc-effect}).

With fifty non-identifiable instances per size, the per-size intervals overlap for adjacent sizes, so the shape of the trend within the grid is suggestive, not established. The pooled comparison is resolved: \modelFlash{}'s false-claim rate is $0.170$ $[0.127, 0.217]$ on the scale grid, against $0.030$ on the main pool.

The relevant subgraph grows far more slowly than the graph. Identifiability depends only on the subgraph over $\An(y)$, since the first step of \ID{} discards every other vertex~\citep{shpitser2006identification}, and by Corollary~\ref{cor:ancestral} no minimal repair uses an edge outside it. From $|\Vs| = 10$ to $50$, mean $|\An(y)|$ rises only from $6.9$ to $15.1$, and the number of bi-directed edges inside $\An(y)$ stays nearly flat, from $4.6$ to $5.7$. Enlarging the graph therefore adds mostly vertices that are irrelevant to the query, so the degradation is consistent with sensitivity to irrelevant structure more than with a harder identification problem. The spread is wide at every size ($|\An(y)|$ ranges from 4 to 34 at $|\Vs| = 50$), so size is a coarse proxy for difficulty.

\subsection{Repair and enumeration in full}
\label{app:proto-repair}

Table~\ref{tab:repair} reports repair and enumeration on the non-identifiable instances of the main pool, and Table~\ref{tab:repairscale} reports \modelFlash{} against graph size. The trivial repair $\Aug = \Bi{\Gr}$ is always valid but minimal on only \trivialMin{} of instances, so validity alone is a weak target; the systems' valid repairs are minimal on \repairMin{} of instances. With \ID{} as a tool, every repair \modelFlash{} returns is valid and minimal, at every size. Exhaustive enumeration of $\Frontier$ becomes infeasible beyond 20 vertices, so frontier coverage and mean $|\Frontier|$ at larger sizes describe only the instances where enumeration finished.

\begin{table}[h]
\centering\small
\caption{Repair and enumeration on the main pool.}
\label{tab:repair}
\begin{tabular}{llcccccc}
\toprule
Model & Condition & Budget & Parse err. & Validity & Min.\ $\mid$ valid & Exact $\Frontier$ & Recall \\
\midrule
\textsc{flash} & one-shot      & 8{,}192  & 0.007 & 0.9295 & 0.9675 & 0.8993 & --- \\
\textsc{flash} & asked-for-all & 32{,}768 & 0.003 & 0.9264 & 0.9928 & 0.9197 & 0.9066 \\
\textsc{flash} & oracle        & 8{,}192  & 0.053 & 1.0000 & 1.0000 & 1.0000 & --- \\
\textsc{pro}   & one-shot      & 32{,}768 & 0.010 & 0.9697 & 0.9826 & 0.9529 & --- \\
\textsc{pro}   & asked-for-all & 32{,}768 & 0.030 & 0.9485 & 0.9746 & 0.9244 & 0.8704 \\
\textsc{pro}   & oracle        & 8{,}192  & 0.380 & \multicolumn{4}{c}{excluded: parse failure on 38\% of instances} \\
\midrule
trivial ($\Aug = \Bi{\Gr}$) & & --- & --- & 1.0000 & 0.0833 & --- & --- \\
\bottomrule
\end{tabular}
\end{table}

\begin{table}[h]
\centering\small
\caption{Repair against graph size (\textsc{flash}).}
\label{tab:repairscale}
\begin{tabular}{lcccccc}
\toprule
$n$ & Valid & Min.\ $\mid$ valid & Oracle valid & Oracle min. & $\Frontier$ coverage & mean $|\Frontier|$ \\
\midrule
10 & 0.880 & 0.9545 & 1.000 & 1.000 & 1.00 & 3.28 \\
15 & 0.860 & 0.9535 & 1.000 & 1.000 & 1.00 & 3.73 \\
20 & 0.860 & 0.8837 & 1.000 & 1.000 & 0.88 & 2.30 \\
30 & 0.880 & 0.7500 & 1.000 & 1.000 & 0.10 & 1.40 \\
40 & 0.920 & 0.8696 & 1.000 & 1.000 & 0.02 & 1.00 \\
50 & 0.980 & 0.8163 & 1.000 & 1.000 & 0.00 & --- \\
\bottomrule
\end{tabular}
\end{table}

\subsection{Reproducibility at temperature zero}
\label{app:proto-repro}

Temperature zero did not produce deterministic output in either Gemini model. Drawing three fresh responses for each of \nRepeatInst{} instances stratified evenly by label, \modelFlash{} returns a non-unanimous verdict on \flipFlash{} of instances (95\% CI \flipFlashCI{}) and \modelPro{} on \flipPro{} (CI \flipProCI{}); pairwise flip rates are \flipPairFlash{} and \flipPairPro{}. Two properties of the instability matter more than its size. One of \modelPro{}'s five non-unanimous instances is a change of parser route and not of stated verdict; setting it aside gives $4/200$ with a pairwise rate of $8/600$ (Appendix~\ref{app:truncation}). \modelFlash{}'s nine are all changes of stated verdict.

It is not symmetric across labels. \modelFlash{} is perfectly unanimous on all one hundred identifiable instances and unstable on nine of the hundred non-identifiable ones, so its entire instability sits where a flip converts a correct refusal into a false claim. \modelPro{}'s flips are spread across both labels, three and two. And it tracks difficulty as accuracy does: \modelFlash{}'s instability concentrates in the middle density band ($1.1\%$ below six bi-directed edges, $8.8\%$ between six and twelve, $5.5\%$ above), the same band in which its verdict errors concentrate.

Majority voting over the three samples does not repair this. Accuracy moves by $+0.8$ points for \modelFlash{} and $-0.2$ for \modelPro{}: neither is meaningful at this sample size, and one has the wrong sign, so three times the inference budget recovers nothing. The consequence is methodological and general. A single-run accuracy figure from these endpoints is a sample and not a constant, and a gap of one or two points between two systems, or between a system and itself, lies within the noise of re-running. We report single-sample figures throughout and mark them as such, and we suggest that repeat variance should be standard for benchmarks evaluated against commercial endpoints.
\section{Worked Examples}
\label{app:examples}

The paper claims that two things are computable: whether a query is identifiable, and whether a returned expression is correct. This appendix works one instance through each, then applies both computations to a graph an author published. The first instance is drawn from the pool; the second is a known-answer case from the literature.

\subsection{A certified label: a non-identifiable instance and its frontier}
\label{app:ex-label}

Figure~\ref{fig:ex-child} shows an instance of the published family drawn from the \textsc{child} network, which models congenital heart disease in newborns. Five observed variables remain after projection: cardiac mixing (\textsf{CM}), duct flow (\textsf{DF}), hypoxia distribution (\textsf{HD}), hypoxia in O\textsubscript{2} (\textsf{HO}) and lower-body O\textsubscript{2} (\textsf{LB}). The query is $P(\textsf{LB} \mid \Do{\textsf{CM}})$.

\begin{figure}[h]
\centering
\scalebox{0.92}{
\begin{tikzpicture}[
    every node/.style={font=\small},
    v/.style={circle, draw, thick, minimum size=0.78cm, inner sep=0pt,
              font=\scriptsize},
    d/.style={-{Stealth}, thick},
    b/.style={{Stealth}-{Stealth}, dashed, thick, red!65},
    box/.style={rectangle, draw, rounded corners, align=left,
                font=\scriptsize, inner sep=4pt},
    good/.style={box, fill=green!12, draw=green!60},
    bad/.style={box, fill=red!10, draw=red!60}
]
\node[v] (CM) at (0.5,2.5)  {CM};
\node[v] (DF) at (0.5,0.5)  {DF};
\node[v] (HD) at (2.6,2.5)  {HD};
\node[v] (HO) at (2.6,0.5)  {HO};
\node[v] (LB) at (4.6,1.5)  {LB};

\draw[d] (CM) -- (HD);
\draw[d] (CM) -- (HO);
\draw[d] (DF) -- (HD);
\draw[d] (HD) -- (LB);
\draw[d] (HO) -- (LB);

\draw[b] (CM) to[bend right=50] (DF);
\draw[b] (CM) to[bend left=30]  (HO);
\draw[b] (DF) -- (HO);

\node[font=\scriptsize, align=center] at (2.5,-0.55)
  {$\Qy = P(\textsf{LB} \mid \Do{\textsf{CM}})$ \quad
   \textsc{id}: \textbf{not identifiable} \quad (both implementations)};

\node[good, minimum width=5.0cm] at (9.0,2.7)
  {$F_1 = \{\textsf{CM}\!\leftrightarrow\!\textsf{DF},\ \textsf{CM}\!\leftrightarrow\!\textsf{HO}\}$\\
   the treatment is unconfounded};
\node[good, minimum width=5.0cm] at (9.0,1.45)
  {$F_2 = \{\textsf{CM}\!\leftrightarrow\!\textsf{HO},\ \textsf{DF}\!\leftrightarrow\!\textsf{HO}\}$\\
   the mediator is unconfounded};
\node[bad, minimum width=5.0cm] at (9.0,0.15)
  {a model proposes $\{\textsf{CM}\!\leftrightarrow\!\textsf{HO}\}$\\
   \textbf{invalid: one call to \ID{}}};
\end{tikzpicture}
}
\caption{A certified instance, its frontier, and an invalid repair. Dashed edges are unobserved common causes.}
\label{fig:ex-child}
\end{figure}

\paragraph{The label.} Both implementations return $\FAIL$. The obstruction is a hedge (Definition~\ref{def:hedge}): $F$ on $\{\textsf{CM},\textsf{DF},\textsf{HO}\}$ and $F'$ on $\{\textsf{DF},\textsf{HO}\}$ are C-forests with the same roots $\{\textsf{DF},\textsf{HO}\}$, both ancestors of $\textsf{LB}$; $F$ contains the treatment and $F'$ does not. One of the three frontier models named exactly this pair when asked for a justification (Sec.~\ref{sec:res-warrant}); all three returned the correct verdict.

\paragraph{The frontier.} Exhaustive search over the $2^3$ augmentations, seven calls to \ID{}, returns $\Frontier(\Gr,\Qy) = \{F_1, F_2\}$. No single deletion suffices: each of the three singletons leaves the query non-identifiable. The third pair, $\{\textsf{CM}\!\leftrightarrow\!\textsf{DF}, \textsf{DF}\!\leftrightarrow\!\textsf{HO}\}$, also fails.

The two elements have equal cardinality and incompatible meanings. $F_1$ asserts that nothing unobserved drives both cardiac mixing and either of its neighbors, so the treatment is unconfounded. $F_2$ asserts that nothing unobserved drives hypoxia in O\textsubscript{2} together with cardiac mixing or duct flow, which makes duct flow a valid adjustment. A procedure ranking repairs by size cannot choose between them, and nothing in the graph can: the choice is a clinical judgment about which confounding is credible. The same pattern recurs in the published graph of Appendix~\ref{app:ex-published}.

\paragraph{An invalid repair.} Asked for a minimal restoring set, one model proposed the single edge $\{\textsf{CM}\!\leftrightarrow\!\textsf{HO}\}$. The model argued that this edge carries the only confounding path, since a collider blocks the remaining path through duct flow. One call to \ID{} shows the proposal invalid (Definition~\ref{def:restoring}), and Theorem~\ref{thm:destroy} shows why: after deleting $\textsf{CM}\!\leftrightarrow\!\textsf{HO}$, the edges $\textsf{CM}\!\leftrightarrow\!\textsf{DF}$ and $\textsf{DF}\!\leftrightarrow\!\textsf{HO}$ still connect $F$, and $F'$ is untouched, so the hedge survives. The set is not in $\Adm(\Gr,\Qy)$, let alone minimal. Two properties of this failure are worth recording. It is a repair error and not a verdict error: the same model called the instance non-identifiable correctly. The graph also occurs twice in the pool, one of eight duplicated keys, and the same wrong repair was returned both times under different reasoning. This is a small instance of the instability reported in Appendix~\ref{app:proto-repro}.

\subsection{A false claim by the strongest model}
\label{app:ex-falseclaim}

\modelGPT{} makes three false claims on the scale grid (Table~\ref{tab:scale}). None is truncated or read by a fallback route: each reply opens with \texttt{VERDICT: IDENTIFIABLE} and returns a parseable estimand. We work through the one with the smallest hedge.

The instance has 50 vertices, and the query is $P(v_{41} \mid \Do{v_{21}})$. \ID{} returns $\FAIL$ with a hedge on four vertices: $F$ on $\{V_7, V_{13}, V_{21}, V_{28}\}$ and $F'$ on $\{V_7, V_{13}, V_{28}\}$, with bi-directed edges
\[
F^{\leftrightarrow} = \{V_7\!\leftrightarrow\!V_{28},\ V_{13}\!\leftrightarrow\!V_{28},\ V_{21}\!\leftrightarrow\!V_{28}\},
\qquad
F'^{\leftrightarrow} = F^{\leftrightarrow} \setminus \{V_{21}\!\leftrightarrow\!V_{28}\}.
\]
$F$ contains the treatment $V_{21}$ and $F'$ does not; a single bi-directed edge, $V_{21}\!\leftrightarrow\!V_{28}$, joins the treatment to the rest of the hedge. The hedge lies upstream of the outcome: $V_{41}$ is not in $F$, and the roots of the hedge are its ancestors. The reply states the verdict and returns a chain-rule expansion that conditions each variable on all its predecessors.

Both edge sets are trees, so by Theorem~\ref{thm:destroy} deleting any one of the three edges destroys this hedge (Corollary~\ref{cor:hardness}). The other two false claims have the same structure: the hedge contains the treatment and is joined to it by one or two bi-directed edges. The replies return a truncated factorization in one case and an adjustment for all 32 other ancestors of the outcome in the other. Accuracy counts each of these three answers as one error, the same as a miss, although each asserts an estimand where none exists.

\subsection{A verified grade: an estimand that is proper and wrong}
\label{app:ex-estimand}

The second example shows the grading side. Figure~7 of \citet{tikka2017identifying} carries five observed variables with $\textsf{X} \to \textsf{Z}_1 \to \textsf{Y}$, $\textsf{Z}_2 \to \{\textsf{X}, \textsf{Z}_1, \textsf{Z}_3\}$ and $\textsf{Z}_3 \to \textsf{Y}$, with bidirected edges $\textsf{X} \leftrightarrow \textsf{Y}$, $\textsf{X} \leftrightarrow \textsf{Z}_2$, $\textsf{X} \leftrightarrow \textsf{Z}_3$ and $\textsf{Y} \leftrightarrow \textsf{Z}_2$. The query is $P(z_1, z_2, z_3, y \mid \Do{x})$, which \citet{tian2002general} proved identifiable; both our implementations agree.

Two expressions are in play. The published estimand is
\[
  P_x(z_1,z_2,z_3,y) \;=\;
  P(z_1 \mid x, z_2)\, \textstyle\sum_{x'} P(y, z_3 \mid x', z_1, z_2)\, P(x', z_2),
\]
and \texttt{y0} 0.2.11 (Appendix~\ref{app:librarybug}) returns instead
\[
  \widehat{\est} \;=\;
  P(z_1 \mid x, z_2)\, P(z_3 \mid z_2)\, P(y \mid z_2, z_3)\, P(z_2).
\]
Both are proper distributions: each sums to one over the sixteen configurations, to within $10^{-12}$. Nothing about their surface form marks either as wrong, and a grader comparing strings would have to decide which is the reference.

\paragraph{The evaluation.} We draw a binary discrete SCM realizing the graph, with one binary latent per bi-directed edge, and enumerate the joint exactly. Table~\ref{tab:ex-estimand} gives every cell at $\Do{x{=}1}$ for the first of our five seeds. The published estimand agrees with ground truth to $5.6 \times 10^{-17}$ in every cell. The library's expression deviates by up to \tikkaDev{}, and across the five seeds the maximum deviation ranges over \tikkaDevRange{}.

\begin{table}[h]
\centering\small
\caption{$P(z_1,z_2,z_3,y \mid \Do{x{=}1})$ under one seeded SCM. Ground truth by exact enumeration; the published estimand matches it to $5.6\times10^{-17}$ throughout and is omitted. Deviations cancel within each
$(z_1,z_2,z_3)$ block.}
\label{tab:ex-estimand}
\begin{tabular}{cccc rrr}
\toprule
$z_1$ & $z_2$ & $z_3$ & $y$ & truth & library & difference \\
\midrule
0 & 0 & 0 & 0 & 0.068800 & 0.066377 & $-0.002423$ \\
0 & 0 & 0 & 1 & 0.087453 & 0.089876 & $+0.002423$ \\
0 & 0 & 1 & 0 & 0.073608 & 0.066449 & $-0.007159$ \\
0 & 0 & 1 & 1 & 0.083090 & 0.090249 & $+0.007159$ \\
0 & 1 & 0 & 0 & 0.028730 & 0.028270 & $-0.000460$ \\
0 & 1 & 0 & 1 & 0.039616 & 0.040076 & $+0.000460$ \\
0 & 1 & 1 & 0 & 0.050490 & 0.045796 & $-0.004694$ \\
0 & 1 & 1 & 1 & 0.059443 & 0.064138 & $+0.004694$ \\
1 & 0 & 0 & 0 & 0.063194 & 0.065219 & $+0.002025$ \\
1 & 0 & 0 & 1 & 0.090332 & 0.088308 & $-0.002025$ \\
1 & 0 & 1 & 0 & 0.056682 & 0.065289 & $+0.008608$ \\
1 & 0 & 1 & 1 & 0.097282 & 0.088674 & $-0.008608$ \\
1 & 1 & 0 & 0 & 0.031453 & 0.031917 & $+0.000464$ \\
1 & 1 & 0 & 1 & 0.045710 & 0.045246 & $-0.000464$ \\
1 & 1 & 1 & 0 & 0.046903 & 0.051704 & $+0.004801$ \\
1 & 1 & 1 & 1 & 0.077213 & 0.072412 & $-0.004801$ \\
\bottomrule
\end{tabular}
\end{table}

\paragraph{Why this example matters more than its magnitude.} The deviations in each $(z_1,z_2,z_3)$ block are equal and opposite, so summing $Y$ out of the library's expression recovers ground truth to $8\times10^{-17}$. Its answer is therefore correct for $P_x(z_1,z_2,z_3)$ and wrong only for the joint that was asked for. An evaluation checking a marginal, or a single scalar contrast, would see nothing. The error is confined to the factor for $Y$, which the returned expression writes as $P(y \mid z_2, z_3)$. That factor conditions on neither $z_1$, a directed parent of $Y$, nor on $x$. We record this as an observation about the expression and claim no diagnosis of the implementation.

This is what Theorem~\ref{thm:grading} buys in practice. The library's verdict is correct, its expression is a proper distribution, its marginal is exact, and it is still not the requested quantity. Only evaluation against an SCM whose interventional distribution is known separates the two.

\paragraph{Provenance.} The graph is transcribed from the printed \texttt{graph.formula} definition in the source, not from a secondary description. Parameters for the seeded model, the full-precision values, and the same table at $\Do{x{=}0}$ are in the artifact.

\paragraph{A refuted estimand from a model.} For $P(v_4 \mid \Do{v_0})$ on a five-vertex random instance, \modelPro{} returned
\[
\hat\est = \textstyle\sum_{v_2,v_3} P(v_2 \mid v_0, v_3)\, P(v_3)\, P(v_4 \mid v_0, v_2, v_3),
\]
while \ID{} returns
\[
\est = \textstyle\sum_{v_2,v_3} P(v_2 \mid v_0, v_3) \sum_{v_0'} P(v_0' \mid v_3)\, P(v_3)\, P(v_4 \mid v_0', v_2, v_3).
\]
The two differ by one summation: $\est$ averages the last factor over the observed distribution of the treatment, while $\hat\est$ evaluates it at the intervened value. The verifier refutes $\hat\est$ with a deviation of $1.2 \times 10^{-3}$. All four refuted model estimands share this error, a missing summation over the treatment.

\subsection{A published graph: Robins, Hern\'an and Brumback (2000)}
\label{app:ex-published}
The instances above are generated. To show the same machinery applied to a graph an author drew and published, we take Figure~1 of \citet{robins2000marginal}, the methods paper introducing marginal structural models, whose companion study estimates the effect of zidovudine on survival in HIV-positive men~\citep{ryalen2026causal}. We chose it because its authors mark latent variables explicitly: the prose defines $U_k$ as all unmeasured causal risk factors for the outcome, and states that panel 1b differs from 1a only in that the arrows from those risk factors into the treatment variables are removed. Latent variables appear in a substantial minority of applied graphs, roughly \csLatentShare{} of the applied health DAGs reviewed by \citet{tennant2021use}, so this is a real class of instance and not a curiosity. Causal graphs also support clinical analyses such as explaining the progression of mild cognitive impairment~\citep{behnam2024causal}, and any such graph can be checked for identifiability the same way. We transcribed the figure from the scanned original; two edges we could not settle were run as four variants, and every verdict and every frontier below is identical across them. Figure~\ref{fig:robins} shows our encoding.

\begin{figure}[h]
\centering
\begin{tikzpicture}[
  font=\scriptsize,
  v/.style={circle, draw, semithick, minimum size=0.56cm, inner sep=0pt, fill=white},
  lat/.style={v, dashed, fill=gray!15},
  d/.style={-{Stealth[length=1.6mm]}, semithick},
  rm/.style={d, red!70!black, thick},
  ttl/.style={font=\scriptsize\bfseries, text=gray!60!black},
  fbox/.style={rectangle, rounded corners=3pt, draw=green!45!black, fill=green!10,
               align=center, inner sep=3pt, text width=5.6cm}
]
\node[ttl] at (2.70,2.30) {(a) Panel 1a: unmeasured risk factors};
\node[v]   (aL0) at (0.00,0) {$L_0$};
\node[v]   (aA0) at (1.35,0) {$A_0$};
\node[v]   (aL1) at (2.70,0) {$L_1$};
\node[v]   (aA1) at (4.05,0) {$A_1$};
\node[v]   (aY)  at (5.40,0) {$Y$};
\node[lat] (aU0) at (0.90,1.55) {$U_0$};
\node[lat] (aU1) at (3.45,1.55) {$U_1$};
\draw[d]  (aU0) -- (aL0);  \draw[rm] (aU0) -- (aA0);  \draw[d] (aU0) -- (aL1);
\draw[d]  (aU0) -- (aU1);  \draw[rm] (aU0) -- (aA1);  \draw[d] (aU0) -- (aY);
\draw[d]  (aL0) -- (aA0);
\draw[d]  (aL0) to[bend right=35] (aL1);
\draw[d]  (aL0) to[bend right=40] (aA1);
\draw[d]  (aL0) to[bend right=45] (aY);
\draw[d]  (aA0) -- (aL1);  \draw[d] (aA0) to[bend right=35] (aA1);  \draw[d] (aA0) -- (aU1);
\draw[d]  (aU1) -- (aL1);  \draw[rm] (aU1) -- (aA1);  \draw[d] (aU1) -- (aY);
\draw[d]  (aL1) -- (aA1);  \draw[d] (aL1) to[bend right=35] (aY);
\draw[d]  (aA1) -- (aY);
\begin{scope}[xshift=7.4cm]
\node[ttl] at (2.70,2.30) {(b) Panel 1b: the authors' assumption};
\node[v]   (bL0) at (0.00,0) {$L_0$};
\node[v]   (bA0) at (1.35,0) {$A_0$};
\node[v]   (bL1) at (2.70,0) {$L_1$};
\node[v]   (bA1) at (4.05,0) {$A_1$};
\node[v]   (bY)  at (5.40,0) {$Y$};
\node[lat] (bU0) at (0.90,1.55) {$U_0$};
\node[lat] (bU1) at (3.45,1.55) {$U_1$};
\draw[d]  (bU0) -- (bL0);  \draw[d] (bU0) -- (bL1);
\draw[d]  (bU0) -- (bU1);  \draw[d] (bU0) -- (bY);
\draw[d]  (bL0) -- (bA0);
\draw[d]  (bL0) to[bend right=35] (bL1);
\draw[d]  (bL0) to[bend right=40] (bA1);
\draw[d]  (bL0) to[bend right=45] (bY);
\draw[d]  (bA0) -- (bL1);  \draw[d] (bA0) to[bend right=35] (bA1);  \draw[d] (bA0) -- (bU1);
\draw[d]  (bU1) -- (bL1);  \draw[d] (bU1) -- (bY);
\draw[d]  (bL1) -- (bA1);  \draw[d] (bL1) to[bend right=35] (bY);
\draw[d]  (bA1) -- (bY);
\end{scope}
\node[ttl] at (6.40,-1.85) {(c) Two minimal restoring sets for $P(y \mid \Do{a_1})$ in (a), each of size four};
\node[fbox] at (2.70,-2.60) {$\{A_0\!\leftrightarrow\!A_1,\ A_1\!\leftrightarrow\!L_0,\ A_1\!\leftrightarrow\!L_1,\ A_1\!\leftrightarrow\!Y\}$\\[1pt]\emph{the treatment $A_1$ is unconfounded}};
\node[fbox] at (10.10,-2.60) {$\{A_0\!\leftrightarrow\!Y,\ A_1\!\leftrightarrow\!Y,\ L_0\!\leftrightarrow\!Y,\ L_1\!\leftrightarrow\!Y\}$\\[1pt]\emph{the outcome $Y$ is unconfounded}};
\end{tikzpicture}
\vspace{-2mm}
\caption{Our encoding of Figure~1 of \citet{robins2000marginal}. $A_t$ is zidovudine treatment at time $t$, $L_t$ the measured risk factors, $Y$ the outcome, and $U_t$ the unmeasured risk factors (dashed, latent). \textbf{(a)} Projecting out $U_0$ and $U_1$ confounds every pair of observed variables, so the joint effect and both single effects are non-identifiable. \textbf{(b)} The authors' assumption removes the three red edges, from unmeasured risk factors into treatments; only $L_0$, $L_1$ and $Y$ stay confounded, and all three effects become identifiable. \textbf{(c)} For the effect of $A_1$ in (a), two minimal restoring sets have the same size and opposite meaning, so cardinality cannot choose between them. Two edges we could not read from the scan, $A_0 \to Y$ and $L_0 \to U_1$, change neither projection and are omitted.}
\label{fig:robins}
\end{figure}

{\bf The authors' claims check out.} Both implementations agree on all six queries: the joint effect and both single effects are non-identifiable in panel 1a and identifiable in panel 1b. On 1b the authors' g-formula matches exact SCM ground truth at \csGfromulaErr{}, while the naive estimand they warn against is wrong by \csNaiveErr{} on the same models. Their central methodological point is confirmed numerically on their own graph.

{\bf Their identifying assumption is sufficient but not minimal.} The 1a-to-1b move deletes \csAuthorDeletes{} bidirected edges, every one incident to a treatment. For the joint effect the frontier has \csFrontierJoint{} elements, and one is the authors' set with $A_0 \leftrightarrow A_1$ restored: an unmeasured common cause of the two treatment decisions is harmless once both are intervened on. Every route deletes the two treatment--outcome edges, which is the necessary core. For the single effect $\Do{a_1}$ the frontier has \csFrontierSingle{} elements, two of which have the same cardinality and opposite meaning: one makes the treatment $A_1$ unconfounded, the other the outcome $Y$ (Figure~\ref{fig:robins}c). Cardinality cannot separate them, and domain judgment must.

One semantic caveat belongs with this. Frontier elements are defined over the ADMG, and a single latent in the authors' DAG can induce several bi-directed edges; deleting one and keeping another therefore posits a different latent structure and does not erase one of their arrows. This is correct for $\Adm(\Gr,\Qy)$ as defined in Sec.~\ref{sec:repair-labels} and worth stating before a frontier element is read back as advice.
\section{Discussion}
\label{app:discussion}

\subsection{What the results do and do not say about causal competence}
\label{app:disc-competence}

Deciding identifiability on a random ADMG with neutral variable names draws on no memorized association between variables and no domain knowledge attached to their names, because there are none to draw on. The accuracy we observe on that family is therefore difficult to square with accounts of language-model causal ability as retrieval~\citep{chi2024unveiling, zecevic2023causal} or as knowledge extraction from metadata~\citep{kiciman2023causal}. Those accounts were established on an earlier model generation, and the most we claim is that they warrant re-examination against current models on tasks where the retrieval route is closed by construction.

The bound in the other direction is firmer. Identification given a graph is a formal task on which a sound and complete polynomial algorithm happens to exist. That is unusual. Competence here implies nothing about causal reasoning where no such procedure is available: discovering the graph, choosing what to measure, deciding whether an assumption is credible. The small open-weights models of Appendix~\ref{app:proto-small} were tested only on edge classification, so this paper says nothing about where the ability appears as models grow. A reader who takes from this paper that language models can do causal inference has taken more than the measurement supports.

\subsection{Using these results in practice}
\label{app:disc-practice}

The practical consequences run in two directions, and the more important one is not about evaluation at all. For anyone with a graph and a query, the results argue for calling \ID{}. The algorithm is polynomial, sound and complete, and freely implemented. It decides in milliseconds a question on which the best model we tested still asserts an answer to one in a hundred unanswerable queries. Where a language model helps is in the surrounding work: reading a graph out of a paper, proposing which confounders to consider, explaining why a query fails. With \ID{} as a tool, a model's repairs are valid and minimal at every size we tested, including sizes where our exhaustive enumeration is infeasible (Sec.~\ref{sec:res-more}). The configuration to avoid is the one our main results measure: a model deciding identifiability unaided and reporting a confident estimand.

For anyone evaluating such a system, three things follow. Report the false-claim rate separately from accuracy, which does not determine it (Corollary~\ref{cor:acc-fc}). On our instances the false-claim rate spans seventeen-fold while accuracy spans ten points, so a ranking by accuracy presents the models as close competitors. Measure on your own distribution of graphs, because degradation with graph size appeared in one model family and not another, so it cannot be inferred from published numbers. And re-run before believing a small gap. At temperature zero, both endpoints we re-sampled return non-unanimous verdicts on a few percent of instances, which exceeds the accuracy gaps between our systems on the main pool (Table~\ref{tab:verdict}).

When a query is not identifiable, the frontier names the assumption sets that would restore it, and both the published graph of Appendix~\ref{app:ex-published} and the worked example of Appendix~\ref{app:ex-label} show the same thing about how that output should be read. Frontier elements of equal size can carry incompatible meanings, so cardinality does not rank them and no graphical computation can. The frontier narrows the question to a short list of substantive assumptions; choosing among them is domain work and remains so.

\subsection{Applications of this work}
\label{app:disc-applications}

{\bf Language model research.} Two questions are open in language model research. The first is whether a model can tell that a question has no answer. The second is whether a model's stated reasoning reflects how the model reached its verdict. Work on model self-knowledge and abstention studies the first~\citep{kadavath2022language, kirichenko2026abstentionbench}, and the chain-of-thought faithfulness literature studies the second~\citep{turpin2023language, lanham2023measuring, chen2025reasoning}. Both lack certified ground truth. In abstention work, whether a question is answerable is fixed by human judgment, so a model that declines may be right while its label is wrong. In faithfulness work, the correct reasoning is unknown, so a stated justification can be checked for consistency but not for truth. \benchname{} removes that ambiguity for one well-defined domain. Unanswerability here is a theorem, instances can be regenerated after any training cutoff, and the false-claim rate isolates the error an abstention study most needs to measure. The warrant results of Sec.~\ref{sec:res-warrant} give faithfulness research a second tool: because the obstruction that blocks identification is known exactly, a model's stated justification can be checked against the true one.

{\bf Cognitive and behavioral modeling of language models.} A growing line of work studies language models as subjects of psychological experiment, using paradigms from cognitive science to characterize behavior beyond aggregate accuracy~\citep{binz2023using, hagendorff2023machine, coda2024cogbench}. Three levels can be distinguished in such work: what a model does, what the model reports about its reasoning, and the internal state that produces the behavior. Existing tasks typically fix the correct response but not the correct reasoning, so self-report can be compared with behavior but not with the truth. \benchname{} fixes both. The certified label is ground truth for the verdict, and the graphical obstruction is ground truth for the justification. Sec.~\ref{sec:res-warrant} reports a dissociation between hidden reasoning and self-report: models reason internally at length yet emit a bare verdict, and name the correct obstruction only when asked. The task also bears on whether a model forms an internal model of the structure it reasons about. Random graphs with neutral names offer no memorized association, so correct verdicts there are behavioral evidence that the graph itself is represented. Probing internal activations against certified labels could test that hypothesis directly.

{\bf Human--AI interaction and reliance.} In a conversation with a language model, a user sees only what the model emits. Reliance on that output is appropriate when the person accepts correct advice and rejects incorrect advice~\citep{schemmer2023appropriate}. The older distinction between misuse and disuse of automation points toward the same target~\citep{parasuraman1997humans}. Over-reliance, the acceptance of incorrect advice, is common and resists simple remedies such as explanations~\citep{buccinca2021trust}. A false claim on an unanswerable causal query is such a failure: high accuracy conceals the error, and no observational data can reveal one. Our results sharpen the problem for longitudinal use, in which the same person returns to a system over time and the human--AI relationship develops through repeated interaction. First, a correct verdict arrives by default with no reason, so the user has nothing to check against. Second, trust is ordinarily calibrated from observed outcomes, and an error that no observational data can expose produces none. Repeated interaction therefore cannot teach the user when to distrust the system on exactly these queries. Third, verdicts vary across repeated requests at temperature zero (Appendix~\ref{app:proto-repro}), so one question may receive different answers in successive sessions. Certified labels make each of these failures observable, so appropriate reliance can be measured where reliance matters most.

{\bf Training with verifiable rewards.} Reinforcement learning with verifiable rewards trains a language model against a deterministic check of its output, and the approach underlies recent reasoning models~\citep{lambert20243, shao2024deepseekmath, guo2025deepseek}. A deterministic check is harder to exploit than a learned reward model, which invites reward hacking~\citep{skalse2022defining}. The approach is confined, however, to domains where such a check exists, which in practice means mathematics and code. Identification, including the decision that a query has no answer, has had no such check, and \benchname{} supplies one. The certified label verifies the verdict, and the numerical evaluation verifies the estimand. Instances can be generated without limit and after any training cutoff, so a held-out set stays uncontaminated. Two features of the setting bear on reward design. First, a correct estimand admits many equivalent forms, so a reward based on answer matching would penalize correct outputs; numerical grading does not. Second, the two errors are not equally costly, and a reward can weight a claimed answer to an unanswerable query more heavily than a refusal of an answerable one. One caveat applies. Theorem~\ref{thm:grading} guarantees refutation only on the family of binary SCMs the verifier samples (Remark~\ref{rem:family}). A policy optimized against the verifier could therefore learn expressions that agree on that family and differ elsewhere. Widening the family and the compared intervention values narrows that gap.

\subsection{Limitations}
\label{app:disc-scope}

The scope limits are structural and not incidental. Every instance supplies the graph, so nothing here measures whether a model can recover structure from data or from text. That step is where most applied work spends its effort, and where errors are likeliest. Identifiability is settled before any data exist, so a correct verdict says nothing about estimation, finite-sample behavior, positivity, or whether the modeling assumptions hold. The labels certify what is recoverable in principle and are silent on whether the graph is right: a certified verdict on a wrong graph is a certified answer to the wrong question. Finally, the grading compares expressions numerically at a single intervention value over two draws (Sec.~\ref{sec:verifier}, Appendix~\ref{app:properties-grading}), so it certifies the slice compared and not the whole interventional distribution. A good score on \benchname{} is therefore evidence that a system decides a formal question correctly, and nothing more.

The empirical claims have narrower limits. We evaluate three frontier models from two vendors, each at its vendor's default reasoning depth, which differs between them; the two Gemini endpoints are floating aliases and cannot be re-run against identical weights. Estimands are graded only on graphs of at most 9 vertices, which leaves \estimandSkip{} of correct claims unchecked, and \modelGPT{}'s estimands are not graded. Every figure is a single sample, and repeated sampling at temperature zero returns non-unanimous verdicts on \repeatFlipRate{} of instances (Appendix~\ref{app:proto-repro}). Theorem~\ref{thm:grading} holds relative to binary SCMs (Remark~\ref{rem:family}).

\end{document}